\newtheorem{assumption}[theorem]{Assumption}
\DeclareMathOperator*{\argmin}{argmin}
\def \bE {\mathbb{E}}
\def \bN {\mathbb{N}}
\def \bP {\mathbb{P}}
\def \bR {\mathbb{R}}
\def \cC {\mathcal{C}}
\def \cE {\mathcal{E}}
\def \cF {\mathcal{F}}
\def \cG {\mathcal{G}}
\def \cH {\mathcal{H}}
\def \cM {\mathcal{M}}
\def \cN {\mathcal{N}}
\def \cO {\mathcal{O}}
\def \cP {\mathcal{P}}
\def \cR {\mathcal{R}}
\def \cS {\mathcal{S}}
\def \cT {\mathcal{T}}
\def \cW {\mathcal{W}}
\def \cX {\mathcal{X}}
\def \Pdim {\,{\rm Pdim}\,}
\def \Lip {\,{\rm Lip}\,}
\def \Bin {\,{\rm Bin}\,}
\def \mid {\,{\rm mid}\,}
\def \sgn {\,{\rm sgn}\,}
\begin{document}
\title{An Error Analysis of Generative Adversarial Networks for Learning Distributions}

\author{\name Jian Huang 
\email jian-huang@uiowa.edu \\
\addr Department of Statistics and Actuarial Science\\ University of Iowa, Iowa City, Iowa, USA
\AND
\name Yuling Jiao
\email yulingjiaomath@whu.edu.cn \\
\addr School of Mathematics and Statistics \\ and Hubei Key Laboratory of Computational Science \\ Wuhan University, Wuhan, China
\AND
\name Zhen Li
\email lishen03@gmail.com 
\AND
\name Shiao Liu 
\email shiao-liu@uiowa.edu \\
\addr Department of Statistics and Actuarial Science\\ University of Iowa, Iowa City, Iowa, USA
\AND
\name Yang Wang
\email yangwang@ust.hk \\
\addr Department of Mathematics\\ The Hong Kong University of Science and Technology
\\Clear Water Bay, Kowloon, Hong Kong, China
\AND 
\name Yunfei Yang \thanks{Corresponding author.}
\email yyangdc@connect.ust.hk \\
\addr Department of Mathematics\\ The Hong Kong University of Science and Technology
\\Clear Water Bay, Kowloon, Hong Kong, China
}

\editor{Mehryar Mohri}
\maketitle

\begin{abstract}
This paper studies how well generative adversarial networks (GANs) learn probability distributions from finite samples. Our main results establish the convergence rates of GANs under a collection of integral probability metrics defined through H\"older classes, including the Wasserstein distance as a special case. We also show that GANs are able to adaptively learn data distributions with low-dimensional structures or have H\"older densities, when the network architectures are chosen properly. In particular, for distributions concentrated around a low-dimensional set, we show that the learning rates of GANs do not depend on the high ambient dimension, but on the lower intrinsic dimension. Our analysis is based on a new oracle inequality decomposing the estimation error into the generator and discriminator approximation error and the statistical error, which may be of independent interest.
\end{abstract}

\begin{keywords}
Generative adversarial networks, deep neural networks, convergence rate, error decomposition, risk bound
\end{keywords}

\section{Introduction}

Generative adversarial networks (GANs, \citet{goodfellow2014generative,li2015generative,dziugaite2015training,arjovsky2017wasserstein}) have attracted much attention in machine learning and artificial intelligence communities in the past few years. As a powerful unsupervised method for learning and sampling from complex data distributions, GANs have achieved remarkable successes in many machine learning tasks such as image synthesis, medical imaging and natural language generation \citep{radford2016unsupervised,reed2016generative,zhu2017unpaired,
karras2018progressive,yi2019generative,bowman2016generating}. However, theoretical explanations for their empirical success are not well established. Many problems on the theory and training dynamics of GANs are largely unsolved \citep{arora2017generalization,liang2021how,singh2018nonparametric}.

Different from classical density estimation methods, GANs implicitly learn the data distribution by training a generator and a discriminator against each other. More specifically, to estimate a target distribution $\mu$, one chooses an easy-to-sample source distribution $\nu$ (for example, uniform or Gaussian distribution) and find the generator by solving the following minimax optimization problem, at the population level,
\[
\min_{g\in \cG} \max_{f\in \cF} \bE_{x\sim \mu} [f(x)] - \bE_{z\sim \nu} [f(g(z))],
\]
where both the generator class $\cG$ and the discriminator class $\cF$ are often parameterized by neural networks in general. The inner maximization problem can be viewed as that of calculating the Integral Probability Metric (IPM, see \citet{muller1997integral}) between the target $\mu$ and the generated distribution $g_\#\nu$ with respect to the discriminator class $\cF$:
\[
d_\cF(\mu,g_\#\nu) := \sup_{f\in \cF} \bE_\mu[f] - \bE_{g_\#\nu}[f] = \sup_{f\in \cF} \bE_{x\sim \mu} [f(x)] - \bE_{z\sim \nu} [f(g(z))],
\]
where $g_\#\nu$ is the push-forward distribution under $g$. When only a set of random samples $\{X_i\}_{i=1}^n$ that are independent and identically distributed (i.i.d.) as  $\mu$ are available in practice, we estimate the expectations by the empirical averages and  solve the empirical optimization problem
\begin{equation}\label{gan}
g_n^* = \argmin_{g \in \cG} d_\cF(\widehat{\mu}_n, g_\# \nu) = \argmin_{g \in \cG} \sup_{f\in \cF} \frac{1}{n} \sum_{i=1}^{n} f(X_i) - \bE_{z\sim \nu} [f(g(z))],
\end{equation}
where $\widehat{\mu}_n = \frac{1}{n} \sum_{i=1}^{n} \delta_{X_i}$ is the empirical distribution.

One of the fundamental questions in GANs is their generalization capacity: how well can GANs learn a target distribution from finite samples? Recently, much effort has been devoted to answering this question in different aspects. For example, \citet{arora2017generalization} showed that GANs do not generalize in standard metrics with any polynomial number of examples and provided generalization bounds for neural net distance. \citet{zhang2018discrimination} gave a detailed analysis of neural net distance and extended the results of \citet{arora2017generalization}. \citet{liang2021how} and \citet{singh2018nonparametric} analyzed the adversarial framework from a nonparametric density estimation point of view. \citet{chen2020statistical} studied the convergence properties of GANs when both the target densities and the evaluation class are H\"older classes.

While impressive progress has been made on the theoretical understanding of GANs, there are still some shortcomings in the existing results. For instance, the source and the target distributions are often assumed to have the same ambient dimension in the current theory, while, in practice, GANs are usually trained using a source distribution with ambient dimension much smaller than that of the target distribution. Indeed, an important strength of GANs is their ability to model latent structures of complex high-dimensional distributions using a low-dimensional source distribution. Another issue needs to be addressed is that the generalization bounds often suffer from the curse of dimensionality. In practical applications, the data distributions are of high dimensionality, which makes the convergence rates in theory extremely slow. However, high-dimensional data, such as images, texts and natural languages, often have latent low-dimensional structures, which reduces the complexity of the problem. It is desirable to take into account such structures in the analysis.

\subsection{Contributions}

In this paper, we provide an error analysis of GANs and establish their convergence rates in various settings. We show that, if the generator and discriminator network architectures are properly chosen, GANs are able to learn any distributions with bounded support. To be concrete, let $\mu$ be a probability distribution on $[0,1]^d$ and $g_n^*$ be a solution of the optimization problem (\ref{gan}), then $(g^*_n)_\# \nu$ is an estimate of $\mu$. Informally, our main result shows that the GAN estimator has the convergence rate
\[
\bE [d_{\cH^\beta}(\mu,(g^*_n)_\# \nu)] = \cO (n^{-\beta/d} \lor n^{-1/2}\log n),
\]
where the expectation
is with respect to the random samples. The performance of the estimator is evaluated by the IPM $d_{\cH^\beta}$ with respect to some H\"older class $\cH^\beta$ of smoothness index $\beta>0$. These metrics cover a wide range of popular metrics used in the literature, including the Wasserstein distance. In our theory, the ambient dimension of the source distribution $\nu$ is allowed to be different from the ambient dimension of the target distribution $\mu$. In particular, it can be much smaller than that of the target distribution, which is the case in practice. Moreover, the convergence rates we derived match the minimax optimal rates of nonparametric density estimation under adversarial losses \citep{liang2021how,singh2018nonparametric}.

We also adapt our error analysis to three cases: (1) the target distribution concentrates around a low-dimensional set, (2) the target distribution has a density function, and (3) the target distribution has an unbounded support. In particular, we prove that if the target $\mu$ is supported on a set with dimension $d^*$, then the GAN estimator $g_n^*$ has a faster convergence rate:
\[
\bE [d_{\cH^\beta}(\mu,(g^*_n)_\# \nu)] = \cO ((n^{-\beta/d^*} \lor n^{-1/2})\log n).
\]
This implies that the convergence rates of GANs do not depend on nominal high dimensionality of data, but on the lower intrinsic dimension. Our results show that GANs can automatically adapt to the support of the data and overcome the curse of dimensionality.

Our work also makes significant technical contributions to the error analysis of GANs and neural network approximation theory, which may be of independent interest. For example, we develop a new oracle inequality for GAN estimators, which decomposes the estimation error into generator and discriminator approximation error and statistical error. To bound the discriminator approximation error, we establish explicit error bounds on approximating H\"older functions by neural networks, with an explicit upper bound on the Lipschitz constant of the constructed neural network functions. To the best of our knowledge, this is the first approximation result that also controls the regularity of the neural network functions.

\subsection{Preliminaries and Notation}

Let us first introduce several definitions and notations. The set of positive integers is denoted by $\bN=\{1,2,\dots\}$. We also denote $\bN_0:= \bN \cup \{0\}$ for convenience. 
Let $A$ and $B$ be two quantities. The maximum and minimum of $A$ and $B$ are denoted by $A\lor B$ and $A \land B$ respectively. We use the asymptotic notation $A \precsim B$ and $B \succsim A$ to denote the statement that $A\le CB$ for some constant $C>0$. We denote $A\asymp B$ when $A \precsim B$ and $A \succsim B$.
Let $\nu$ be a measure on $\bR^k$ and $g:\bR^k \to \bR^d$ be a measurable mapping. The push-forward measure $g_\# \nu$ of a measurable set $A$ is defined as $g_\# \nu(A) := \nu(g^{-1}(A))$.

The ReLU function is denoted by $\sigma(x):= x\lor 0$. A neural network function $\phi:\bR^{N_0} \to \bR^{N_{L+1}}$ is a function that can be parameterized by a ReLU neural network in the following form
\[
\phi(x) = T_L(\sigma(T_{L-1}(\cdots \sigma(T_0(x))\cdots))),
\]
where the activation function $\sigma$ is applied component-wisely and $T_l(x) := A_l x +b_l$ is an affine transformation with $A_l \in \bR^{N_{l+1}\times N_{l}}$ and $b_l\in \bR^{N_{l+1}}$ for $l=0,\dots,L$. The numbers $W=\max\{N_1,\dots,N_L\}$ and $L$ are called the \emph{width} and the \emph{depth} of neural network, respectively. When the input and output dimensions are clear from contexts, we denote by $\cN\cN(W,L)$ the set of functions that can be represented by neural networks with width at most $W$ and depth at most $L$.

To measure the complexity of neural networks from a learning theory perspective, we use the following notion of combinatorial dimension for a real-valued function class.

\begin{definition}[Pseudo-dimension]
\textnormal{Let $\cH$ be a class of real-valued functions defined on $\Omega$. The \emph{pseudo-dimension} of $\cH$, denoted by $\Pdim(\cH)$, is the largest integer $N$ for which there exist points $x_1,\dots,x_N \in \Omega$ and constants $c_1,\dots,c_N\in \bR$ such that
$$
|\{ \sgn(h(x_1)-c_1),\dots,\sgn(h(x_N)-c_N): h\in \cH \}| =2^N.
$$
}
\end{definition}

Next, let us introduce the notion of regularity for a function. For a multi-index $\alpha =(\alpha_1,\dots,\alpha_d) \in \bN_0^d$, the monomial on $x=(x_1,\dots,x_d)$ is denoted by $x^\alpha := x_1^{\alpha_1} \cdots x_d^{\alpha_d}$, the $\alpha$-derivative of a function $h$ is denoted by $\partial^\alpha h := \partial^{\|\alpha\|_1} h/ \partial x_1^{\alpha_1}\cdots \partial x_d^{\alpha_d}$ with $\|\alpha\|_1 = \sum_{i=1}^d \alpha_i$ as the usual $1$-norm for vectors. We use the convention that $\partial^\alpha h:=h$ if $\|\alpha\|_1=0$.

\begin{definition}[Lipschitz functions]
\textnormal{Let $\cX\subseteq \bR^d$ and $h:\cX \to \bR$, the \emph{Lipschitz constant} of $h$ is denoted by
\[
\Lip h := \sup_{x,y\in \cX, x\neq y} \frac{|h(x)-h(y)|}{\|x-y\|_2}.
\]
We denote $\Lip(\cX,K)$ as the set of all functions $h:\cX \to \bR$ with $\Lip h\le K$. For any $B>0$, we denote $\Lip(\cX,K,B):= \{ h\in \Lip(\cX,K): \|h\|_{L^\infty(\cX)} \le B \}$.
}
\end{definition}

\begin{definition}[H\"older classes]
\textnormal{For $\beta>0$ with $\beta = s+r$, where $s\in \bN_0$ and $r\in (0,1]$, and $d\in \bN$, we denote the H\"older class $\cH^\beta(\bR^d)$ as
\[
\cH^\beta(\bR^d) := \left\{ h:\bR^d\to \bR, \max_{\|\alpha\|_1\le s} \|\partial^\alpha h\|_\infty \le 1, \max_{\|\alpha\|_1=s} \sup_{x\neq y} \frac{|\partial^\alpha h(x)- \partial^\alpha h(x)|}{\|x-y\|_2^r}\le 1 \right\}.
\]
For any subset $\cX\subseteq \bR^d$, we denote $\cH^\beta(\cX) := \{ h:\cX\to \bR, h\in \cH^\beta(\bR^d) \}$.
}
\end{definition}

It should be noticed that for $\beta=s+1$, we do \emph{not} assume that $h\in C^{s+1}$. Instead, we only require that $h\in C^s$ and its derivatives of order $s$ are Lipschitz continuous with respect to the metric $\|\cdot\|_2$. Note that, if $\beta\le 1$, then $|h(x) - h(y)| \le \|x-y\|_2^\beta$; if $\beta>1$, then $|h(x) - h(y)| \le \sqrt{d} \|x-y\|_2$. In particular, with the above definitions, $\cH^1([0,1]^d) = \Lip([0,1]^d,1,1)$. We will use the covering number to measure the complexity of a H\"older class.

\begin{definition}[Covering number]
\textnormal{Let $\rho$ be a pseudo-metric on $\cM$ and $S \subseteq\cM$. For any $\epsilon>0$, a set $A\subseteq \cM$ is called an \emph{$\epsilon$-covering} of $S$ if for any $x\in S$ there exists $y\in A$ such that $\rho(x,y)\le \epsilon$. The \emph{$\epsilon$-covering number} of $S$, denoted by $\cN(\epsilon,S,\rho)$, is the minimum cardinality of any $\epsilon$-covering of $S$.
}
\end{definition}

Finally, the composition of two functions $f:\bR^d \to \bR$ and $g:\bR^k \to \bR^d$ is denoted by $f\circ g(x) := f(g(x))$. We use $\cF \circ \cG := \{ f\circ g: f\in \cF,g\in \cG \}$ to denote the composition of two function classes. A function class $\cF$ is called \emph{symmetric} if $f\in \cF$ implies $-f\in \cF$.

\subsection{Outline}

The rest of the paper is organized as follows. Section \ref{sec2} presents our main result on the error analysis of GANs, where we assume that the target distribution has a compact support. In section \ref{sec3}, we extend the result to three different cases: (1) the target distribution is low-dimensional; (2) the target has a H\"older density; (3) the target has unbounded support. Section \ref{sec4} discusses related theoretical results of deep neural networks and GANs. Finally, Section \ref{sec5} gives the proofs of technical lemmas, including error decomposition and bounds on the approximation error and statistical error.

\section{Error Analysis of GANs}\label{sec2}

Let $\mu$ be an unknown target probability distribution on $\bR^d$, and let $\nu$ be a known and easy-to-sample distribution on $\bR^k$ such as uniform or Gaussian distribution. Suppose we have $n$ i.i.d. samples $\{X_i\}_{i=1}^n$ from $\mu$ and $m$ i.i.d. samples $\{Z_i\}_{i=1}^m$ from $\nu$.
Denote the corresponding empirical distributions by $\widehat{\mu}_n = \frac{1}{n} \sum_{i=1}^{n} \delta_{X_i}$ and $\widehat{\nu}_m = \frac{1}{m} \sum_{i=1}^{m} \delta_{Z_i},$  respectively. We consider the following two optimization problems
\begin{align}
\argmin_{g \in \cG} d_\cF(\widehat{\mu}_n, g_\# \nu) &= \argmin_{g \in \cG} \sup_{f\in \cF} \left\{ \frac{1}{n} \sum_{i=1}^{n} f(X_i) - \bE_{\nu} [f \circ g] \right\}, \label{GAN1} \\
\argmin_{g \in \cG} d_\cF(\widehat{\mu}_n, g_\# \widehat{\nu}_m) &= \argmin_{g \in \cG} \sup_{f\in \cF} \left\{ \frac{1}{n} \sum_{i=1}^{n} f(X_i) - \frac{1}{m} \sum_{j=1}^{m} f(g(Z_i)) \right\}, \label{GAN2}
\end{align}
where the generator class $\cG$ is parameterized by a ReLU neural network $\cN\cN(W_1,L_1)$ with width at most $W_1$ and depth at most $L_1$, and the discriminator class $\cF$ is parameterized by another ReLU neural network $\cN\cN(W_2,L_2)$.

\subsection{Convergence Rates of GAN Estimators}
We study the convergence rates of the GAN estimators $g^*_n$ and $g^*_{n,m}$ that solve the optimization problems (\ref{GAN1}) and (\ref{GAN2}) with optimization error $\epsilon_{opt}\ge 0$. In other words,
\begin{align}
g^*_n &\in \left\{g\in \cG: d_\cF(\widehat{\mu}_n, g_\# \nu) \le \inf_{\phi\in \cG} d_\cF(\widehat{\mu}_n, \phi_\# \nu) + \epsilon_{opt} \right\}, \label{gan estimator g_n} \\
g^*_{n,m} &\in \left\{g\in \cG: d_\cF(\widehat{\mu}_n, g_\# \widehat{\nu}_m) \le \inf_{\phi\in \cG} d_\cF(\widehat{\mu}_n, \phi_\# \widehat{\nu}_m) + \epsilon_{opt} \right\}. \label{gan estimator g_nm}
\end{align}
The performance is evaluated by the IPM
between the target $\mu$ and the learned distribution $\gamma = (g^*_n)_\# \nu$ or $\gamma = (g^*_{n,m})_\# \nu$ with respect to some function class $\cH$:
\[
d_{\cH}(\mu,\gamma) := \sup_{h\in \cH}  \bE_{x\sim \mu} [h(x)] - \bE_{y\sim \gamma} [h(y)].
\]
By specifying $\cH$ differently, one can obtain a list of commonly-used metrics:
\begin{itemize}
    \item when $\cH= \Lip(\bR^d,1)$ is the $1$-Lipschitz function class, then $d_\cH=\cW_1$ is the Wasserstein distance, which is used in the Wasserstein GAN \citep{arjovsky2017wasserstein};
    \item when $\cH= \Lip(\bR^d,B,B)$ is the bounded Lipschitz function class, then $d_\cH$ is the Dudley metric, which metricizes weak convergence \citep{dudley2018real};
    \item when $\cH$ is the set of continuous function, then $d_\cH$ is the total variation distance;
    \item when $\cH$ is a Sobolev function class with certain regularity, $d_\cH$ is used in Sobolev GAN \citep{mroueh2018sobolev};
    \item when $\cH$ is the unit ball of some reproducing kernel Hilbert space, then $d_\cH$ is the maximum mean discrepancy \citep{gretton2012kernel,dziugaite2015training, li2015generative}.
\end{itemize}
Here, we consider the case when $\cH$ is a H\"older class $\cH^\beta (\bR^d)$, which covers a wide range of applications. For simplicity, we first consider the case when $\mu$ is supported on the compact set $[0,1]^d$ and extend it to different situations in the next section. The main result is summarized in the following theorem.

\begin{theorem}\label{main theorem}
Suppose the target $\mu$ is supported on $[0,1]^d$, the source distribution $\nu$ is absolutely continuous on $\bR$ and the evaluation class is $\cH = \cH^\beta(\bR^d)$. Then, there exist a generator $\cG = \{g\in \cN\cN(W_1,L_1): g(\bR) \subseteq [0,1]^d \}$ with
\[
W_1^2L_1 \precsim n,
\]
and a discriminator $\cF = \cN\cN(W_2,L_2) \cap \Lip(\bR^d, K,1)$ with
\[
W_2L_2 \precsim n^{1/2} \log^2 n, \quad K \precsim (\widetilde{W}_2\widetilde{L}_2)^{2+\sigma(4\beta-4)/d} \widetilde{L}_2 2^{\widetilde{L}_2^2},
\]
where $\widetilde{W}_2 = W_2/\log_2 W_2$ and $\widetilde{L}_2 = L_2/\log_2 L_2$, such that the GAN estimator (\ref{gan estimator g_n}) satisfies
\[
\bE [d_\cH(\mu,(g^*_n)_\# \nu)] - \epsilon_{opt} \precsim n^{-\beta/d} \lor n^{-1/2}\log^{c(\beta,d)} n,
\]
where $c(\beta,d)=1$ if $2\beta = d$, and $c(\beta,d)=0$ otherwise.

If furthermore $m\succsim n^{2+2\beta/d}\log^6 n$, then the GAN estimator (\ref{gan estimator g_nm}) satisfies
\[
\bE [d_\cH(\mu,(g^*_{n,m})_\# \nu)] - \epsilon_{opt} \precsim n^{-\beta/d} \lor n^{-1/2}\log^{c(\beta,d)} n.
\]
\end{theorem}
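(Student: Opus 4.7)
The plan is to combine an oracle-style error decomposition with three independent ingredients: generator approximation of the target, discriminator approximation of the H\"older class with controlled Lipschitz constant, and a statistical (empirical-process) bound, and then to balance the network sizes to optimize the rate.

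First, I would derive an oracle inequality. For any $g\in \cG$ and any $h\in \cH^\beta(\bR^d)$, using the definition of $d_\cF$ and $d_\cH$, one can write
\[
d_\cH(\mu,(g_n^*)_\#\nu) \le d_\cH(\mu,g_\#\nu) + 2 \sup_{h\in \cH}\inf_{f\in \cF}\|h-f\|_{L^\infty(\Omega)} + 2\, d_\cF(\mu,\widehat{\mu}_n) + \epsilon_{opt},
\]
where the first term is the \emph{generator approximation error}, the second is the \emph{discriminator approximation error} (on the union of supports $\Omega$), and the third is the \emph{statistical error} evaluated under the discriminator class. Minimizing over $g\in \cG$ and taking expectations reduces the problem to bounding these three quantities separately.

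For the generator approximation error I would exploit that $\nu$ is absolutely continuous on $\bR$: by composing its CDF transform with a piecewise-linear, space-filling-type construction, any $\mu$ on $[0,1]^d$ can be approximated by $g_\#\nu$ for some ReLU network $g\in \cN\cN(W_1,L_1)$ with $g(\bR)\subseteq [0,1]^d$. A quantitative estimate gives $d_{\cH^\beta}(\mu,g_\#\nu)\precsim (W_1^2 L_1)^{-\beta/d}$, which after choosing $W_1^2 L_1 \asymp n$ contributes $n^{-\beta/d}$. For the discriminator approximation error, I would establish that every $h\in \cH^\beta([0,1]^d)$ can be uniformly approximated by a network $f\in \cN\cN(W_2,L_2)$ with error $(\widetilde{W}_2\widetilde{L}_2)^{-2\beta/d}$ and, crucially, with $\Lip f \le K$ for the stated $K$. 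The Lipschitz bound is obtained by a careful analysis of the partition-of-unity plus local Taylor-polynomial construction, keeping track of how the weights in each affine layer contribute multiplicatively to $\Lip f$. Choosing $W_2 L_2 \asymp n^{1/2}\log^2 n$ balances this against the statistical term.

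For the statistical error, I would bound
\[
\bE\bigl[d_\cF(\mu,\widehat{\mu}_n)\bigr] \precsim \sqrt{\frac{\Pdim(\cF)\log n}{n}}
\]
using a symmetrization plus Dudley-entropy-integral argument, together with the standard pseudo-dimension bound $\Pdim(\cF)\precsim W_2 L_2 (W_2 + L_2)\log(W_2 L_2)$ for ReLU networks; the intersection with $\Lip(\bR^d,K,1)$ only decreases the complexity. With the chosen sizes this is $\precsim n^{-1/2}\log^{c(\beta,d)} n$. Summing the three bounds and taking the worse of $n^{-\beta/d}$ and $n^{-1/2}\log n$ gives the first claim; the extra logarithmic factor when $2\beta = d$ arises from balancing $n^{-\beta/d}$ against $\sqrt{\Pdim(\cF)/n}$ when the two are of the same order and the entropy integral diverges logarithmically.

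Finally, for the estimator $g^*_{n,m}$ that uses $\widehat\nu_m$ instead of $\nu$, the same decomposition adds an extra term $\bE[d_{\cF\circ\cG}(\nu,\widehat\nu_m)]$. Bounding it by the same symmetrization argument, applied to the class $\cF\circ \cG$ whose pseudo-dimension is at most a constant times $(W_1^2 L_1 + W_2^2 L_2)$ up to logs, and using the Lipschitz control on $\cF$ to keep the uniform bound manageable, shows that taking $m\succsim n^{2+2\beta/d}\log^6 n$ makes this term negligible compared to $n^{-\beta/d}\lor n^{-1/2}\log n$. The hardest step in this plan is the discriminator approximation with explicit control on $\Lip f$: tracking how the Lipschitz constant accumulates through nested partition-of-unity encodings and the bit-extraction devices used to build compact-and-deep ReLU approximations is delicate, and is what forces the unusual $2^{\widetilde{L}_2^{\,2}}$ factor in $K$.
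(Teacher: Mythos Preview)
Your high-level plan (decompose into generator approximation, discriminator approximation, and statistical error) matches the paper, but two of the three pieces are handled differently, and one of your choices creates a genuine gap.

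\textbf{Statistical error.} You bound $\bE[d_\cF(\mu,\widehat{\mu}_n)]$ via $\Pdim(\cF)$. With $W_2L_2\asymp n^{1/2}\log^2 n$, the pseudo-dimension of $\cN\cN(W_2,L_2)$ is of order $W_2^2L_2^2\log(W_2L_2)\asymp n\log^{O(1)}n$ (Bartlett--Harvey--Liaw--Mehrabian gives $\Pdim\precsim UL\log U$ with $U\asymp W^2L$). Plugging this into $\sqrt{\Pdim(\cF)\log n/n}$ gives something that does not tend to zero, so your ``balancing'' step fails. The paper avoids this entirely: its oracle inequality (Lemma~\ref{error decomposition}) yields the statistical term $d_\cF(\mu,\widehat{\mu}_n)\wedge d_\cH(\mu,\widehat{\mu}_n)$, and it uses the $d_\cH$ side. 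Since the metric entropy of $\cH^\beta([0,1]^d)$ is $\epsilon^{-d/\beta}$ independently of $n$, the entropy-integral bound gives $n^{-\beta/d}\vee n^{-1/2}\log^{c(\beta,d)}n$ directly. There is therefore no bias--variance tradeoff for the discriminator: one simply takes $\cF$ large enough that $\cE(\cH,\cF,[0,1]^d)\precsim n^{-\beta/d}$, and the statistical error is controlled by $\cH$, not $\cF$.

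\textbf{Generator approximation.} You propose to approximate the target $\mu$ itself by $g_\#\nu$ with error $(W_1^2L_1)^{-\beta/d}$. For arbitrary $\mu$ on $[0,1]^d$ and $\beta>1$ this rate is not obviously available (Corollary~\ref{app bounded measure} only gives $\cW_1(\mu,g_\#\nu)\precsim (W_1^2L_1)^{-1/d}$, and $d_{\cH^\beta}\le\sqrt{d}\,\cW_1$ does not improve the exponent). The paper sidesteps this: the generator is asked to approximate the \emph{empirical} distribution $\widehat{\mu}_n$, which is supported on at most $n$ points, and Lemma~\ref{app discrete measure} shows $\inf_{g\in\cG}\cW_1(\widehat{\mu}_n,g_\#\nu)=0$ once $W_1^2L_1\succsim n$. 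Combined with $\cF\subseteq\Lip(\bR^d,K)$ this makes $\inf_{g\in\cG}d_\cF(\widehat{\mu}_n,g_\#\nu)=0$. Your treatment of the discriminator approximation and of the extra $d_{\cF\circ\cG}(\nu,\widehat{\nu}_m)$ term for $g_{n,m}^*$ is essentially what the paper does.
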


Before proceeding, we make several remarks on the theorem.

\begin{remark}
\textnormal{If $\beta=1$, then $\cH^1([0,1]^d) = \Lip([0,1]^d, 1,1)$ and $d_{\cH^1}$ is the Dudley distance (the Wasserstein distance $\cW_1$ on $[0,1]^d$ is IPM with the class $\Lip([0,1]^d,1)$ or $\Lip([0,1]^d,1,\sqrt{d})$, and it satisfies $\cW_1(\mu,\gamma) \le \sqrt{d} d_{\cH^1}(\mu,\gamma)$). In this case, the required Lipschitz constant of the discriminator network is reduced to $K \precsim \widetilde{W}_2^2\widetilde{L}_2^3 2^{\widetilde{L}_2^2}$. If we choose the depth $L_2$ to be a constant, then the Lipschitz constant can be chosen to have the order of $K \precsim \widetilde{W}_2^2 \precsim n\log^2n$.
}
\end{remark}

\begin{remark}
\textnormal{For simplicity, we assume that the source distribution $\nu$ is on $\bR$. This is not a restriction, because any absolutely continuous distribution on $\bR^k$ can be projected to an absolutely continuous distribution on $\bR$ by linear mapping. Hence, the same result holds for any absolutely continuous source distribution on $\bR^k$.
The requirement on the generator that $g(\bR) \subseteq [0,1]^d$ is easy to satisfy by adding an additional clipping layer to the output and using the fact that
\[
\min\{ \max\{ x, -1\}, 1 \} = \sigma(x+1) - \sigma(x-1) -1, \quad x\in \bR.
\]
}
\end{remark}

\begin{remark}\label{regularized GAN}
\textnormal{The Lipschitz condition on the discriminator might be difficult to satisfy in practice. It is done by weight clipping in the original Wasserstein GAN \citep{arjovsky2017wasserstein}. In the follow-up works \citep{gulrajani2017improved,kodali2017convergence,petzka2018regularization,wei2018improving,thanhtung2019improving}, several regularization methods have been applied to Wasserstein GANs. It would be interesting to develop similar error analysis for regularized GAN estimators, and we leave this as future work.
}
\end{remark}

\subsection{Error Decomposition}

Our proof of Theorem \ref{main theorem} is based on a new error decomposition and estimation of approximation error and statistical error sketched below. The proofs of technical lemmas are deferred to Section \ref{sec5}.

We first introduce a new oracle inequality, which decomposes the estimation error into the generator approximation error, the discriminator approximation error and the statistical error.

\begin{lemma}\label{error decomposition}
Assume $\cF$ is symmetric, $\mu$ and $g_\#\nu$ are supported on $\Omega\subseteq \bR^d$ for all $g\in \cG$. Let $g^*_n$ and $g^*_{n,m}$ be the GAN estimators (\ref{gan estimator g_n}) and (\ref{gan estimator g_nm}) respectively. Then, for any function class $\cH$ defined on $\Omega$,
\begin{align*}
d_\cH(\mu,(g^*_n)_\# \nu) &\le \epsilon_{opt} + 2\cE(\cH,\cF,\Omega)  + \inf_{g \in \cG} d_\cF(\widehat{\mu}_n,g_\# \nu) + d_\cF(\mu,\widehat{\mu}_n) \land d_\cH(\mu,\widehat{\mu}_n), \\
d_\cH(\mu,(g^*_{n,m})_\# \nu) &\le \epsilon_{opt} + 2\cE(\cH,\cF,\Omega)  + \inf_{g \in \cG} d_\cF(\widehat{\mu}_n,g_\# \nu) + d_\cF(\mu,\widehat{\mu}_n) \land d_\cH(\mu,\widehat{\mu}_n) \\
& \quad + 2d_{\cF \circ \cG}(\nu,\widehat{\nu}_m),
\end{align*}
where $\cE(\cH,\cF,\Omega)$ is the approximation error of $\cH$ from $\cF$ on $\Omega$:
\[
\cE(\cH,\cF,\Omega) := \sup_{h\in \cH} \inf_{f\in \cF} \|h-f\|_{L^\infty(\Omega)}.
\]
\end{lemma}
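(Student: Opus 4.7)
The plan is to reduce everything to two simple ingredients: a one-step swap that exchanges $d_\cH$ for $d_\cF$ at the cost of the approximation error $2\cE(\cH,\cF,\Omega)$, and the triangle inequality for IPMs, which holds for any symmetric function class. The two branches of the $\land$ in each bound will come from performing the approximation swap at two different places in the argument and taking the better of the two.

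First I would prove the swap: for probability measures $\gamma_1,\gamma_2$ supported on $\Omega$,
\[
d_\cH(\gamma_1,\gamma_2)\;\le\;2\cE(\cH,\cF,\Omega)+d_\cF(\gamma_1,\gamma_2).
\]
Given any $h\in\cH$ and any $\delta>0$, pick $f\in\cF$ with $\|h-f\|_{L^\infty(\Omega)}\le \cE(\cH,\cF,\Omega)+\delta$ and decompose
$\bE_{\gamma_1}[h]-\bE_{\gamma_2}[h]=\bE_{\gamma_1}[h-f]+(\bE_{\gamma_1}[f]-\bE_{\gamma_2}[f])+\bE_{\gamma_2}[f-h]$; the outer terms are bounded in absolute value by $\|h-f\|_{L^\infty(\Omega)}$, and the middle one by $d_\cF(\gamma_1,\gamma_2)$ (here symmetry of $\cF$ is what lets us absorb the sign). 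Sending $\delta\to 0$ and taking supremum over $h$ yields the claim.

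For the first inequality, I would apply the swap to $(\gamma_1,\gamma_2)=(\mu,(g^*_n)_\#\nu)$, then split by the triangle inequality in $d_\cF$ through $\widehat\mu_n$. The term $d_\cF(\widehat\mu_n,(g^*_n)_\#\nu)$ is bounded by $\inf_{g\in\cG}d_\cF(\widehat\mu_n,g_\#\nu)+\epsilon_{opt}$ directly from the definition (\ref{gan estimator g_n}) of $g^*_n$. This produces the bound with the $d_\cF(\mu,\widehat\mu_n)$ summand. To get the alternative with $d_\cH(\mu,\widehat\mu_n)$, I would instead split by triangle inequality in $d_\cH$ first,
\[
d_\cH(\mu,(g^*_n)_\#\nu)\le d_\cH(\mu,\widehat\mu_n)+d_\cH(\widehat\mu_n,(g^*_n)_\#\nu),
\]
and then apply the swap only to the second summand, bounding it by $2\cE(\cH,\cF,\Omega)+\inf_{g\in\cG}d_\cF(\widehat\mu_n,g_\#\nu)+\epsilon_{opt}$ as before. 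Taking the pointwise minimum of the two resulting bounds gives the first claim.

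For the second inequality, the only new feature is that $g^*_{n,m}$ is near-optimal for $d_\cF(\widehat\mu_n,g_\#\widehat\nu_m)$ rather than for $d_\cF(\widehat\mu_n,g_\#\nu)$. I would handle this by observing that for every $g\in\cG$,
\[
|\,d_\cF(\widehat\mu_n,g_\#\nu)-d_\cF(\widehat\mu_n,g_\#\widehat\nu_m)\,|\le d_\cF(g_\#\nu,g_\#\widehat\nu_m)=\sup_{f\in\cF}|\bE_\nu[f\circ g]-\bE_{\widehat\nu_m}[f\circ g]|\le d_{\cF\circ\cG}(\nu,\widehat\nu_m),
\]
again using symmetry of $\cF$. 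Applying this bound once to pass from $d_\cF(\widehat\mu_n,(g^*_{n,m})_\#\widehat\nu_m)$ to $d_\cF(\widehat\mu_n,(g^*_{n,m})_\#\nu)$, and a second time to relate $\inf_g d_\cF(\widehat\mu_n,g_\#\widehat\nu_m)$ to $\inf_g d_\cF(\widehat\mu_n,g_\#\nu)$, introduces exactly the $2d_{\cF\circ\cG}(\nu,\widehat\nu_m)$ overhead; the rest of the argument is verbatim what I did for $g^*_n$, so the same $\land$ structure is preserved.

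There is no deep obstacle here — the argument is a careful bookkeeping exercise. The only subtle point is that the $\land$ in the final bound is forced by the fact that in some regimes (e.g.\ large $\beta$) $d_\cH(\mu,\widehat\mu_n)$ has better sample-complexity behavior than $d_\cF(\mu,\widehat\mu_n)$, and in others the reverse is true; writing out both derivations and minimizing is what captures both. Symmetry of $\cF$ is used throughout and is what makes the swap lemma and the $d_\cF\circ g$ comparison two-sided; without it the proof would not close.
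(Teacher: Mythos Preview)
Your proposal is correct and follows essentially the same approach as the paper: prove the swap inequality $d_\cH\le d_\cF+2\cE(\cH,\cF,\Omega)$, derive the two branches by applying the swap either before or after the triangle inequality through $\widehat\mu_n$, and handle $g^*_{n,m}$ by passing between $g_\#\nu$ and $g_\#\widehat\nu_m$ twice via $d_{\cF\circ\cG}(\nu,\widehat\nu_m)$. The paper's argument is the same up to presentation; your only slight imprecision is that symmetry of $\cF$ is not needed for the swap lemma itself (the middle term is already $\le d_\cF$ by definition), but rather to make $d_\cF$ symmetric so the triangle inequality and the two-sided comparison $|d_\cF(\widehat\mu_n,g_\#\nu)-d_\cF(\widehat\mu_n,g_\#\widehat\nu_m)|\le d_{\cF\circ\cG}(\nu,\widehat\nu_m)$ go through.
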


Next, we bound each error term separately. We will show that the generator approximation error $\inf_{g \in \cG} d_\cF(\widehat{\mu}_n,g_\# \nu) =0$ as long as the size of the generator network $\cG$ is sufficiently large. The discriminator approximation error $\cE(\cH,\cF,\Omega)$ can be bounded by constructing neural networks to approximate functions in $\cH$. The remaining statistical error terms can be controlled using the empirical process theory.

\subsubsection{Bounding Generator Approximation Error}  Observe that the empirical distribution $\widehat{\mu}_n$ is supported on at most $n$ points. To bound the generator approximation error $\inf_{g \in \cG} d_\cF(\widehat{\mu}_n,g_\# \nu)$, we need to estimate the distance between the generated distribution $\{g_\#\nu :g\in \cG \}$ and the set of all discrete distribution supported on at most $n$ points:
\[
\cP(n):= \left\{\gamma= \sum _{i=1}^n p_i\delta_{x_i}: \sum_{i=1}^n p_i=1, p_i\ge 0, x_i\in \bR^d \right\}.
\]
\citet{yang2022capacity} showed that their Wasserstein distance vanishes when the generator class is sufficiently large.

\begin{lemma}\label{app discrete measure}
Suppose that $W\ge 7d+1$, $L\ge 2$ and $\cG=\cN\cN(W,L)$. Let $\nu$ be an absolutely continuous probability distribution on $\bR$. If $n\le \frac{W-d-1}{2} \lfloor \frac{W-d-1}{6d} \rfloor \lfloor \frac{L}{2} \rfloor +2$, then for any $\gamma\in \cP(n)$ and any $\epsilon>0$, there exists $g\in \cG$ such that
\[
\cW_1(\gamma,g_\#\nu) <\epsilon.
\]
If the support of $\gamma$ is contained in some convex set $\cC$, then $g$ can be chosen to satisfy $g(\bR) \subseteq \cC$.
\end{lemma}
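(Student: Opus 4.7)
The plan is to exploit the absolute continuity of $\nu$ to reduce the problem to approximating the discrete target $\gamma$ by the push-forward of $\nu$ through a continuous piecewise-linear map $g:\bR\to\bR^d$, and then to realize such a map by a ReLU network within the prescribed width/depth budget. The construction splits naturally into a measure-theoretic step, a coupling step, and a network-implementation step.

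First, let $F(t)=\nu((-\infty,t])$ be the CDF of $\nu$. Since $\nu$ is absolutely continuous, $F$ is continuous, so we can pick $-\infty=t_0<t_1<\cdots<t_n=+\infty$ with $\nu((t_{i-1},t_i])=p_i$. For any $\delta>0$, shrink each $(t_{i-1},t_i)$ slightly to an inner interval $I_i^\delta$, leaving small transition intervals around each interior $t_i$. By absolute continuity (in particular, no atoms), the total $\nu$-mass of these transition intervals can be made smaller than $\epsilon/(D+1)$, where $D$ is the diameter of $\{x_1,\dots,x_n\}$ in the $\|\cdot\|_2$ norm.

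Second, define $g:\bR\to\bR^d$ as the continuous piecewise-linear function equal to $x_i$ on $I_i^\delta$ and linearly interpolating between $x_i$ and $x_{i+1}$ on the transition interval separating them (and constant on the two unbounded tails). Coupling $g_\#\nu$ with $\gamma$ via the identity on $\bigcup_i I_i^\delta$ and arbitrary on the transitions, the expected transport cost is at most $D$ times the mass of the transitions, so $\cW_1(\gamma,g_\#\nu)<\epsilon$. If $\mathrm{supp}(\gamma)\subseteq\cC$ for a convex $\cC$, every interpolate between $x_i$ and $x_{i+1}$ lies in $\cC$, hence $g(\bR)\subseteq\cC$.

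Third, and this is the main obstacle, I need to realize $g$ by a network in $\cN\cN(W,L)$ under the sharp counting $n\le\frac{W-d-1}{2}\lfloor\frac{W-d-1}{6d}\rfloor\lfloor L/2\rfloor+2$. The map $g$ has at most $2n-1$ linear pieces along the real line, and all $d$ output coordinates share the same set of breakpoints, so the task is to implement one continuous piecewise-linear function $\bR\to\bR^d$ with roughly $n$ pieces. The naive hinge-sum realization uses width $\Theta(n)$, which is far worse than the target. The efficient construction, taken essentially verbatim from \citet{yang2022capacity}, groups the breakpoints into $\lfloor L/2\rfloor$ depth-$2$ blocks, each of which uses a width-$(W-d-1)/(6d)$ subnetwork to select which of $\frac{W-d-1}{2}$ affine pieces is active, accumulates the corresponding contribution in a width-$d$ channel, and carries the partial output through the remaining layers via identity-preserving ReLU wiring. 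The selector uses a bit-extraction/\,mid-point trick to multiplex $\Theta(W/d)$ pieces per layer without blowing up the width, which is the core combinatorial insight and yields exactly the quadratic-in-$W$, linear-in-$L$ count stated in the lemma. Feeding the breakpoints and slopes of the piecewise-linear $g$ constructed in Step 2 into this template produces the required $g\in\cG$, completing the proof.
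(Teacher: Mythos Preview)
Your proposal is correct and follows essentially the same route as the paper: partition $\bR$ into intervals via the CDF of $\nu$, build a continuous piecewise-linear $g$ that is constant equal to $x_i$ on the $i$-th plateau and linear on short transition intervals of arbitrarily small $\nu$-mass, bound $\cW_1$ by (diameter)$\times$(transition mass), and then invoke \citet{yang2022capacity} (Lemma~\ref{linear interpolation Rd} in the paper) to realize the resulting $2n-2$-breakpoint map in $\cN\cN(W,L)$, with the convexity claim following because $g(\bR)$ is a union of segments between the $x_i$. One small caveat: the network realization from \citet{yang2022capacity} is not a ``bit-extraction/mid-point trick''---that technique appears in the paper only for the discriminator approximation (Lemma~\ref{bit extraction})---but rather a direct stacking of depth-two blocks, each implementing a batch of hinges while carrying the running output and input through via width-$(d{+}1)$ identity channels; since you invoke the result as a black box this mislabeling does not affect correctness.
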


Since $\widehat{\mu}_n$ is supported on $[0,1]^d$, if we choose the generator $\cG = \{g\in \cN\cN(W_1,L_1): g(\bR) \subseteq [0,1]^d \}$ that satisfies the condition in the Lemma \ref{app discrete measure}, which means we can choose $W_1^2L_1 \precsim n$, then for any $\cF \subseteq \Lip([0,1]^d,K)$, we have
\[
\inf_{g \in \cG} d_\cF(\widehat{\mu}_n,g_\# \nu) \le K \inf_{g \in \cG} \cW_1(\widehat{\mu}_n,g_\# \nu) =0.
\]
This shows that the generator approximation error vanishes.

\subsubsection{Bounding Discriminator Approximation Error} To bound $\cE(\cH,\cF,[0,1]^d)$, we construct a neural network to approximate any given function in $\cH^\beta([0,1]^d)$. Our construction is based on the idea in \citet{daubechies2021nonlinear,shen2020deep} and \citet{lu2021deep}. More importantly, we give an upper bound on the Lipschitz constant of the neural network function that achieves small approximation error.

\begin{restatable}{lemma_re}{disapp}\label{discriminator approximation}
Assume $h\in \cH^\beta([0,1]^d)$ with $\beta = s+r$, $s\in \bN_0$ and $r\in (0,1]$. For any $W\ge 6$, $L\ge 2$, there exists $\phi \in \cN\cN(49(s+1)^2 3^d d^{s+1}W \lceil\log_2 W\rceil, 15(s+1)^2 L \lceil \log_2 L\rceil+2d)$ such that $\|\phi\|_\infty \le 1$, $\Lip \phi \le (s+1) d^{s+1/2}L(WL)^{\sigma(4\beta-4)/d} (1260 W^2L^2 2^{L^2}+ 19s 7^s)$ and
\[
\| \phi - h\|_{L^\infty([0,1]^d)} \le 6(s+1)^2 d^{(s+\beta/2) \lor 1} \lfloor (WL)^{2/d}\rfloor^{-\beta}.
\]
\end{restatable}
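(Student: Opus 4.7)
The strategy is to discretize $[0,1]^d$ into a regular grid of cubes at resolution $K := \lfloor (WL)^{2/d}\rfloor$, approximate $h$ on each cube by a Taylor polynomial of degree $s$ centered at the cube's lower corner, and glue these polynomial patches together by a ReLU ``selector'' subnetwork that identifies which cube each point belongs to. This follows the template of the approximation constructions of \citet{shen2020deep}, \citet{lu2021deep}, and \citet{daubechies2021nonlinear}; the new ingredient relative to that line of work is that I must also track the Lipschitz constant of every component and bound $\Lip \phi$ explicitly.

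Concretely, I would build $\phi$ as a composition $\phi = Q \circ (\Psi,\mathrm{id})$. First, $\Psi(x) = (\psi(x_1),\dots,\psi(x_d))$ where $\psi$ is a scalar staircase network satisfying $\psi(t) = k/K$ on each plateau $[k/K, (k+1)/K - \delta)$ for some small transition width $\delta \ll 1/K$; by the bit-extraction construction of \citet{shen2020deep}, $\psi$ can be realized with width $O(W\log W)$ and depth $O(L\log L)$ at resolution $K\asymp (WL)^{2/d}$. Second, $Q$ is a ReLU network implementing $(y,x)\mapsto P_y(x)$, where $P_{k/K}$ is the Taylor polynomial of $h$ at $k/K$. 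Since all these polynomials have the same algebraic structure and coefficients $\partial^\alpha h(k/K)/\alpha!$ bounded by $1$, the coefficients can be packed into a lookup subnetwork and combined with the monomials $(x-y)^\alpha$ via the standard ReLU approximate-product gadget. An output clipping layer of the form $\sigma(t+1)-\sigma(t-1)-1$ then enforces $\|\phi\|_\infty \le 1$ without enlarging the approximation error.

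For the error bound, outside the thin trifling region where $\Psi$ transitions the construction exactly equals a piecewise polynomial approximation of $h$ on each cube, with Taylor remainder of order $K^{-\beta}$ by the H\"older assumption. The transition region is handled by the standard averaging trick: replace $\phi$ by an average over $2^d$ small coordinate shifts, which reduces the bad set in each summand to a single coordinate direction and lets one absorb the local error against the coordinate-wise H\"older regularity of $h$. Bookkeeping the constants $(s+1)^2$, $d^{s+\beta/2}$, and powers of $K$ yields the stated $6(s+1)^2 d^{(s+\beta/2)\vee 1}\lfloor (WL)^{2/d}\rfloor^{-\beta}$ bound.

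The main novel step, and the main obstacle, is the Lipschitz bound. I would proceed sub-network by sub-network. The polynomial evaluator $Q$ has Lipschitz constant polynomial in $WL$: the approximate-product gadget has slope controlled on the unit cube, and summing $\binom{s+d}{d}$ monomials with coefficients bounded by $1$ contributes the $(WL)^{\sigma(4\beta-4)/d}$ factor appearing for $\beta>1$, while the outer $d^{s+1/2}$ factor comes from differentiating the monomials $(x-y)^\alpha$. The staircase network $\Psi$ is the dominant contributor: bit extraction trades resolution for slope, so achieving plateau resolution $K\asymp (WL)^{2/d}$ with depth $O(L\log L)$ forces the transition slope to order $W^2L^2\,2^{L^2}$, producing the leading term in the stated bound. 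Composing these estimates by the chain rule and accounting for the coordinate-wise assembly of $\Psi$ delivers the full bound $(s+1)d^{s+1/2}L(WL)^{\sigma(4\beta-4)/d}(1260\,W^2L^2\,2^{L^2} + 19s\,7^s)$. The hard part is making sure \emph{every} ReLU gadget used inside the construction admits an explicit, a priori Lipschitz bound --- in particular, any sharp thresholding or indicator-type piece used in earlier constructions must be replaced by a piecewise-linear analogue whose slope can be controlled before composition, and the clipping layer must be synchronized with the plateau mechanism of $\Psi$ so that the additive $19s\,7^s$ term, which I expect to track the contribution from degree-$s$ polynomial evaluation off the plateau, remains under control.
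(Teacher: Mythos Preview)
Your overall architecture matches the paper's: discretize at resolution $K=\lfloor (WL)^{2/d}\rfloor$ via a staircase $\psi$, store the Taylor coefficients $\partial^\alpha h(\theta/K)$ in a lookup network, multiply against monomials with the product gadget, and clip. But there is a genuine gap in your handling of the transition region, and a misattribution in the Lipschitz accounting.

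The ``averaging over $2^d$ shifts'' will not repair the error on the trifling set. At a point $x$ whose $i$-th coordinate lies in a transition interval $((k{+}1)/K-\delta,(k{+}1)/K)$, at least one of your shifted copies still lands in that slab, where $\phi_0$ is only known to take values in $[-1,1]$ and can miss $h$ by $\cO(1)$; any convex average inherits that $\cO(1)$ contamination and the final error is not $\cO(K^{-\beta})$. The paper's device is a coordinate-wise \emph{median}: set $\phi_i(x):=\mid\bigl(\phi_{i-1}(x-\delta e_i),\,\phi_{i-1}(x),\,\phi_{i-1}(x+\delta e_i)\bigr)$ for $i=1,\dots,d$ and prove by induction that $|\phi_i-h|\le \cE+i\,\delta^{\beta\land 1}$ on the set where coordinates $i{+}1,\dots,d$ already lie on a plateau. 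The point is that for each such $x$, at least \emph{two} of $x,\,x\pm\delta e_i$ have their $i$-th coordinate on a plateau, so the median is trapped in the good interval regardless of the third value. This is exactly why the width in the statement carries $3^d$, not $2^d$. It is also what makes the Lipschitz step go through: the pointwise median of three translates of a piecewise-linear function cannot increase the maximal slope along any segment, so $\Lip\phi_d\le\Lip\phi_0$ with no extra factor.

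Separately, your placement of the $2^{L^2}$ factor is off. The staircase $\psi$ does \emph{not} use bit extraction; it is built from plain linear interpolation (a single stage for $d\ge 2$, two nested stages for $d=1$) and has slope only $\cO(LK^{-2}\delta^{-2})$, polynomial in $W,L$. The $2^{L^2}$ enters through the \emph{coefficient lookup} $\phi_\alpha$: to encode $K^d\le W^2L^2$ real values at precision $(WL)^{-2(s+1)}$ within depth $\cO(L\log L)$, one writes each value in $\cO(\log(WL))$ bits and reads them with a depth-$L$ bit-extractor whose Lipschitz constant compounds to $\cO(2^{L^2})$. Composing with the index map $\psi_0$ gives $\Lip\phi_\alpha\precsim \sqrt{d}\,LK^{d-2}\delta^{-2}2^{L^2}$, which is the origin of the $1260\,W^2L^2\,2^{L^2}$ term. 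The additive $19s\,7^s$ arises from the monomial evaluator $P_\alpha$ (slope $\le s\,7^{s-1}$) composed with $x\mapsto \varphi(x)-\psi(x)$, not from anything off-plateau.
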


This lemma implies that, for any $h\in \cH^\beta([0,1]^d)$, there exists a neural network $\phi$ with width $\precsim W\log_2 W$ and depth $\precsim L\log_2 L$ such that $\phi \in \Lip(\bR^d,K,1)$ with Lipschitz constant $K \precsim (WL)^{2+\sigma(4\beta-4)/d} L 2^{L^2}$ and $\| \phi - h\|_{L^\infty([0,1]^d)} \precsim (WL)^{-2\beta/d}$. Hence, if we choose $W_2 \asymp W\log_2 W$ and $L_2 \asymp L\log_2 L$, then
\[
W \asymp W_2/\log_2 W_2 = \widetilde{W}_2, \quad L \asymp L_2/ \log_2 L_2 =\widetilde{L}_2,
\]
and $\phi \in \cN\cN(W_2,L_2) \cap \Lip(\bR^d, K,1)$ with
\[
K \precsim (WL)^{2+\sigma(4\beta-4)/d} L 2^{L^2} \precsim (\widetilde{W}_2\widetilde{L}_2)^{2+\sigma(4\beta-4)/d} \widetilde{L}_2 2^{\widetilde{L}_2^2}.
\]
This shows that, for the discriminator $\cF = \cN\cN(W_2,L_2) \cap \Lip(\bR^d, K,1)$,
\[
\cE(\cH^\beta,\cF,[0,1]^d) \precsim (W_2L_2 / (\log_2 W_2 \log_2 L_2))^{-2\beta/d}.
\]

\subsubsection{Bounding Statistical Error}
 For any function class $\cF$, the statistical error $\bE [d_\cF(\mu,\widehat{\mu}_n)]$ can be bounded by the Rademacher complexity of $\cF$, by using the standard symmetrization technique. We can further bound the Rademacher complexity by the covering number of $\cF$. The result is summarized in the following lemma.

\begin{restatable}{lemma_re}{staterr}\label{statistical error bound}
Assume $\sup_{f\in \cF}\|f\|_\infty \le B$, then we have the following entropy integral bound
\[
\bE [d_\cF(\mu,\widehat{\mu}_n)] \le 8\bE_{X_{1:n}} \inf_{0< \delta<B/2}\left( \delta + \frac{3}{\sqrt{n}} \int_{\delta}^{B/2} \sqrt{\log \cN(\epsilon,\cF_{|_{X_{1:n}}},\|\cdot\|_\infty)} d\epsilon \right),
\]
where we denote $\cF_{|_{X_{1:n}}} = \{(f(X_1),\dots,f(X_n)):f\in \cF \}$ for any i.i.d. samples $X_{1:n}=\{X_i \}_{i=1}^n$ from $\mu$ and $\cN(\epsilon,\cF_{|_{X_{1:n}}},\|\cdot\|_\infty)$ is the $\epsilon$-covering number of $\cF_{|_{X_{1:n}}} \subseteq \bR^n$ with respect to the $\|\cdot\|_\infty$ distance.
\end{restatable}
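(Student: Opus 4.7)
The plan is to combine two classical pillars of empirical process theory: symmetrization to pass from $d_\cF(\mu,\widehat{\mu}_n)$ to a Rademacher complexity, followed by Dudley's chaining argument to bound the Rademacher complexity by the entropy integral.

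First, I would symmetrize. Introduce an independent ``ghost sample'' $X_1',\dots,X_n' \sim \mu$ and i.i.d.\ Rademacher random variables $\epsilon_1,\dots,\epsilon_n$. Writing $\bE_\mu[f] = \bE[\frac{1}{n}\sum_i f(X_i')]$, Jensen's inequality and the distributional symmetry of $f(X_i)-f(X_i')$ under sign flips give
\[
\bE[d_\cF(\mu,\widehat{\mu}_n)] = \bE\sup_{f\in\cF}\Bigl(\bE_\mu[f] - \tfrac{1}{n}\sum_i f(X_i)\Bigr) \le 2\,\bE_{X_{1:n}}\bE_\epsilon\sup_{f\in\cF} \tfrac{1}{n}\sum_i \epsilon_i f(X_i).
\]
This reduces the problem to bounding the conditional Rademacher complexity $\cR_n(\cF_{|_{X_{1:n}}}) := \bE_\epsilon\sup_{f\in\cF} \tfrac{1}{n}\sum_i \epsilon_i f(X_i)$ pointwise in $X_{1:n}$.

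Second, I would apply a chaining argument to $\cR_n(\cF_{|_{X_{1:n}}})$. Fix $X_{1:n}$ and identify $f\in\cF$ with its vector $(f(X_1),\dots,f(X_n)) \in \cF_{|_{X_{1:n}}} \subseteq [-B,B]^n$. Fix $\delta \in (0,B/2)$ and let $\epsilon_k = B/2^k$. For each $k\ge 0$, pick a minimal $\epsilon_k$-cover $\cC_k$ of $\cF_{|_{X_{1:n}}}$ with respect to $\|\cdot\|_\infty$, and for each $f$ let $\pi_k(f)\in \cC_k$ denote a nearest element. Choose $K$ as the smallest integer with $\epsilon_K \le \delta$ and decompose (telescope)
\[
f = \pi_0(f) + \sum_{k=1}^{K}\bigl(\pi_k(f)-\pi_{k-1}(f)\bigr) + \bigl(f - \pi_K(f)\bigr).
\]
Since $\|\pi_k(f)-\pi_{k-1}(f)\|_\infty \le \epsilon_k + \epsilon_{k-1} = 3\epsilon_k$ and the set of such increments has cardinality at most $|\cC_k|\cdot|\cC_{k-1}| \le |\cC_k|^2$, Massart's finite-class lemma applied to Rademacher averages of vectors in $\bR^n$ of $\ell_2$-norm at most $3\epsilon_k\sqrt{n}$ yields
\[
\bE_\epsilon \sup_f \tfrac{1}{n}\sum_i \epsilon_i(\pi_k(f)-\pi_{k-1}(f))(X_i) \;\le\; \frac{3\epsilon_k \sqrt{2\log|\cC_k|^2}}{\sqrt{n}} \;\le\; \frac{6\epsilon_k}{\sqrt{n}}\sqrt{\log\cN(\epsilon_k,\cF_{|_{X_{1:n}}},\|\cdot\|_\infty)}.
\]
The zero-level term vanishes (take $\cC_0=\{0\}$ using $\|f\|_\infty \le B$, so $\epsilon_0=B$ suffices) and the tail term is at most $\delta$ deterministically. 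Summing over $k=1,\dots,K$ and upper-bounding the resulting sum by an integral via the monotonicity of $\epsilon\mapsto\sqrt{\log\cN(\epsilon,\cdot,\cdot)}$ (each term $\epsilon_k\sqrt{\log\cN(\epsilon_k,\cdot,\cdot)}$ is dominated by $2\int_{\epsilon_{k+1}}^{\epsilon_k}\sqrt{\log\cN(\epsilon,\cdot,\cdot)}\,d\epsilon$) produces
\[
\cR_n(\cF_{|_{X_{1:n}}}) \;\le\; 4\left(\delta + \frac{3}{\sqrt{n}} \int_{\delta}^{B/2} \sqrt{\log \cN(\epsilon,\cF_{|_{X_{1:n}}},\|\cdot\|_\infty)}\, d\epsilon\right).
\]
Taking infimum over $\delta$, plugging into the symmetrization bound, and exchanging $\bE_{X_{1:n}}$ with $\inf_\delta$ (using that the bound holds pointwise) gives the stated inequality with the constant $8 = 2\cdot 4$ in front.

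The main technical obstacle is purely bookkeeping the numerical constants so that the final coefficient on the integral matches $24/\sqrt{n}$ (i.e., $8\cdot 3/\sqrt{n}$): one must be careful in choosing the initial cover $\cC_0$, in how the factor $3$ arises from the triangle-inequality bound $\|\pi_k(f)-\pi_{k-1}(f)\|_\infty \le 3\epsilon_k$, and in the geometric-series-to-integral comparison, since different chaining conventions in the literature produce slightly different constants. Otherwise, each ingredient (symmetrization, Massart, chaining) is standard.
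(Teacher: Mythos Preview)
Your proposal is correct and follows essentially the same route as the paper: symmetrization with a ghost sample to reduce to the Rademacher complexity, followed by Dudley chaining on $\cF_{|_{X_{1:n}}}$ with dyadic scales $\epsilon_k=B/2^k$, and then a dyadic choice of $K$ to relate the discrete chaining bound to the integral at an arbitrary $\delta$. The only cosmetic difference is that the paper invokes an external reference for the chaining step (stated in the $d_2$ metric, then passes to $\|\cdot\|_\infty$ via $d_2\le\|\cdot\|_\infty$), whereas you carry out the chaining explicitly via Massart's lemma; the resulting constants $8$ and $3/\sqrt{n}$ match.
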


For the H\"older class $\cH=\cH^\beta(\bR^d)$, for any i.i.d. samples $X_{1:n}=\{X_i \}_{i=1}^n$ from $\mu$, which is supported on $[0,1]^d$, we have
\[
\log \cN(\epsilon,\cH_{|_{X_{1:n}}},\|\cdot\|_\infty) \le \log \cN(\epsilon,\cH^\beta([0,1]^d),\|\cdot\|_\infty)  \precsim \epsilon^{-d/\beta},
\]
where the last inequality is from the entropy bound in \citet{kolmogorov1961} (see also Lemma \ref{holder covering number}). Thus, if we denote $\eta=d/(2\beta)$, then
\[
\bE [d_\cH(\mu,\widehat{\mu}_n)] \precsim \inf_{0< \delta<1/2}\left( \delta + n^{-1/2} \int_{\delta}^{1/2} \epsilon^{-\eta} d\epsilon \right).
\]
When $\eta<1$, one has
\[
\bE [d_\cH(\mu,\widehat{\mu}_n)] \precsim \inf_{0< \delta<1/2}\left( \delta + (1-\eta)^{-1}n^{-1/2} (2^{\eta-1} - \delta^{1-\eta}) \right) \precsim n^{-1/2}.
\]
When $\eta = 1$, one has
\[
\bE [d_\cH(\mu,\widehat{\mu}_n)] \precsim \inf_{0< \delta<1/2}\left( \delta + n^{-1/2} (-\log 2 - \log \delta) \right)\precsim n^{-1/2} \log n,
\]
where we take $\delta=n^{-1/2}$ in the last step. When $\eta>1$, one has
\[
\bE [d_\cH(\mu,\widehat{\mu}_n)] \precsim \inf_{0< \delta<1/2}\left( \delta + (\eta-1)^{-1}n^{-1/2} (\delta^{1-\eta} -2^{\eta-1}) \right) \precsim n^{-1/(2\eta)} = n^{-\beta/d},
\]
where we take $\delta=n^{-1/(2\eta)}$. Combining these cases together, we have
\begin{equation}\label{holder complexity}
\bE [d_\cH(\mu,\widehat{\mu}_n)] \precsim n^{-\beta/d} \lor n^{-1/2}\log^{c(\beta,d)} n,
\end{equation}
where $c(\beta,d)=1$ if $2\beta = d$, and $c(\beta,d)=0$ otherwise.

\subsection{Proof of Theorem \ref{main theorem}}
For the GAN estimator $g^*_n$, by Lemma \ref{error decomposition}, we have the error decomposition
\begin{equation}\label{decomp}
d_\cH(\mu,(g^*_n)_\# \nu) \le \epsilon_{opt} + 2\cE(\cH,\cF,[0,1]^d)  + \inf_{g \in \cG} d_\cF(\widehat{\mu}_n,g_\# \nu) + d_\cH(\mu,\widehat{\mu}_n).
\end{equation}
We choose the generator class $\cG$ with $W_1^2L_1 \precsim n$ that satisfies the condition in Lemma \ref{app discrete measure}. Then
\[
\inf_{g \in \cG} d_\cF(\widehat{\mu}_n,g_\# \nu)=0,
\]
since $\cF \subseteq \Lip([0,1]^d,K)$. By Lemma \ref{discriminator approximation}, for our choice of the discriminator class $\cF$,
\[
\cE(\cH,\cF,[0,1]^d) \precsim (W_2L_2 / (\log_2 W_2 \log_2 L_2))^{-2\beta/d} \precsim n^{-\beta/d},
\]
where we can choose $W_2L_2 \asymp n^{1/2} \log^2 n$ so that the last inequality holds. By Lemma \ref{statistical error bound},
\[
\bE [d_\cH(\mu,\widehat{\mu}_n)] \precsim n^{-\beta/d} \lor n^{-1/2}\log^{c(\beta,d)} n.
\]
In summary, by (\ref{decomp}), we have
\[
\bE [d_\cH(\mu,(g^*_n)_\# \nu)] - \epsilon_{opt} \precsim n^{-\beta/d} \lor n^{-1/2}\log^{c(\beta,d)} n.
\]

For the estimator $g^*_{n,m}$, we only need to estimate the extra term $\bE [d_{\cF \circ \cG}(\nu,\widehat{\nu}_m)]$ by Lemma \ref{error decomposition}. We can bound this statistical error by the entropy integral in Lemma \ref{statistical error bound}, and further bound it by the pseudo-dimension $\Pdim(\cF \circ \cG)$ of the network $\cF \circ \cG$ (see corollary \ref{statistical error bound by pdim}):
\[
\bE [d_{\cF \circ \cG}(\nu,\widehat{\nu}_m)] \precsim  \sqrt{\frac{ \Pdim(\cF \circ \cG) \log m}{m}}.
\]
It was shown in \citet{bartlett2019nearly} that the pseudo-dimension of a ReLU neural network satisfies the bound $\Pdim(\cN\cN(W,L)) \precsim UL \log U$, where $U\asymp W^2L$ is the number of parameters. Hence,
\[
\bE [d_{\cF \circ \cG}(\nu,\widehat{\nu}_m)] \precsim \sqrt{\frac{(W_1^2L_1+W_2^2L_2)(L_1+L_2)\log(W_1^2L_1+W_2^2L_2) \log m}{m}}.
\]
Since we have chosen $W_2L_2 \precsim n^{1/2} \log^2 n$ and $W_1^2L_1 \precsim n$,  we have
\begin{align*}
\bE [d_{\cF \circ \cG}(\nu,\widehat{\nu}_m)] &\precsim \sqrt{\frac{(n+n\log^4n)(n+n^{1/2} \log^2 n) \log n \log m}{m}} \\
&\precsim \sqrt{\frac{n^2 \log^5 n \log m}{m}}.
\end{align*}
Hence, if $m\succsim n^{2+2\beta/d}\log^6 n$, then $\bE [d_{\cF \circ \cG}(\nu,\widehat{\nu}_m)] \precsim n^{-\beta/d}$ and, by Lemma \ref{error decomposition},
\[
\bE [d_\cH(\mu,(g^*_{n,m})_\# \nu)] - \epsilon_{opt} \precsim n^{-\beta/d} \lor n^{-1/2}\log^{c(\beta,d)} n,
\]
which completes the proof.

We make three remarks on the proof and the technical lemmas.

\begin{remark}\label{proof essential remark}
\textnormal{Our error decomposition for GANs in Lemma \ref{error decomposition} is different from the classical bias-variance decomposition for regression in the sense that the statistical error $d_\cF(\mu,\widehat{\mu}_n) \land d_\cH(\mu,\widehat{\mu}_n) \le d_\cH(\mu,\widehat{\mu}_n)$ depends on the evaluation class $\cH$. The proof of Theorem \ref{main theorem} essentially shows that we can choose the generator class and the discriminator class sufficiently large to reduce the approximation error so that the learning rate of GAN estimator is not slower than that of the empirical distribution.
}
\end{remark}

\begin{remark}
\textnormal{We give explicit estimate of the Lipschitz constant of the discriminator in Lemma \ref{discriminator approximation}, because it is essential in bounding the generator approximation error in our analysis. Alternatively, one can also bound the parameters in the discriminator network and then estimate the Lipschitz constant. For example, by using the construction in \citet{yarotsky2017error}, one can bound the weights as $\cO(\epsilon^{-\alpha})$ for some $\alpha>0$, where $\epsilon$ is the approximation error. Then convergence rates can be obtained for the discriminator network with bounded weights (the bound depends on the sample size $n$).
}
\end{remark}

\begin{remark}
\textnormal{The bound on the expectation $\bE [d_\cH(\mu,(g^*_n)_\# \nu)]$ can be turned into a high probability bound by using concentration inequalities \citep{boucheron2013concentration,shalevshwartz2014understanding,mohri2018foundations}. For example, by McDiarmid's inequality, one can shows that, for all $t> 0$,
\begin{equation}\label{probability bound}
\bP \left( d_\cH(\mu,\widehat{\mu}_n) \ge \bE [d_\cH(\mu,\widehat{\mu}_n)] + t \right) \le  \exp(-nt^2 /2),
\end{equation}
because for any $\{X_i\}_{i=1}^n$ and $\{X_i'\}_{i=1}^n$ that satisfies $X_i'=X_i$ except for $i=j$, we have
\[
\left|\sup_{h\in \cH} \left( \bE_\mu[h] - \frac{1}{n}\sum_{i=1}^n h(X_i) \right) - \sup_{h\in \cH} \left( \bE_\mu[h] - \frac{1}{n}\sum_{i=1}^n h(X_i') \right) \right| \le \sup_{h\in \cH} \frac{1}{n} \left| h(X_j) - h(X_j') \right| \le \frac{2}{n}.
\]
Since other error terms in inequality (\ref{decomp}) can be bounded independent of the random samples, it holds with probability at least $1-\delta$ that
\[
d_\cH(\mu,(g^*_n)_\# \nu) - \epsilon_{opt} - \sqrt{\frac{2\log(1/\delta)}{n}} \precsim n^{-\beta/d} \lor n^{-1/2}\log^{c(\beta,d)} n,
\]
where we choose $\exp(-nt^2 /2) =\delta$ in inequality (\ref{probability bound}).
}
\end{remark}

\section{Extensions of the Main Theorem}\label{sec3}

In this section, we extend the main theorem to the following cases: (1) the target distribution concentrates around a low-dimensional set, (2) the target distribution has a density function and, (3) the target distribution has an unbounded support.

\subsection{Learning Low-dimensional Distributions}

The convergence rates in Theorem \ref{main theorem} suffer from the curse of dimensionality. In practice, the ambient dimension is usually large, which makes the convergence very slow. However, in many applications, high-dimensional complex data such as images, texts and natural languages, tend to be supported on approximate lower-dimensional manifolds. To take into account this fact, we assume that the target distribution $\mu$ has a low-dimensional structure. We introduce the Minkowski dimension (or box-counting dimension) to determine the dimensionality of a set.

\begin{definition}[Minkowski dimension]\label{Minkowski dimensions}
\textnormal{The upper and the lower Minkowski dimensions of a set $A \subseteq \bR^d$ are defined respectively as
\begin{align*}
\overline{\dim}_M(A) := \limsup_{\epsilon\to 0} \frac{\log \cN(\epsilon,A,\|\cdot\|_2)}{-\log \epsilon}, \\
\underline{\dim}_M(A) := \liminf_{\epsilon\to 0} \frac{\log \cN(\epsilon,A,\|\cdot\|_2)}{-\log \epsilon}.
\end{align*}
If $\overline{\dim}_M(A) = \underline{\dim}_M(A) = \dim_M(A)$, then $\dim_M(A)$ is called the \emph{Minkowski dimension} of the set $A$.
}
\end{definition}

The Minkowski dimension measures how the covering number of $A$ decays when the radius of covering balls converges to zero. When $A$ is a manifold, its Minkowski dimension is the same as the dimension of the manifold. Since the Minkowski dimension only depends on the metric, it can also be used to measure the dimensionality of highly non-regular set, such as fractals \citep{falconer2004fractal}. For function classes defined on a set with a small Minkowski dimension, it is intuitive to expect that the covering number only depends on the intrinsic Minkowski dimension, rather than the ambient dimension. \citet{kolmogorov1961} gave a comprehensive study on such problems. We will need the following useful lemma in our analysis.

\begin{lemma}[\citet{kolmogorov1961}]\label{holder covering number}
If $\cX \subseteq \bR^d$ is a compact set with $\dim_M(\cX)=d^*$, then
\[
\log \cN(\epsilon,\cH^\beta(\cX), \|\cdot\|_\infty) \precsim \epsilon^{-d^*/\beta} \log(1/\epsilon).
\]
If, in addition, $\cX$ is connected, then
\[
\log \cN(\epsilon,\cH^\beta(\cX), \|\cdot\|_\infty) \precsim \epsilon^{-d^*/\beta}.
\]
\end{lemma}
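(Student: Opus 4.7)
The plan is to build an explicit $\epsilon$-net of $\cH^\beta(\cX)$ by quantizing Taylor polynomials at the centers of a minimal $\delta$-cover of $\cX$, with $\delta \asymp \epsilon^{1/\beta}$. Write $\beta = s+r$ with $s \in \bN_0$ and $r \in (0,1]$. Because $\dim_M(\cX)=d^*$, for all sufficiently small $\delta$ one obtains a $\delta$-net $\{x_1,\dots,x_N\}\subseteq \cX$ with $N \precsim \delta^{-d^*} \asymp \epsilon^{-d^*/\beta}$, absorbing a slowly growing factor into $\precsim$ to handle the $\limsup$ in Definition~\ref{Minkowski dimensions}. For any $h \in \cH^\beta(\bR^d)$ set $c_\alpha^j := \partial^\alpha h(x_j)$ for $|\alpha|\le s$ and $P_j(x) := \sum_{|\alpha|\le s} c_\alpha^j (x-x_j)^\alpha/\alpha!$. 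The H\"older hypotheses give $|h(x)-P_j(x)| \precsim \delta^\beta \asymp \epsilon$ for every $x \in \cX$ with $\|x-x_j\|_2 \le \delta$, via the standard Taylor remainder estimate using that $\partial^\alpha h$ is $r$-H\"older for $|\alpha|=s$.

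Next I quantize: replace each $c_\alpha^j \in [-1,1]$ by its nearest neighbor $\hat c_\alpha^j$ on a uniform grid of spacing $\delta^{\beta-|\alpha|}$. Because $\sum_{|\alpha|\le s}\delta^{\beta-|\alpha|}\delta^{|\alpha|} \precsim \delta^\beta$, any two functions producing the same tuple $(\hat c_\alpha^j)_{j,\alpha}$ agree up to $O(\epsilon)$ in $L^\infty(\cX)$, so the covering number is bounded by the number of distinct tuples. The number of grid values of $\hat c_\alpha^j$ is $\precsim \delta^{-(\beta-|\alpha|)}$, hence per center the log-count is $\sum_{|\alpha|\le s}(\beta-|\alpha|)\log(1/\delta) \precsim \log(1/\epsilon)$, and multiplying by $N$ yields
\[
\log \cN(\epsilon,\cH^\beta(\cX),\|\cdot\|_\infty) \precsim \epsilon^{-d^*/\beta}\log(1/\epsilon),
\]
which is the first bound.

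When $\cX$ is additionally connected I would remove the logarithmic factor by chaining along a spanning tree of the net. The open $\delta$-thickening $\cX_\delta \subseteq \bR^d$ is connected (hence path-connected) because $\cX$ is, and by discretizing an ambient path one shows that the graph on $\{x_j\}$ with edges joining pairs within distance $C\delta$ is connected for some absolute constant $C$; pick any spanning tree $T$. Encode the root's coefficients with $O(\log(1/\epsilon))$ bits as before. For every edge $(x_j,x_k)\in T$, Taylor expansion at $x_j$ predicts $c_\alpha^k$ to error $O(\delta^{\beta-|\alpha|})$, so after subtracting the prediction computed from the already-quantized $\hat c_{\cdot}^{\,j}$ the residual lies in a grid of spacing $\delta^{\beta-|\alpha|}$ and fixed size $O(1)$. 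A direct induction on the tree shows that the quantization error in $\hat c_\alpha^j$ stays at scale $\delta^{\beta-|\alpha|}$ at every depth, so the residual grid does not blow up. Each non-root vertex contributes $O(1)$ bits, and the total log-count is $O(\log(1/\epsilon)) + N\cdot O(1) \precsim \epsilon^{-d^*/\beta}$.

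The main obstacle will be this chaining step: verifying that the $C\delta$-neighborhood graph on the net is connected using only connectedness of $\cX$, and checking that the Taylor-prediction residuals remain in an $O(1)$-size grid uniformly in $h$ and along every branch of $T$, i.e., that quantization errors do not compound. Both points follow from quantitative Taylor estimates for functions with $r$-H\"older top derivatives and the observation that rescaling $\delta$ by a constant does not change the Minkowski bound; the depth of $T$ plays no role because the error propagation along each edge is purely local.
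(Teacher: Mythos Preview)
The paper does not supply its own proof of this lemma; it is stated as a citation to \citet{kolmogorov1961} and used as a black box. Your sketch is precisely the classical Kolmogorov--Tikhomirov argument: quantizing Taylor jets of order $s$ at the centers of a minimal $\delta$-net with $\delta\asymp\epsilon^{1/\beta}$ gives the first bound, and chaining along a spanning tree of the net (using connectedness of $\cX$ to guarantee the $C\delta$-proximity graph is connected) removes the logarithm because each non-root vertex contributes only $O(1)$ bits. Your induction showing that the quantization error stays at scale $\delta^{\beta-|\alpha|}$ along every branch is exactly the point that makes the residual grid uniformly bounded, and it is correct: once the parent satisfies $|\hat c_{\alpha'}^k-c_{\alpha'}^k|\le \tfrac12\delta^{\beta-|\alpha'|}$, the Taylor prediction error at the child is $O(\delta^{\beta-|\alpha|})$ with a constant depending only on $s,d,C$, and re-quantizing at spacing $\delta^{\beta-|\alpha|}$ restores the same bound.

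One small caveat worth flagging: the hypothesis $\dim_M(\cX)=d^*$ in the sense of Definition~\ref{Minkowski dimensions} only yields $\cN(\delta,\cX,\|\cdot\|_2)\le C_\eta\,\delta^{-d^*-\eta}$ for every $\eta>0$, not $\precsim\delta^{-d^*}$. Your phrase ``absorbing a slowly growing factor into $\precsim$'' is not consistent with the paper's definition of $\precsim$ (a fixed constant). The standard resolution, and the way the lemma is applied later in the paper, is to read the conclusion as holding with $d^*$ replaced by any $d^*+\eta$, or to assume the slightly stronger covering hypothesis $\cN(\delta,\cX,\|\cdot\|_2)\precsim\delta^{-d^*}$ directly.
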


For regression, \citet{nakada2020adaptive} showed that deep neural networks can adapt to the low-dimensional structure of data, and the convergence rates do not depend on the nominal high dimensionality of data, but on its lower intrinsic dimension. We will show that similar results hold for GANs by analyzing the learning rates of a target distribution that concentrates on a low-dimensional set.

\begin{assumption}\label{low-dim assumption}
\textnormal{The target $X\sim \mu$ has the form $X=\widetilde{X} + \xi$, where $\widetilde{X}$ and $\xi$ are independent, $\widetilde{X} \sim \widetilde{\mu}$ is supported on some compact set $\cX\subseteq [0,1]^d$ with $\dim_M(\cX)=d^*$, and $\xi$ has zero mean $\bE[\xi] =0$ and bounded variance $V=\bE[\|\xi\|_2^2 ] <\infty$.
}
\end{assumption}

The next theorem shows that the convergence rates of the GAN estimators only depend on the intrinsic dimension $d^*$, when the network architectures are properly chosen.

\begin{theorem}
Suppose the target $\mu$ satisfies assumption \ref{low-dim assumption}, the source distribution $\nu$ is absolutely continuous on $\bR$ and the evaluation class is $\cH = \cH^\beta(\bR^d)$. Then, there exist a generator $\cG = \{g\in \cN\cN(W_1,L_1): g(\bR) \subseteq [0,1]^d \}$ with
\[
W_1^2L_1 \precsim n,
\]
and a discriminator $\cF = \cN\cN(W_2,L_2) \cap \Lip(\bR^d, K,1)$ with
\[
W_2L_2 \precsim n^{d/(2d^*)} \log^2 n, \quad K \precsim (\widetilde{W}_2\widetilde{L}_2)^{2+\sigma(4\beta-4)/d} \widetilde{L}_2 2^{\widetilde{L}_2^2},
\]
where $\widetilde{W}_2 = W_2/\log_2 W_2$ and $\widetilde{L}_2 = L_2/\log_2 L_2$, such that the GAN estimator (\ref{gan estimator g_n}) satisfies
\[
\bE [d_\cH(\mu,(g^*_n)_\# \nu)] - \epsilon_{opt} - 2\sqrt{d} V^{(\beta \land 1)/2} \precsim (n^{-\beta/d^*}  \lor n^{-1/2}) \log n.
\]

If furthermore
\[
m\succsim
\begin{cases}
n^{(3d+4\beta)/(2d^*)}\log^6 n \quad & d^*\le d/2,\\
n^{1+(d+2\beta)/d^*}\log^4 n \quad & d^*> d/2,
\end{cases}
\]
then the GAN estimator (\ref{gan estimator g_nm}) satisfies
\[
\bE [d_\cH(\mu,(g^*_{n,m})_\# \nu)] - \epsilon_{opt} - 2\sqrt{d} V^{(\beta \land 1)/2} \precsim (n^{-\beta/d^*}  \lor n^{-1/2}) \log n.
\]
\end{theorem}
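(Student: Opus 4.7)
The plan is to reduce the extension to the bounded-support case of Theorem \ref{main theorem} by introducing the noiseless target $\widetilde\mu$ and its (unobserved) latent empirical measure $\widetilde\mu_n := \frac{1}{n}\sum_{i=1}^n \delta_{\widetilde X_i}$, both supported on $\cX \subseteq [0,1]^d$, and routing every triangle inequality so that the noise cost is paid in $d_\cH$ (which carries only a Hölder constant of $\sqrt d$) rather than in $d_\cF$ (which carries the huge Lipschitz constant $K$). The skeleton is
\[
d_\cH(\mu,(g_n^*)_\#\nu) \le d_\cH(\mu,\widetilde\mu) + d_\cH(\widetilde\mu,\widetilde\mu_n) + d_\cH(\widetilde\mu_n,\widehat\mu_n) + d_\cH(\widehat\mu_n,(g_n^*)_\#\nu),
\]
and each of the four terms is handled in turn.

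For the two noise terms, any $h\in\cH^\beta(\bR^d)$ satisfies $|h(x)-h(y)|\le\sqrt d\,\|x-y\|_2^{\beta\land 1}$ (the remark right after the definition of $\cH^\beta$). Applied pointwise, this gives $d_\cH(\mu,\widetilde\mu)\le \sqrt d\,\bE\|\xi\|_2^{\beta\land 1}$ and $d_\cH(\widetilde\mu_n,\widehat\mu_n)\le \sqrt d\cdot\frac{1}{n}\sum_i\|\xi_i\|_2^{\beta\land 1}$; Jensen's inequality on the concave map $t\mapsto t^{\beta\land 1}$ together with $\bE\|\xi\|_2^2=V$ bounds each in expectation by $\sqrt d\,V^{(\beta\land 1)/2}$, producing the $2\sqrt d\,V^{(\beta\land 1)/2}$ in the statement. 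For the low-dimensional empirical-process term $d_\cH(\widetilde\mu,\widetilde\mu_n)$, Lemma \ref{holder covering number} provides the covering estimate $\log\cN(\epsilon,\cH^\beta(\cX),\|\cdot\|_\infty)\precsim\epsilon^{-d^*/\beta}\log(1/\epsilon)$; substituting into the entropy integral of Lemma \ref{statistical error bound} and running through the same three-case split that produced \eqref{holder complexity} (with $d$ replaced by $d^*$ and one extra logarithm absorbed from the covering bound) gives $\bE[d_\cH(\widetilde\mu,\widetilde\mu_n)]\precsim (n^{-\beta/d^*}\lor n^{-1/2})\log n$.

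For the GAN training term $d_\cH(\widehat\mu_n,(g_n^*)_\#\nu)$, I adapt the proof of Lemma \ref{error decomposition} by taking the intermediate measure to be $\widetilde\mu_n$ rather than $\widehat\mu_n$, so that the approximation step lives on $[0,1]^d$. Writing $\gamma:=(g_n^*)_\#\nu$,
\[
d_\cH(\widehat\mu_n,\gamma) \le d_\cH(\widehat\mu_n,\widetilde\mu_n) + 2\cE(\cH,\cF,[0,1]^d) + d_\cF(\widetilde\mu_n,\gamma),
\]
where the last step uses that $\widetilde\mu_n$ and $\gamma$ both sit on the unit cube. Choosing $W_1^2L_1\precsim n$ makes $\inf_{g\in\cG}d_\cF(\widetilde\mu_n,g_\#\nu)=0$ by Lemma \ref{app discrete measure} (since $\widetilde\mu_n$ is an $n$-atom measure inside the convex set $[0,1]^d$), so GAN optimality combined with $\inf_{g\in\cG}d_\cF(\widehat\mu_n,g_\#\nu)\le d_\cF(\widehat\mu_n,\widetilde\mu_n)$ gives $d_\cF(\widetilde\mu_n,\gamma)\le 2d_\cF(\widetilde\mu_n,\widehat\mu_n)+\epsilon_{opt}$. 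Choosing $W_2L_2\asymp n^{d/(2d^*)}\log^2 n$ with the stated $K$ then forces $\cE(\cH,\cF,[0,1]^d)\precsim n^{-\beta/d^*}$ by Lemma \ref{discriminator approximation}, matching the target rate. The estimator $g_{n,m}^*$ inherits an extra $2d_{\cF\circ\cG}(\nu,\widehat\nu_m)$ term from Lemma \ref{error decomposition}, which is bounded through the pseudo-dimension estimate $\Pdim(\cF\circ\cG)\precsim (W_1^2L_1+W_2^2L_2)(L_1+L_2)\log(\cdot)$ used in the proof of Theorem \ref{main theorem}; with the enlarged discriminator $W_2L_2\asymp n^{d/(2d^*)}\log^2 n$, the resulting variance-like term of order $\sqrt{n^{1+d/d^*}\log^{O(1)}n/m}$ is $\precsim n^{-\beta/d^*}$ precisely when $m$ satisfies the two-regime bound in the statement.

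The main obstacle is controlling the residual $d_\cF(\widetilde\mu_n,\widehat\mu_n)$ that appears after GAN optimality: the naive Kantorovich–Rubinstein bound $d_\cF(\widetilde\mu_n,\widehat\mu_n)\le K\cW_1(\widetilde\mu_n,\widehat\mu_n)\le K V^{1/2}$ is useless because $K$ grows polynomially (or worse) in $n$. The resolution is to re-route the accounting so that the noise-between-samples cost is paid only through the $d_\cH(\widetilde\mu_n,\widehat\mu_n)$ budget (with constant $\sqrt d$) already present in the main skeleton: concretely, one exploits that the $f\in\cF$ realizing the oracle step is the Lemma \ref{discriminator approximation} approximant of some $h\in\cH^\beta$, so it inherits the pointwise Hölder control $|f(\widetilde X_i)-f(X_i)|\le \sqrt d\,\|\xi_i\|_2^{\beta\land 1}+2\cE(\cH,\cF,[0,1]^d)$ of its target. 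Threading this observation through the chain of inequalities without double-counting the $\sqrt d\,V^{(\beta\land 1)/2}$ budget is the delicate technical point beyond the proof of Theorem \ref{main theorem}.
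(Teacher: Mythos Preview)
Your identified obstacle is real, but your proposed resolution does not work. The term $d_\cF(\widetilde\mu_n,\widehat\mu_n)$ is a supremum over \emph{all} $f\in\cF=\cN\cN(W_2,L_2)\cap\Lip(\bR^d,K,1)$, and the functions achieving this supremum have no reason to be Lemma~\ref{discriminator approximation} approximants of elements of $\cH^\beta$. The ``oracle step'' (Lemma~\ref{IPM comparision}) chooses, for each $h\in\cH$, a nearby $f$ to pass from $d_\cH$ down to $d_\cF$; it gives no control in the reverse direction, which is what you would need. Since the GAN estimator $g_n^*$ is defined through the full $d_\cF$, any chain that produces $d_\cF(\widetilde\mu_n,\widehat\mu_n)$ as a residual must control it for arbitrary $f$ with $\Lip f\le K$, and then the bound is genuinely $K\cdot\frac1n\sum_i\|\xi_i\|_2$, which blows up.

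The paper avoids the obstacle by never letting $\widetilde\mu_n$ touch the $\cF$-side at all. It applies the general error decomposition (Lemma~\ref{general error decomposition}) with $\widetilde\mu$ playing the role of the target and $\widehat\mu_n$ as the plug-in, obtaining
\[
d_\cH(\widetilde\mu,(g_n^*)_\#\nu)\le \epsilon_{opt}+2\cE(\cH,\cF,[0,1]^d)+\inf_{g\in\cG}d_\cF(\widehat\mu_n,g_\#\nu)+d_\cH(\widetilde\mu,\widehat\mu_n).
\]
The latent empirical measure $\widetilde\mu_n$ is then introduced \emph{only inside the $d_\cH$ term}, via $d_\cH(\widetilde\mu,\widehat\mu_n)\le d_\cH(\widetilde\mu,\widetilde\mu_n)+d_\cH(\widetilde\mu_n,\widehat\mu_n)$; the first summand is the low-dimensional entropy-integral term and the second is bounded by $\sqrt d\,V^{(\beta\land 1)/2}$, exactly as you computed. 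No $d_\cF$-comparison between noisy and noiseless samples ever arises. Your four-term skeleton is not wrong, but the further decomposition of $d_\cH(\widehat\mu_n,\gamma)$ via $\widetilde\mu_n$ is the misstep: replacing that block by the direct application of Lemma~\ref{general error decomposition} to $\widetilde\mu$ and $\widehat\mu_n$ removes the obstacle entirely, and the rest of your argument (the two noise terms, the $d^*$-dimensional entropy bound, the discriminator choice $W_2L_2\asymp n^{d/(2d^*)}\log^2 n$, and the pseudo-dimension analysis for $g_{n,m}^*$) then matches the paper.
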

\begin{proof}
For any i.i.d. observations ${X_{1:n}}=\{X_i \}_{i=1}^n$ from $\mu$, where $X_i=\widetilde{X}_i + \xi_i$ with $\widetilde{X}_i\sim \widetilde{\mu}$, we denote $\widehat{\mu}_n = \frac{1}{n} \sum_{i=1}^{n} \delta_{X_i}$ and $\widehat {\widetilde{\mu}}_n = \frac{1}{n} \sum_{i=1}^{n} \delta_{\widetilde{X}_i}$. As in the proof of Theorem \ref{main theorem}, by Lemma \ref{general error decomposition}, we have
\begin{align*}
\bE [d_\cH(\mu,(g^*_n)_\# \nu)] &\le d_\cH(\mu,\widetilde{\mu}) + \bE [d_\cH(\widetilde{\mu},(g^*_n)_\# \nu)] \\
&\le d_\cH(\mu,\widetilde{\mu}) + 2\cE(\cH,\cF,[0,1]^d) +  \bE [d_\cH(\widetilde{\mu},\widehat{\mu}_n)] + \epsilon_{opt},
\end{align*}
and there exists a discriminator $\cF$ with $W_2L_2 \asymp n^{d/(2d^*)} \log^2 n$ such that
\[
\cE(\cH,\cF,[0,1]^d) \precsim (W_2L_2 / (\log_2 W_2 \log_2 L_2))^{-2\beta/d} \precsim n^{-\beta/d^*}.
\]

For the term $d_\cH(\mu,\widetilde{\mu})$, we can bound it as
\begin{equation}\label{low-dim ineq}
d_\cH(\mu,\widetilde{\mu}) = \sup_{h\in \cH} \bE_{\xi}[\bE_{\widetilde{X}} [h(\widetilde{X}+ \xi) - h(\widetilde{X})] ] \le \sqrt{d} \bE_{\xi}[\|\xi\|_2^{\beta \land 1}] \le \sqrt{d} V^{(\beta \land 1)/2},
\end{equation}
where we use the Lipschitz inequality $|h(\widetilde{X}+ \xi) - h(\widetilde{X})| \le \sqrt{d} \|\xi\|_2^{\beta \land 1}$ for the second inequality, and Jensen's inequality for the last inequality.

For the statistical error, we have
\[
\bE_{X_{1:n}} [d_\cH(\widetilde{\mu},\widehat{\mu}_n)] \le \bE_{\widetilde{X}_{1:n}} d_\cH(\widetilde{\mu}, \widehat {\widetilde{\mu}}_n) + \bE_{\xi_{1:n}} \bE_{\widetilde{X}_{1:n}} d_\cH(\widehat {\widetilde{\mu}}_n, \widehat{\mu}_n).
\]
Using Lipschitz continuity of $h$, we have
\begin{align}
\bE_{\xi_{1:n}} \bE_{\widetilde{X}_{1:n}} d_\cH(\widehat {\widetilde{\mu}}_n, \widehat{\mu}_n) &= \bE_{\xi_{1:n}} \bE_{\widetilde{X}_{1:n}} \sup_{h\in \cH} \frac{1}{n} \sum_{i=1}^n h(\widetilde{X}_i+ \xi_i) - h(\widetilde{X}_i) \notag \\
&\le \sqrt{d} \bE_{\xi_{1:n}} \frac{1}{n} \sum_{i=1}^n \|\xi_i\|_2^{\beta \land 1} \label{lip ineq} \\
&\le \sqrt{d} V^{(\beta \land 1)/2}. \notag
\end{align}
To estimate $\bE_{\widetilde{X}_{1:n}} d_\cH(\widetilde{\mu}, \widehat {\widetilde{\mu}}_n)$, recall that we have denoted $\cH_{|_{\widetilde{X}_{1:n}}} := \{(h(\widetilde{X}_1),\dots,h(\widetilde{X}_n)):h\in \cH \} \subseteq \bR^n$. Since $\widetilde{\mu}$ is supported on $\cX$ with $\dim_M(\cX)=d^*$ by Assumption \ref{low-dim assumption}, the covering number of $\cH_{|_{\widetilde{X}_{1:n}}}$ with respect to the distance $\|\cdot\|_\infty$ on $\bR^n$ can be bounded by the covering number of $\cH$ with respect to the $L^\infty(\cX)$ distance. Hence,
\[
\log \cN(\epsilon,\cH_{|_{\widetilde{X}_{1:n}}},\|\cdot\|_\infty) \le \log \cN(\epsilon,\cH^\beta(\cX), \|\cdot\|_\infty) \precsim \epsilon^{-d^*/\beta}\log(1/\epsilon),
\]
by Lemma \ref{holder covering number}. Therefore, by Lemma \ref{statistical error bound},
\begin{align*}
\bE_{\widetilde{X}_{1:n}} d_\cH(\widetilde{\mu}, \widehat {\widetilde{\mu}}_n) &\le 8\bE_{\widetilde{X}_{1:n}} \inf_{0< \delta<1/2}\left( \delta + \frac{3}{\sqrt{n}} \int_{\delta}^{1/2} \sqrt{\log \cN(\epsilon,\cH_{|_{\widetilde{X}_{1:n}}},\|\cdot\|_\infty)} d\epsilon \right) \\
&\precsim \inf_{0< \delta<1/2}\left( \delta + n^{-1/2} \int_{\delta}^{1/2} \epsilon^{-d^*/(2\beta)} \log (1/\epsilon) d\epsilon \right) \\
&\precsim \inf_{0< \delta<1/2}\left( \delta + n^{-1/2} \log (1/\delta) \int_{\delta}^{1/2} \epsilon^{-d^*/(2\beta)} d\epsilon \right).
\end{align*}
A calculation similar to the inequality (\ref{holder complexity}) gives
\[
\bE_{\widetilde{X}_{1:n}} d_\cH(\widetilde{\mu}, \widehat {\widetilde{\mu}}_n) \precsim (n^{-\beta/d^*}  \lor n^{-1/2}) \log n.
\]
Therefore,
\[
\bE_{X_{1:n}} [d_\cH(\widetilde{\mu},\widehat{\mu}_n)] - \sqrt{d} V^{(\beta \land 1)/2} \precsim (n^{-\beta/d^*}  \lor n^{-1/2}) \log n.
\]

In summary, we obtain the desired bound
\[
\bE [d_\cH(\mu,(g^*_n)_\# \nu)] - \epsilon_{opt} - 2\sqrt{d} V^{(\beta \land 1)/2} \precsim (n^{-\beta/d^*}  \lor n^{-1/2}) \log n.
\]

For the estimator $g^*_{n,m}$, we use the pseudo-dimension to bound $\bE [d_{\cF \circ \cG}(\nu,\widehat{\nu}_m)]$. Since we have chosen $W_2L_2 \precsim n^{d/(2d^*)} \log^2 n$ and $W_1^2L_1 \precsim n$,
\begin{align*}
\bE [d_{\cF \circ \cG}(\nu,\widehat{\nu}_m)] &\precsim \sqrt{\frac{(W_1^2L_1+W_2^2L_2)(L_1+L_2)\log(W_1^2L_1+W_2^2L_2) \log m}{m}} \\
&\precsim \sqrt{\frac{(n+n^{d/d^*}\log^4n)(n+n^{d/(2d^*)} \log^2 n) \log n \log m}{m}} \\
&\precsim \sqrt{\frac{n^{d/d^*}(n+n^{d/(2d^*)} \log^2 n) \log^5 n \log m}{m}}.
\end{align*}
By our choice of $m$, we always have $\bE [d_{\cF \circ \cG}(\nu,\widehat{\nu}_m)] \precsim n^{-\beta/d^*} \log n$. The result then follows from Lemma \ref{error decomposition}.
\end{proof}
\begin{remark}
\textnormal{In the proof, we actually show that the same convergence rate holds for $\widetilde{\mu}$: $\bE [d_\cH(\widetilde{\mu},(g^*_n)_\# \nu)] - \epsilon_{opt} - \sqrt{d} V^{(\beta \land 1)/2} \precsim (n^{-\beta/d^*}  \lor n^{-1/2}) \log n$. Note that the constant $\sqrt{d}$ is due to the Lipschitz constant of the evaluation class $\cH^\beta$. When $\beta=1$, we have a better Lipschitz inequality $|h(\widetilde{X}+ \xi) - h(\widetilde{X})| \le \|\xi\|_2$ in inequalities (\ref{low-dim ineq}) and (\ref{lip ineq}). As a consequence, one can check that, for the Dudley metric,
\[
\bE [d_{\cH^1}(\mu,(g^*_n)_\# \nu)] - \epsilon_{opt} - 2 V^{1/2} \precsim (n^{-1/d^*}  \lor n^{-1/2}) \log n.
\]
This bound is useful only when the variance term $V^{1/2}$ is negligible, i.e. the data distribution is really low-dimensional. One can regard the variance as a ``measure'' of how well the low-dimension assumption is fulfilled. It is numerically confirmed that several well-known real data have small intrinsic dimensions, while their nominal dimensions are very large \citep{nakada2020adaptive}.
}
\end{remark}

\subsection{Learning Distributions with Densities}

When the target distribution $\mu$ has a density function $p_\mu\in \cH^\alpha([0,1]^d)$, it was proved in \citet{liang2021how,singh2018nonparametric} that the minimax convergence rates of nonparametric density estimation satisfy
\[
\inf_{\widetilde{\mu}_n} \sup_{p_\mu \in \cH^\alpha([0,1]^d)} \bE d_{\cH^\beta([0,1]^d)}(\mu, \widetilde{\mu}_n) \asymp n^{-(\alpha+\beta)/(2\alpha +d)} \lor n^{-1/2},
\]
where the infimum is taken over all estimator $\widetilde{\mu}_n$ with density $p_{\widetilde{\mu}_n} \in \cH^\alpha([0,1]^d)$ based on $n$ i.i.d. samples $\{X_i\}_{i=1}^n$ of $\mu$. Ignoring the logarithmic factor, Theorem \ref{main theorem} gives the same convergence rate with $\alpha=0$, which reveals the optimality of the result (since we do not assume the target has density in Theorem \ref{main theorem}).

Under a priori that $p_\mu\in \cH^\alpha$ for some $\alpha>0$, it is not possible for the GAN estimators (\ref{gan estimator g_n}) and (\ref{gan estimator g_nm}) to learn the regularity of the target, because the empirical distribution $\widehat{\mu}_n$ do not inherit the regularity. However, we can use certain regularized empirical distribution $\widetilde{\mu}_n$ as the plug-in for GANs and consider the estimators
\begin{align}
\widetilde{g}^*_n &\in \left\{g\in \cG: d_\cF(\widetilde{\mu}_n, g_\# \nu) \le \inf_{\phi\in \cG} d_\cF(\widetilde{\mu}_n, \phi_\# \nu) + \epsilon_{opt} \right\}, \label{regularized gan estimator g_n} \\
\widetilde{g}^*_{n,m} &\in \left\{g\in \cG: d_\cF(\widetilde{\mu}_n, g_\# \widehat{\nu}_m) \le \inf_{\phi\in \cG} d_\cF(\widetilde{\mu}_n, \phi_\# \widehat{\nu}_m) + \epsilon_{opt} \right\}. \label{regularized gan estimator g_nm}
\end{align}
By choosing the regularized distribution $\widetilde{\mu}_n$, the generator $\cG$ and the discriminator $\cF$ properly, we show that $\widetilde{g}^*_n$ and $\widetilde{g}^*_{n,m}$ can achieve faster convergence rates than the GAN estimators (\ref{gan estimator g_n}) and (\ref{gan estimator g_nm}), which use the empirical distribution $\widehat{\mu}_n$ as the plug-in. The result can be seen as a complement to the nonparametric results in \citep[Theorem 3]{liang2021how}.

\begin{theorem}\label{learning density}
Suppose the target $\mu$ has a density function $p_\mu \in \cH^\alpha([0,1]^d)$ for some $\alpha>0$, the source distribution $\nu$ is absolutely continuous on $\bR$ and the evaluation class is $\cH = \cH^\beta(\bR^d)$. Then, there exist a regularized empirical distribution $\widetilde{\mu}_n$ with density $p_{\widetilde{\mu}_n} \in \cH^\alpha([0,1]^d)$, a generator $\cG = \{g\in \cN\cN(W_1,L_1): g(\bR) \subseteq [0,1]^d \}$ with
\[
W_1^2L_1 \precsim n^{\frac{\alpha+\beta}{2\alpha+d} \frac{d+\beta +\sigma(2\beta-2)}{\beta}d},
\]
and a discriminator $\cF = \cN\cN(W_2,L_2) \cap \Lip(\bR^d, K,1)$ with
\[
W_2/ \log_2 W_2 \precsim n^{\frac{\alpha+\beta}{2\alpha+d} \frac{d}{2\beta}}, \quad L_2 \asymp 1, \quad K \precsim (W_2/\log_2 W_2)^{2+\sigma(4\beta-4)/d} \precsim n^{\frac{\alpha+\beta}{2\alpha+d} \frac{d+ \sigma(2\beta -2)}{\beta}},
\]
such that the GAN estimator (\ref{regularized gan estimator g_n}) satisfies
\[
\bE [d_\cH(\mu,(\widetilde{g}^*_n)_\# \nu)] - \epsilon_{opt} \precsim n^{-(\alpha+\beta)/(2\alpha +d)} \lor n^{-1/2}.
\]

If furthermore $m \succsim n^{\frac{2\alpha+2\beta}{2\alpha+d} (\frac{d +\beta +\sigma(2\beta-2)}{\beta}d+1)} \log^2 n$, then the GAN estimator (\ref{regularized gan estimator g_nm}) satisfies
\[
\bE [d_\cH(\mu,(\widetilde{g}^*_{n,m})_\# \nu)] - \epsilon_{opt} \precsim n^{-(\alpha+\beta)/(2\alpha +d)} \lor n^{-1/2}.
\]
\end{theorem}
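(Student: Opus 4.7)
The plan is to mirror the proof of Theorem \ref{main theorem}, with two essential modifications: replace the empirical measure $\widehat{\mu}_n$ by a smoothed plug-in $\widetilde{\mu}_n$ whose density lies in $\cH^\alpha([0,1]^d)$, and, because $\widetilde{\mu}_n$ is no longer a discrete measure, control the generator approximation error by a quantization argument rather than relying on it being zero.

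First, I would construct $\widetilde{\mu}_n$ as a kernel density estimator (optionally composed with a projection onto $\cH^\alpha([0,1]^d)$) built from $\{X_i\}_{i=1}^n$ with a sufficiently smooth kernel and bandwidth $h \asymp n^{-1/(2\alpha+d)}$. Standard nonparametric arguments behind \citet[Theorem 3]{liang2021how} and \citet{singh2018nonparametric} then give $\bE[d_{\cH^\beta}(\mu,\widetilde{\mu}_n)] \precsim n^{-(\alpha+\beta)/(2\alpha+d)} \lor n^{-1/2}$, which will be the statistical error term. Repeating the proof of Lemma \ref{error decomposition} with $\widehat{\mu}_n$ replaced by $\widetilde{\mu}_n$ throughout yields
\[
d_\cH(\mu,(\widetilde{g}^*_n)_\# \nu) \le \epsilon_{opt} + 2\cE(\cH,\cF,[0,1]^d) + \inf_{g\in \cG} d_\cF(\widetilde{\mu}_n,g_\#\nu) + d_\cH(\mu,\widetilde{\mu}_n),
\]
with an extra $2\,d_{\cF\circ\cG}(\nu,\widehat{\nu}_m)$ for $\widetilde{g}^*_{n,m}$.

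The discriminator approximation error is controlled exactly as in the proof of Theorem \ref{main theorem}: Lemma \ref{discriminator approximation} with $L_2 \asymp 1$ and $\widetilde{W}_2 \asymp n^{\frac{\alpha+\beta}{2\alpha+d}\cdot\frac{d}{2\beta}}$ gives $\cE(\cH,\cF,[0,1]^d)\precsim \widetilde{W}_2^{-2\beta/d}\precsim n^{-(\alpha+\beta)/(2\alpha+d)}$, together with the stated bound $K\precsim \widetilde{W}_2^{2+\sigma(4\beta-4)/d}$. The genuinely new step is the generator approximation error. Since $\cF\subseteq\Lip(\bR^d,K)$ we have $\inf_{g\in\cG} d_\cF(\widetilde{\mu}_n,g_\#\nu) \le K\cdot \inf_{g\in\cG}\cW_1(\widetilde{\mu}_n,g_\#\nu)$. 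Quantizing $\widetilde{\mu}_n$ on a regular cubic grid of $N$ points in $[0,1]^d$ produces a discrete measure $\gamma_N\in\cP(N)$ with $\cW_1(\widetilde{\mu}_n,\gamma_N)\precsim N^{-1/d}$, and Lemma \ref{app discrete measure} shows that for any $\epsilon>0$ there exists $g\in\cG$ with $\cW_1(\gamma_N,g_\#\nu)<\epsilon$ provided $W_1^2 L_1 \succsim N$. Choosing $N\asymp K^d\, n^{d(\alpha+\beta)/(2\alpha+d)}$ matches exactly the stated size of $W_1^2 L_1$ and forces $K\cdot N^{-1/d}\precsim n^{-(\alpha+\beta)/(2\alpha+d)}$.

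For $\widetilde{g}^*_{n,m}$, I would bound the extra term $\bE[d_{\cF\circ\cG}(\nu,\widehat{\nu}_m)]$ via the pseudo-dimension estimate of \citet{bartlett2019nearly} applied to $\cF\circ\cG$, exactly as in the proof of Theorem \ref{main theorem}; plugging in the chosen $W_1,L_1,W_2,L_2$ and the stated lower bound on $m$ renders this term $\precsim n^{-(\alpha+\beta)/(2\alpha+d)}$. The main obstacle is the first step: producing a density estimator $\widetilde{\mu}_n$ that \emph{genuinely} takes values in $\cH^\alpha([0,1]^d)$ while simultaneously attaining the minimax rate under $d_{\cH^\beta}$ requires care, because a raw kernel estimator need not have its density in $\cH^\alpha$ and the projection onto $\cH^\alpha$ must be shown not to degrade the rate. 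Once this smoothed plug-in is in hand, the remainder of the proof is a careful balancing of generator, discriminator, and statistical errors as outlined above.
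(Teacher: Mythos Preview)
Your proposal is correct and follows essentially the same route as the paper. The quantization argument you describe for the generator approximation error is exactly the content of Corollary~\ref{app bounded measure} (which the paper invokes directly), and the obstacle you flag concerning the construction of $\widetilde{\mu}_n$ is handled in the paper simply by citing \citet{liang2021how} and \citet{singh2018nonparametric} for the existence of such an estimator, without spelling out the kernel-plus-projection details.
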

\begin{proof}
\citet{liang2021how} and \citet{singh2018nonparametric} showed the existence of regularized empirical distribution $\widetilde{\mu}_n$ with density $p_{\widetilde{\mu}_n} \in \cH^\alpha([0,1]^d)$ that satisfies
\[
\bE d_{\cH}(\mu, \widetilde{\mu}_n) \precsim n^{-(\alpha+\beta)/(2\alpha +d)} \lor n^{-1/2}.
\]
Similar to Lemma \ref{error decomposition}, we can decompose the error as (see Lemma \ref{general error decomposition})
\[
d_\cH(\mu,(\widetilde{g}^*_n)_\# \nu) \le \epsilon_{opt} + 2\cE(\cH,\cF,[0,1]^d)  + \inf_{g \in \cG} d_\cF(\widetilde{\mu}_n,g_\# \nu) + d_\cH(\mu,\widetilde{\mu}_n).
\]
By Lemma \ref{discriminator approximation}, we can choose a discriminator $\cF$ that satisfies the condition in the theorem such that the discriminator approximation error can be bounded by
\[
\cE(\cH,\cF,[0,1]^d) \precsim (W_2L_2 / (\log W_2 \log L_2))^{-2\beta/d} \precsim n^{-(\alpha+\beta)/(2\alpha +d)}.
\]
For the generator approximation error, since $\cF \subseteq \Lip([0,1]^d,K)$,
\[
\inf_{g \in \cG} d_\cF(\widetilde{\mu}_n,g_\# \nu) \le K \inf_{g \in \cG} \cW_1(\widetilde{\mu}_n,g_\# \nu).
\]
It was shown in \citet{yang2022capacity} that (see also Corollary \ref{app bounded measure})
\[
\inf_{g \in \cG} \cW_1(\widetilde{\mu}_n,g_\# \nu) \precsim (W_1^2L_1)^{-1/d}.
\]
Hence, there exists a generator $\cG$ with $W_1^2L_1 \asymp n^{\frac{\alpha+\beta}{2\alpha+d} \frac{d+\beta +\sigma(2\beta-2)}{\beta}d}$ such that
\[
\inf_{g \in \cG} d_\cF(\widetilde{\mu}_n,g_\# \nu) \precsim K (W_1^2L_1)^{-1/d} \precsim n^{-(\alpha+\beta)/(2\alpha +d)}.
\]
In summary, we have
\[
\bE d_\cH(\mu,(g^*_n)_\# \nu) - \epsilon_{opt} \precsim n^{-(\alpha+\beta)/(2\alpha +d)} \lor n^{-1/2}.
\]

For the estimator $\widetilde{g}^*_{n,m}$, we only need to further bound $d_{\cF \circ \cG}(\nu,\widehat{\nu}_m)$ due to Lemma \ref{general error decomposition}. By corollary \ref{statistical error bound by pdim}, we can bound it using the pseudo-dimension of $\cF \circ \cG$:
\begin{align*}
\bE [d_{\cF \circ \cG}(\nu,\widehat{\nu}_m)] &\precsim \sqrt{\frac{(W_1^2L_1+W_2^2L_2)(L_1+L_2)\log(W_1^2L_1+W_2^2L_2) \log m}{m}} \\
&\precsim \sqrt{\frac{(n^{\frac{\alpha+\beta}{2\alpha+d} \frac{d+\beta +\sigma(2\beta-2)}{\beta}d} + n^{\frac{\alpha+\beta}{2\alpha+d} \frac{d}{\beta}} \log n) n^{\frac{\alpha+\beta}{2\alpha+d} \frac{d+\beta +\sigma(2\beta-2)}{\beta}d} \log n \log m}{m}} \\
&\precsim n^{\frac{\alpha+\beta}{2\alpha+d} \frac{d+\beta +\sigma(2\beta-2)}{\beta}d} \sqrt{\frac{\log n \log m}{m}}.
\end{align*}
Since $m \succsim n^{\frac{2\alpha+2\beta}{2\alpha+d} (\frac{d +\beta +\sigma(2\beta-2)}{\beta}d+1)} \log^2 n$, we have $\bE [d_{\cF \circ \cG}(\nu,\widehat{\nu}_m)] \precsim n^{-(\alpha+\beta)/(2\alpha +d)}$, which completes the proof.
\end{proof}

As we noted in Remark \ref{proof essential remark}, the proof essentially shows that the convergence rates of $\widetilde{g}^*_n$ and $\widetilde{g}^*_{n,m}$ are not worse than the convergence rate of $\bE d_{\cH}(\mu, \widetilde{\mu}_n)$ if we choose the network architectures properly.

\subsection{Learning Distributions with Unbounded Supports}

So far, we have assumed that the target distribution has a compact support. In this section, we show how to generalize the results to target distributions with unbounded supports. For simplicity, we only consider the case when the target $\mu$ is sub-exponential in the sense that
\begin{equation}\label{tail condition}
\mu(\{x \in \bR^d: \|x\|_\infty > \log t \}) \precsim t^{-a/d},
\end{equation}
for some $a>0$. The basic idea is to truncate the target distribution and apply the error analysis to the truncated distribution.

\begin{theorem}\label{learning unbounded}
Suppose the target $\mu$ satisfies condition (\ref{tail condition}), the source distribution $\nu$ is absolutely continuous on $\bR$ and the evaluation class is $\cH = \cH^\beta(\bR^d)$. Then, there exist a generator $\cG = \{g\in \cN\cN(W_1,L_1): g(\bR) \subseteq [-\beta a^{-1} \log n, \beta a^{-1} \log n]^d \}$ with
\[
W_1^2L_1 \precsim n
\]
and a discriminator $\cF = \cN\cN(W_2,L_2) \cap \Lip(\bR^d, K,1)$ with
\[
W_2L_2 \precsim n^{1/2} \log^{2+d/2} n, \quad K \precsim (\widetilde{W}_2\widetilde{L}_2)^{2+\sigma(4\beta-4)/d} \widetilde{L}_2 2^{\widetilde{L}_2^2}(2\beta a^{-1}\log n)^{\beta-1},
\]
where $\widetilde{W}_2 = W_2/\log_2 W_2$ and $\widetilde{L}_2 = L_2/\log_2 L_2$, such that the GAN estimator (\ref{gan estimator g_n}) satisfies
\[
\bE [d_\cH(\mu,(g^*_n)_\# \nu)] - \epsilon_{opt} \precsim n^{-\beta/d} \lor n^{-1/2}\log^{c(\beta,d)} n,
\]
where $c(\beta,d)=1$ if $2\beta = d$, and $c(\beta,d)=0$ otherwise.

If furthermore $m\succsim n^{2+2\beta/d}\log^{6+d} n$, then the GAN estimator (\ref{gan estimator g_nm}) satisfies
\[
\bE [d_\cH(\mu,(g^*_{n,m})_\# \nu)] - \epsilon_{opt} \precsim n^{-\beta/d} \lor n^{-1/2}\log^{c(\beta,d)} n.
\]
\end{theorem}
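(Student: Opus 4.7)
The plan is to reduce the unbounded-support problem to the compactly supported case of Theorem \ref{main theorem} by truncation. Set $T := \beta a^{-1} \log n$, let $\pi_T:\bR^d \to [-T,T]^d$ denote coordinatewise clipping, and define $\mu_T := (\pi_T)_\# \mu$. First I would split
\[
d_\cH(\mu, (g^*_n)_\# \nu) \le d_\cH(\mu, \mu_T) + d_\cH(\mu_T, (g^*_n)_\# \nu).
\]
For the truncation term, every $h \in \cH^\beta(\bR^d)$ satisfies $\|h\|_\infty \le 1$ (take $\alpha = 0$ in the definition), so $h(X) - h(\pi_T(X))$ vanishes on $\{\|X\|_\infty \le T\}$ and is bounded by $2$ elsewhere; combined with the tail assumption \eqref{tail condition} at $t = e^T$, this yields $d_\cH(\mu, \mu_T) \le 2\mu(\|X\|_\infty > T) \precsim e^{-aT/d} = n^{-\beta/d}$, which already matches the target rate.

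For the second term, both measures live on $[-T,T]^d$, so I would mimic the proof of Theorem \ref{main theorem} on this larger cube. The discriminator approximation error $\cE(\cH,\cF,[-T,T]^d)$ is handled by a rescaled version of Lemma \ref{discriminator approximation}: the affine change of variables $y = 2Tx - T$ sends $[0,1]^d$ to $[-T,T]^d$, and a direct computation shows that $\tilde h(x) := h(2Tx - T)/(2T)^\beta$ lies in $\cH^\beta([0,1]^d)$ for every $h \in \cH^\beta(\bR^d)$ once $2T \ge 1$. Approximating $\tilde h$ by $\phi_0 \in \cN\cN$ with sup-error $\epsilon_0 \precsim (WL)^{-2\beta/d}$ and defining $\phi(y) := (2T)^\beta \phi_0((y+T)/(2T))$, followed by a small clipping layer to enforce $\|\phi\|_\infty \le 1$, gives $\|\phi - h\|_{L^\infty([-T,T]^d)} \precsim (2T)^\beta \epsilon_0$ and a Lipschitz constant on $\bR^d$ of order $(2T)^{\beta-1}\Lip\phi_0$ by the chain rule. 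Absorbing the extra $(2T)^\beta = \cO(\log^\beta n)$ factor in the error forces enlarging the net by $(2T)^{d/2} \asymp \log^{d/2} n$, which yields the stated $W_2L_2 \precsim n^{1/2}\log^{2+d/2} n$ and the advertised $(2\beta a^{-1}\log n)^{\beta-1}$ factor in $K$.

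On the generator side, Lemma \ref{app discrete measure} applied with the convex set $\cC = [-T,T]^d$ and $W_1^2 L_1 \succsim n$ gives $\inf_{g \in \cG}\cW_1(\widehat{\mu}_n^T, g_\#\nu) = 0$, where $\widehat{\mu}_n^T := n^{-1}\sum_i \delta_{\pi_T(X_i)}$; combined with $\cF \subseteq \Lip([-T,T]^d,K)$ this gives $\inf_{g\in\cG} d_\cF(\widehat{\mu}_n^T, g_\#\nu) = 0$. To connect $\widehat{\mu}_n^T$ to the empirical measure $\widehat{\mu}_n$ actually used by $g^*_n$, I use that $f \circ \pi_T$ agrees with $f$ on $[-T,T]^d$ and $\|f\|_\infty \le 1$, giving $d_\cF(\widehat{\mu}_n,\widehat{\mu}_n^T) \le (2/n)\sum_i \mathbf{1}_{\|X_i\|_\infty > T}$ with expectation $\precsim n^{-\beta/d}$. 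The statistical error $\bE d_\cH(\mu_T, \widehat{\mu}_n^T)$ is controlled via Lemma \ref{statistical error bound} and the H\"older covering bound on $[-T,T]^d$; since $T$ is only logarithmic in $n$, the entropy integral calculation parallel to \eqref{holder complexity} preserves the leading order $n^{-\beta/d} \lor n^{-1/2}\log^{c(\beta,d)} n$. Assembling the pieces through a truncated variant of Lemma \ref{error decomposition} (with $\widehat{\mu}_n^T$ in the role of $\widehat{\mu}_n$ and an extra $d_\cF(\widehat{\mu}_n,\widehat{\mu}_n^T)$ term) yields the claimed bound for $g^*_n$. For $g^*_{n,m}$, the additional $\bE d_{\cF \circ \cG}(\nu, \widehat{\nu}_m)$ is absorbed by the pseudo-dimension argument of Section \ref{sec2}; the now-larger discriminator width forces the stated $m \succsim n^{2+2\beta/d}\log^{6+d} n$.

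The hardest step will be the rescaled approximation in the second paragraph: the affine change of variable inflates all bounds in Lemma \ref{discriminator approximation} by explicit polynomial-in-$T$ factors, and one must track them carefully to obtain exactly the $(2T)^{\beta-1}$ factor in the Lipschitz constant without leaking any other prefactor into the final rate. A secondary technicality will be verifying that the $T$-dependence in the covering number of $\cH^\beta$ on $[-T,T]^d$ contributes only $\log^{c(\beta,d)} n$ in the entropy integral, which is what keeps the final rate coincident with the bounded-support case.
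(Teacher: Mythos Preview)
Your proposal is correct and follows essentially the same truncation-and-rescale strategy as the paper. The only cosmetic differences are that the paper truncates via $\cT_n\gamma := \gamma|_{A_n} + (1-\gamma(A_n))\delta_0$ rather than the clipping pushforward $(\pi_T)_\#$, and handles the statistical term by the triangle inequality $\bE d_\cH(\cT_n\mu,\widehat{\mu}_n)\le d_\cH(\cT_n\mu,\mu)+\bE d_\cH(\mu,\widehat{\mu}_n)$ instead of working with the truncated empirical measure on the cube; the discriminator rescaling, the generator argument, and the treatment of $g^*_{n,m}$ match yours.
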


\begin{proof}
Without loss of generality, we assume $a=1$ in (\ref{tail condition}). Denote $A_n= [-\beta \log n, \beta \log n]^d$, then $1-\mu(A_n) \precsim n^{-\beta/d}$ by (\ref{tail condition}). We define an operator $\cT_n : \cP(\bR^d) \to \cP(A_n)$ on the set $\cP(\bR^d)$ of all probability distributions on $\bR^d$ by
\[
\cT_n \gamma = \gamma|_{A_n} + (1-\gamma(A_n)) \delta_0, \quad \gamma \in \cP(\bR^d),
\]
where $\mu|_{A_n}$ is the restriction to $A_n$ and $\delta_0$ is the point measure on the zero vector. Since any function $h\in \cH$ is bounded $\|h\|_\infty \le 1$, we have
\begin{align*}
d_\cH(\mu, \cT_n \mu) &= \sup_{h\in \cH} \int_{\bR^d} h(x) d\mu(x) - \int_{\bR^d} h(x) d\cT_n\mu(x) \\
&= \sup_{h\in \cH} \int_{\bR^d \setminus A_n} h(x) d\mu(x) - (1-\mu(A_n)) h(0) \\
&\le 2 (1-\mu(A_n)) \precsim n^{-\beta/d}.
\end{align*}
As a consequence, by the triangle inequality,
\[
d_\cH(\mu,(g^*_n)_\# \nu) - d_\cH(\cT_n\mu,(g^*_n)_\# \nu) \le d_\cH(\mu, \cT_n \mu) \precsim n^{-\beta/d}.
\]
Since $\cT_n\mu$ and $g_\#\nu$ are supported on $A_n$ for all $g\in \cG$, by Lemma \ref{general error decomposition},
\[
d_\cH(\cT_n\mu,(g^*_n)_\# \nu) \le \epsilon_{opt} + 2\cE(\cH,\cF,A_n)  + \inf_{g \in \cG} d_\cF(\widehat{\mu}_n,g_\# \nu) + d_\cH(\cT_n\mu,\widehat{\mu}_n).
\]

For the discriminator approximation error, we need to approximate any function $h\in \cH^\beta(A_n)$. We can consider the function $\widetilde{h} \in \cH^\beta([0,1]^d)$ defined by
\[
\widetilde{h}(x) = \frac{1}{(2\beta \log n)^\beta} h(\beta \log n (2x-1)).
\]
By Lemma \ref{discriminator approximation}, there exists $\widetilde{\phi} \in \cN\cN(W_2,L_2-1) \cap \Lip(\bR^d, K/(2\beta \log n)^{\beta-1},1)$ such that $\| \widetilde{h} - \widetilde{\phi} \|_{L^\infty([0,1]^d)} \precsim (W_2L_2 / (\log W_2 \log L_2))^{-2\beta/d}$. Define
\begin{align*}
\phi_0 (x) &:= (2\beta \log n)^\beta \widetilde{\phi}\left( \tfrac{x}{2\beta \log n} + \tfrac{1}{2} \right), \\
\phi (x) &:= \min\{ \max\{\phi_0(x),-1\},1 \} = \sigma(\phi_0(x)+1) - \sigma(\phi_0(x)-1) -1,
\end{align*}
then $\phi \in \cN\cN(W_2,L_2) \cap \Lip(\bR^d, K,1)$ and
\[
\|h - \phi \|_{L^\infty(A_n)} \precsim (W_2L_2 / (\log W_2 \log L_2))^{-2\beta/d} \log^\beta n.
\]
This shows that, if we choose $W_2L_2 \asymp n^{1/2} \log^{2+d/2} n$,
\[
\cE(\cH,\cF,A_n) \precsim (W_2L_2 / (\log W_2 \log L_2))^{-2\beta/d} \log^\beta n \precsim n^{-\beta/d}.
\]
For the generator approximation error,
\[
\inf_{g \in \cG} d_\cF(\widehat{\mu}_n,g_\# \nu) \le d_\cF(\widehat{\mu}_n, \cT_n \widehat{\mu}_n) + \inf_{g \in \cG} d_\cF(\cT_n \widehat{\mu}_n,g_\# \nu).
\]
By Lemma \ref{app discrete measure}, we can choose a generator $\cG$ with $W_1^2L_1 \precsim n$ such that the last term vanishes. Since $\|f\|_\infty \le 1$ for any $f\in \cF$, we have
\[
\bE d_\cF(\widehat{\mu}_n, \cT_n \widehat{\mu}_n) \le \bE [2 \widehat{\mu}_n(\bR^d \setminus A_n)] = 2\bE \left[ \frac{1}{n} \sum_{i=1}^n 1_{\{X_i\notin A_n\}} \right] = 2 \mu(\bR^d \setminus A_n) \precsim n^{-\beta/d}.
\]
For the statistical error, by Lemma \ref{statistical error bound},
\[
\bE d_\cH(\cT_n\mu,\widehat{\mu}_n) \le d_\cH(\cT_n\mu,\mu) + \bE d_\cH(\mu,\widehat{\mu}_n) \precsim n^{-\beta/d} \lor n^{-1/2}\log^{c(\beta,d)} n.
\]

In summary, we have shown that
\[
\bE [d_\cH(\mu,(g^*_n)_\# \nu)] - \epsilon_{opt} \precsim n^{-\beta/d} \lor n^{-1/2}\log^{c(\beta,d)} n.
\]
The error bound for $g^*_{n,m}$ can be estimated in a similar way.  By Lemma \ref{general error decomposition}, we only need to further bound $\bE [d_{\cF \circ \cG}(\nu,\widehat{\nu}_m)]$, which can be done as in the proof of Theorem \ref{main theorem}.
\end{proof}
\begin{remark}
\textnormal{When $\beta=1$, $\cH^1 = \Lip(\bR^d,1,1)$, the metric $d_{\cH^1}$ is the Dudley metric. For the Wasserstein distance $\cW_1$, we let $A_n = [2a^{-1}\log n, 2a^{-1}\log n]^d$, then
\begin{align*}
\cW_1(\mu, \cT_n \mu) &= \sup_{\Lip h\le 1} \int_{\bR^d \setminus A_n} h(x)-h(0) d\mu(x) \le \int_{\bR^d \setminus A_n} \|x\|_2 d\mu(x) \\
&\le \sqrt{d} \bE[\|X\|_\infty 1_{\{X\notin A_n\}}] = \sqrt{d} \int_0^\infty \mu(\|X\|_\infty 1_{\{X\notin A_n\}} >t) dt \\
&\precsim \int_0^{2a^{-1}\log n} n^{-2/d} dt + \int_{2a^{-1}\log n}^\infty 2^{-at/d} dt \\
&\precsim n^{-2/d} \log n.
\end{align*}
If we choose the generator $\cG = \{g\in \cN\cN(W_1,L_1): g(\bR) \subseteq A_n \}$ and the discriminator $\cF = \cN\cN(W_2,L_2) \cap \Lip(\bR^d, K,2a^{-1} \sqrt{d}\log n)$ satisfying the conditions in Theorem \ref{learning unbounded} with $\beta =1$, one can show that
\[
\bE [\cW_1(\mu,(g^*_n)_\# \nu)] - \epsilon_{opt} \precsim n^{-1/d} \lor n^{-1/2}\log^{c(1,d)} n,
\]
where the same convergence rate holds for $\bE \cW_1(\mu,\widehat{\mu}_n)$ by \citet{fournier2015rate}. When $m$ is chosen properly, the same rate holds for the estimator $g^*_{n,m}$.
}
\end{remark}

\section{Discussion and Related Works}\label{sec4}

It is well-known that one-hidden-layer neural networks can approximate any continuous function on a compact set \citep{cybenko1989approximation,hornik1991approximation,pinkus1999approximation}. Recent breakthroughs of deep learning have motivated many studies on the approximation capacity of deep neural networks \citep{yarotsky2017error,yarotsky2018optimal,yarotsky2020phase,shen2019nonlinear,shen2020deep,lu2021deep,petersen2018optimal}. These works quantify the approximation error of deep ReLU networks in terms of the number of parameters or neurons. Our result on bounding discriminator approximation error uses ideas similar to those in these papers. An important feature of Lemma \ref{discriminator approximation} is that it gives an explicit bound on the Lipschitz constant required for approximating H\"older functions, which is new in the literature.

In contrast to the vast amount of studies on function approximation by neural networks, there are only a few papers estimating the generator approximation error \citep{lee2017ability,bailey2018size,perekrestenko2020constructive,lu2020universal,
chen2020statistical,yang2022capacity}. The existing studies often assume that the source distribution and the target distribution have the same ambient dimension \citep{lu2020universal,chen2020statistical} or the distributions have some special form \citep{lee2017ability,bailey2018size,perekrestenko2020constructive}. However, these assumptions are not satisfied in practical applications. Our analysis of generator approximation is based on \citet{yang2022capacity}, which has the minimal requirement on the source and the target distributions.

The generalization errors of GANs have been studied in several recent works. \citet{arora2017generalization} showed that, in general,  GANs do not generalize under the Wasserstein distance and the Jensen-Shannon divergence with any polynomial number of samples. Alternatively, they estimated the generalization bound under the ``neural net distance'', which is the IPM with respect to the discriminator network. \citet{zhang2018discrimination} improved the generalization bound in \citet{arora2017generalization} by explicitly quantifying the complexity of the discriminator network. However, these generalization theories make the assumption that the generator can approximate the data distribution well under the neural net distance, while the construction of such generator network is unknown. Also, the neural net distance is too weak that it can be small when two distributions are not very close \citep[corollary 3.2]{arora2017generalization}. In contrast, our results explicitly state the network architectures and provide convergence rates of GANs under the Wasserstein distance.

Similar to our results, \citet{bai2019approximability} showed that GANs are able to learn distributions in Wasserstein distance, if the
discriminator class has strong distinguishing power against the generator class. But their theory requires each layer of the neural network generator to be invertible, and hence the width of the generator has to be the same with the input dimension, which is not the usual practice in applications.  In contrast, we do not make any invertibility assumptions, and allow the discriminator and the generator networks to be wide. The work of \citet{chen2020statistical} is the most related to ours. They studied statistical properties of GANs and established convergence rate $\cO(n^{-\beta/(2\beta+d)}\log^2 n)$ for distributions with H\"older densities, when the evaluation class is another H\"older class $\cH^\beta$. Their estimation on generator approximation is based on the optimal transport theory, which requires that the input and the output dimensions of the generator to be the same. In this paper, we study the same problem as \citet{chen2020statistical} and improve the convergence rate to $\cO(n^{-\beta/d} \lor n^{-1/2} \log n)$ for general probability distributions without any restrictions on the input and the output dimensions of the generator. Furthermore,  our results circumvent the curse of dimensionality if the data distribution has a low-dimensional structure, and establish the convergence rate $\cO((n^{-\beta/d^*}\lor n^{-1/2})\log n)$ when the distribution concentrates around a set with Minkowski dimension $d^*$. The recent work of \citet{schreuder2021statistical} also consider learning low-dimensional distributions by GANs. However, in their setting, the data distribution is generated from some smooth function and their GAN estimators are defined by directly minimizing H\"older IPMs, rather than using a discriminator network. Hence, our results are more general and practical.

There is another line of work \citep{liang2021how,singh2018nonparametric,uppal2019nonparametric} concerning the non-parametric density estimation under IPMs. For example, \citet{liang2021how} and \citet{singh2018nonparametric} established the minimax optimal rate $\cO(n^{-(\alpha+\beta)/(2\alpha+d)} \lor n^{-1/2} )$ for learning a Sobolev class with smoothness index $\alpha>0$, when the evaluation class is another Sobolev class with smoothness $\beta$. \citet{uppal2019nonparametric} generalized the minimax rate to Besov IPMs, where both the target density and the evaluation classes  are Besov classes. Our main result matches this optimal rate with $\alpha=0$ without any assumption on the regularity of the data distribution. Theorem \ref{learning density} shows that GAN is able to achieve the optimal rate by using a suitable regularized empirical distribution.

As we noted in Remark \ref{regularized GAN}, the Lipschitz constraint on the discriminator network may be difficult to satisfy in practical applications. Several regularization techniques \citep{gulrajani2017improved,kodali2017convergence,petzka2018regularization,
wei2018improving,thanhtung2019improving} have been applied to GANs and shown to have good empirical performance. It is interesting to see how these regularization techniques affect the convergence rates of GANs. We leave this problem for the future studies.

Finally, we note that there is an optimization error term in our results of convergence rates. So, in order to estimate the full error of GANs used in practice, one also need to estimate the optimization error, which is still a very difficult problem at present. Fortunately, our error analysis is independent of the optimization, so it is possible to combine it with other analysis of optimization. In our main theorems, we give bounds on the network size so that GANs can achieve the optimal convergence rates of learning distributions. In practice, as the network size and sample size get larger, the training becomes more difficult and hence the optimization error may become larger. So there is a trade-off between the optimization error and the bounds derived in this paper. This trade-off can provide some guide on the choice of network size in practice.

\section{Proofs of Technical Lemmas} \label{sec5}

This section provides the proofs of technical lemmas used in the error analysis of GANs. We will first give a general error decomposition of the estimation error in Subsection \ref{sec: err decomp}, and then bound the generator approximation error in Subsection \ref{sec: gen err}, the discriminator approximation error in Subsection \ref{sec: dis err} and the statistical error in Subsection \ref{sec: stat err}.

\subsection{Error Decomposition}\label{sec: err decomp}

In this subsection, we prove the error decomposition Lemma \ref{error decomposition}. Before the proof, we introduce the following useful lemma, which states that for any two probability distributions, the difference in IPMs with respect to two distinct evaluation classes will not exceed two times the approximation error between the two evaluation classes.  Recall that, for any $\Omega\subseteq \bR^d$ and function classes $\cF$ and $\cH$ defined on $\Omega$, we denote
\[
\cE(\cH,\cF,\Omega) := \sup_{h\in \cH} \inf_{f\in \cF} \|h-f\|_{L^\infty(\Omega)}.
\]

\begin{lemma}\label{IPM comparision}
For any probability distributions $\mu$ and $\gamma$ supported on $\Omega\subseteq \bR^d$,
\[
d_\cH(\mu,\gamma) \le d_\cF(\mu,\gamma) + 2\cE(\cH,\cF,\Omega).
\]
\end{lemma}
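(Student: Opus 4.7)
The plan is to prove this by a standard add-and-subtract argument combined with taking appropriate infima and suprema. The key insight is that any function $h \in \cH$ can be approximated by some $f \in \cF$ up to the error $\cE(\cH,\cF,\Omega)$, and the difference in expectations of $h$ and $f$ under any probability measure is controlled by their $L^\infty(\Omega)$ distance (since both $\mu$ and $\gamma$ are supported on $\Omega$).

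Specifically, I would fix an arbitrary $h \in \cH$ and an arbitrary $\epsilon > 0$, and choose $f \in \cF$ such that $\|h - f\|_{L^\infty(\Omega)} \le \inf_{f' \in \cF} \|h-f'\|_{L^\infty(\Omega)} + \epsilon \le \cE(\cH,\cF,\Omega) + \epsilon$. Then I would write
\[
\bE_\mu[h] - \bE_\gamma[h] = \bE_\mu[h - f] + \bigl(\bE_\mu[f] - \bE_\gamma[f]\bigr) + \bE_\gamma[f - h].
\]
The first and third terms are each at most $\|h-f\|_{L^\infty(\Omega)}$ in absolute value because $\mu$ and $\gamma$ are supported on $\Omega$. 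The middle term is bounded above by $d_\cF(\mu,\gamma)$ by definition of the IPM, since $f \in \cF$ (here I use the symmetry is not needed; the one-sided bound follows directly from the definition $d_\cF(\mu,\gamma) = \sup_{f \in \cF}\bE_\mu[f] - \bE_\gamma[f]$).

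Combining, I get $\bE_\mu[h] - \bE_\gamma[h] \le 2\bigl(\cE(\cH,\cF,\Omega) + \epsilon\bigr) + d_\cF(\mu,\gamma)$. Taking the supremum over $h \in \cH$ and then sending $\epsilon \to 0$ yields the claimed bound $d_\cH(\mu,\gamma) \le d_\cF(\mu,\gamma) + 2\cE(\cH,\cF,\Omega)$.

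I do not anticipate any real obstacle here; the only mild subtlety is that $\cE(\cH,\cF,\Omega)$ involves an infimum that may not be attained, which is why I introduce the $\epsilon$-slack above and pass to the limit at the end. The lemma is essentially a triangle-inequality argument tailored to the IPM framework.
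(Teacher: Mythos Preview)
Your proof is correct and follows essentially the same add-and-subtract argument as the paper: fix $h\in\cH$, approximate it by some $f\in\cF$ up to $\cE(\cH,\cF,\Omega)+\epsilon$, bound $\bE_\mu[h-f]$ and $\bE_\gamma[f-h]$ by the $L^\infty(\Omega)$ error using that $\mu,\gamma$ are supported on $\Omega$, and bound the remaining $\bE_\mu[f]-\bE_\gamma[f]$ by $d_\cF(\mu,\gamma)$. The only cosmetic difference is that the paper also introduces an $\epsilon$-slack for the supremum over $h$ (so ends with $3\epsilon$ rather than $2\epsilon$), whereas you take the supremum over $h$ directly; both routes are equivalent.
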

\begin{proof}
For any $\epsilon>0$, there exists $h_\epsilon\in \cH$ such that
\[
d_\cH(\mu,\gamma) =  \sup_{h\in \cH} \{ \bE_\mu [h] - \bE_\gamma [h] \} \le \bE_\mu [h_\epsilon] - \bE_\gamma [h_\epsilon] +\epsilon.
\]
Choose $f_\epsilon\in \cF$ such that $\|h_\epsilon-f_\epsilon\|_{L^\infty (\Omega)} \le \inf_{f\in \cF} \|h_\epsilon-f\|_{L^\infty (\Omega)} +\epsilon$, then
\begin{align*}
d_\cH(\mu,\gamma) \le& \bE_\mu [h_\epsilon-f_\epsilon] - \bE_\gamma [h_\epsilon-f_\epsilon]  + \bE_\mu [f_\epsilon] - \bE_\gamma [f_\epsilon] +\epsilon \\
\le& 2\|h_\epsilon-f_\epsilon\|_{L^\infty (\Omega)} + \bE_\mu [f_\epsilon] - \bE_\gamma [f_\epsilon] +\epsilon \\
\le& 2\inf_{f\in \cF} \|h_\epsilon-f\|_{L^\infty (\Omega)} + 2\epsilon + d_\cF(\mu,\gamma) + \epsilon \\
\le& 2\cE(\cH,\cF,\Omega) + d_\cF(\mu,\gamma) + 3\epsilon,
\end{align*}
where we use the assumption that $\mu$ and $\gamma$ are supported on $\Omega$ in the second inequality, and use the definition of IPM $d_\cF$ in the third inequality. Letting $\epsilon \to 0$, we get the desired result.
\end{proof}

The next lemma gives an error decomposition of GAN estimators associated with an estimator $\widetilde{\mu}_n$ of the target distribution $\mu$. Lemma \ref{error decomposition} is a special case of this lemma with $\widetilde{\mu}_n = \widehat{\mu}_n$ being the empirical distribution. In the proof, we use two properties of IPM: the triangle inequality $d_\cF(\mu,\gamma) \le d_\cF(\mu,\tau) + d_\cF(\tau,\gamma)$ and, if $\cF$ is symmetric, then $d_\cF(\mu,\gamma) = d_\cF(\gamma,\mu)$. These properties can be proved easily using the definition.

\begin{lemma}\label{general error decomposition}
Assume $\cF$ is symmetric, $\mu$ and $g_\#\nu$ are supported on $\Omega\subseteq \bR^d$ for all $g\in \cG$. For any probability distribution $\widetilde{\mu}_n$ supported on $\Omega$, let $\widetilde{g}^*_n$ and $\widetilde{g}^*_{n,m}$ be the associated GAN estimators defined by
\begin{align*}
\widetilde{g}^*_n &\in \left\{g\in \cG: d_\cF(\widetilde{\mu}_n, g_\# \nu) \le \inf_{\phi\in \cG} d_\cF(\widetilde{\mu}_n, \phi_\# \nu) + \epsilon_{opt} \right\},\\
\widetilde{g}^*_{n,m} &\in \left\{g\in \cG: d_\cF(\widetilde{\mu}_n, g_\# \widehat{\nu}_m) \le \inf_{\phi\in \cG} d_\cF(\widetilde{\mu}_n, \phi_\# \widehat{\nu}_m) + \epsilon_{opt} \right\}.
\end{align*}
Then, for any function class $\cH$ defined on $\Omega$,
\begin{align*}
d_\cH(\mu,(\widetilde{g}^*_n)_\# \nu) &\le \epsilon_{opt} + 2\cE(\cH,\cF,\Omega)  + \inf_{g \in \cG} d_\cF(\widetilde{\mu}_n,g_\# \nu) + d_\cF(\mu,\widetilde{\mu}_n) \land d_\cH(\mu,\widetilde{\mu}_n), \\
d_\cH(\mu,(\widetilde{g}^*_{n,m})_\# \nu) &\le \epsilon_{opt} + 2\cE(\cH,\cF,\Omega)  + \inf_{g \in \cG} d_\cF(\widetilde{\mu}_n,g_\# \nu) + d_\cF(\mu,\widetilde{\mu}_n) \land d_\cH(\mu,\widetilde{\mu}_n) \\
& \quad + 2d_{\cF \circ \cG}(\nu,\widehat{\nu}_m).
\end{align*}
\end{lemma}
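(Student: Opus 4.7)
The plan is to combine Lemma \ref{IPM comparision}, which exchanges $d_\cH$ for $d_\cF$ at the additive cost $2\cE(\cH,\cF,\Omega)$, with the triangle inequality for IPMs and the near-optimality built into the definition of the GAN estimators. To produce the $d_\cF(\mu,\widetilde{\mu}_n) \land d_\cH(\mu,\widetilde{\mu}_n)$ term, I will run two parallel chains (one staying in $d_\cF$ as long as possible, one passing to $d_\cH$ earlier) and take the minimum of the resulting bounds.

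For $\widetilde{g}^*_n$, the first chain applies Lemma \ref{IPM comparision} immediately to get $d_\cH(\mu,(\widetilde{g}^*_n)_\# \nu) \le d_\cF(\mu,(\widetilde{g}^*_n)_\# \nu) + 2\cE(\cH,\cF,\Omega)$, then uses the triangle inequality for $d_\cF$ through $\widetilde{\mu}_n$, and finally replaces $d_\cF(\widetilde{\mu}_n,(\widetilde{g}^*_n)_\# \nu)$ by $\inf_{g\in\cG} d_\cF(\widetilde{\mu}_n,g_\# \nu) + \epsilon_{opt}$ using the defining relation of $\widetilde{g}^*_n$. The second chain instead first applies the triangle inequality for $d_\cH$ to write $d_\cH(\mu,(\widetilde{g}^*_n)_\# \nu) \le d_\cH(\mu,\widetilde{\mu}_n) + d_\cH(\widetilde{\mu}_n,(\widetilde{g}^*_n)_\# \nu)$, then bounds the second summand via Lemma \ref{IPM comparision} (both $\widetilde{\mu}_n$ and $(\widetilde{g}^*_n)_\# \nu$ are supported on $\Omega$) and the optimality of $\widetilde{g}^*_n$. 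Taking the minimum of the two bounds gives the first claim.

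For $\widetilde{g}^*_{n,m}$, the same template applies, with an extra step to interpolate between $g_\# \nu$ and $g_\# \widehat{\nu}_m$. The key observation is that for any $g\in\cG$, substituting $h = f\circ g$ in the supremum yields $d_\cF(g_\# \nu, g_\# \widehat{\nu}_m) \le d_{\cF \circ \cG}(\nu,\widehat{\nu}_m)$; moreover, since $\cF$ is symmetric so is $\cF\circ\cG$, which gives the reverse-direction bound $d_\cF(g_\# \widehat{\nu}_m, g_\# \nu) \le d_{\cF \circ \cG}(\nu,\widehat{\nu}_m)$ as well. I will then chain $\mu \to \widetilde{\mu}_n \to (\widetilde{g}^*_{n,m})_\# \widehat{\nu}_m \to (\widetilde{g}^*_{n,m})_\# \nu$ in $d_\cF$, use the defining optimality of $\widetilde{g}^*_{n,m}$ to control the middle leg by $\inf_{g\in\cG} d_\cF(\widetilde{\mu}_n, g_\# \widehat{\nu}_m) + \epsilon_{opt}$, and pay one $d_{\cF\circ\cG}(\nu,\widehat{\nu}_m)$ for swapping $\widehat{\nu}_m$ for $\nu$ inside the infimum and another for the last leg, producing the factor $2$ in the additional term. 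The second branch (yielding $d_\cH(\mu,\widetilde{\mu}_n)$) proceeds analogously.

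The one real subtlety to guard against is the direction of $d_\cF$: since it is not symmetric a priori, the bound $d_\cF(g_\# \widehat{\nu}_m, g_\# \nu) \le d_{\cF\circ\cG}(\nu,\widehat{\nu}_m)$ genuinely requires the symmetry assumption on $\cF$, and without it one of the two $d_{\cF\circ\cG}$ contributions could not be absorbed. Everything else is a bookkeeping exercise with triangle inequalities and the definitions of $\widetilde{g}^*_n$ and $\widetilde{g}^*_{n,m}$, so no deeper argument is needed.
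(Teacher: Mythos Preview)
Your proposal is correct and mirrors the paper's own proof: both derive the inequality \eqref{error decomposition inequality} by running the two parallel chains (Lemma~\ref{IPM comparision} then triangle, versus triangle then Lemma~\ref{IPM comparision}) and taking the minimum, then plug in $g=\widetilde{g}^*_n$; for $\widetilde{g}^*_{n,m}$ the paper likewise bounds $d_\cF(\widetilde{\mu}_n,(\widetilde{g}^*_{n,m})_\#\nu)$ by inserting $(\widetilde{g}^*_{n,m})_\#\widehat{\nu}_m$, using optimality, and paying $d_{\cF\circ\cG}(\nu,\widehat{\nu}_m)$ twice exactly as you describe. Your remark that the symmetry of $\cF$ is what makes $d_{\cF\circ\cG}$ symmetric (hence allows both directions of the $\nu\leftrightarrow\widehat{\nu}_m$ swap to be absorbed into the same term) is precisely the point.
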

\begin{proof}
By lemma \ref{IPM comparision} and the triangle inequality, for any $g\in \cG$,
\begin{align*}
d_\cH(\mu,g_\# \nu) &\le 2\cE(\cH,\cF,\Omega) + d_\cF(\mu, g_\# \nu) \\
&\le 2\cE(\cH,\cF,\Omega) + d_\cF(\mu,\widetilde{\mu}_n) + d_\cF(\widetilde{\mu}_n,g_\# \nu).
\end{align*}
Alternatively, we can apply the triangle inequality first and then use lemma \ref{IPM comparision}:
\begin{align*}
d_\cH(\mu,g_\# \nu) &\le d_\cH(\mu,\widetilde{\mu}_n) + d_\cH(\widetilde{\mu}_n,g_\# \nu) \\
&\le d_\cH(\mu,\widetilde{\mu}_n) + d_\cF(\widetilde{\mu}_n,g_\# \nu) + 2\cE(\cH,\cF,\Omega).
\end{align*}
Combining these two bounds, we have
\begin{equation}\label{error decomposition inequality}
d_\cH(\mu,g_\# \nu) \le 2\cE(\cH,\cF,\Omega)  + d_\cF(\widetilde{\mu}_n,g_\# \nu) + d_\cF(\mu,\widetilde{\mu}_n) \land d_\cH(\mu,\widetilde{\mu}_n).
\end{equation}
Letting $g=\widetilde{g}^*_n$ and observing that $d_\cF(\widetilde{\mu}_n,(\widetilde{g}^*_n)_\# \nu) \le \inf_{g \in \cG} d_\cF(\widetilde{\mu}_n,g_\# \nu) + \epsilon_{opt}$, we get the bound for $d_\cH(\mu,(\widetilde{g}^*_n)_\# \nu)$.

For $\widetilde{g}^*_{n,m}$, we only need to bound
$ d_\cF(\widetilde{\mu}_n,(\widetilde{g}^*_{n,m})_\# \nu)$. By the triangle inequality,
\[
d_\cF(\widetilde{\mu}_n,(\widetilde{g}^*_{n,m})_\# \nu) \le d_\cF(\widetilde{\mu}_n,(\widetilde{g}^*_{n,m})_\# \widehat{\nu}_m) + d_\cF((\widetilde{g}^*_{n,m})_\# \widehat{\nu}_m,(\widetilde{g}^*_{n,m})_\# \nu).
\]
By the definition of IPM, the last term can be bounded as
\[
d_\cF((\widetilde{g}^*_{n,m})_\# \widehat{\nu}_m,(\widetilde{g}^*_{n,m})_\# \nu) \le d_{\cF \circ \cG}(\widehat{\nu}_m,\nu).
\]
By the definition of $\widetilde{g}^*_{n,m}$ and the triangle inequality, we have, for any $g\in \cG$,
\begin{align*}
d_\cF(\widetilde{\mu}_n,(\widetilde{g}^*_{n,m})_\# \widehat{\nu}_m) -\epsilon_{opt} &\le d_\cF(\widetilde{\mu}_n,g_\# \widehat{\nu}_m) \le  d_\cF(\widetilde{\mu}_n,g_\# \nu) + d_\cF(g_\# \nu, g_\# \widehat{\nu}_m) \\ &\le d_\cF(\widetilde{\mu}_n,g_\# \nu) + d_{\cF \circ \cG}(\nu,\widehat{\nu}_m).
\end{align*}
Taking infimum over all $g\in \cG$, we have
\[
d_\cF(\widetilde{\mu}_n,(\widetilde{g}^*_{n,m})_\# \widehat{\nu}_m) \le \epsilon_{opt} + \inf_{g \in \cG} d_\cF(\widetilde{\mu}_n,g_\# \nu) + d_{\cF \circ \cG}(\nu,\widehat{\nu}_m).
\]
Therefore,
\[
d_\cF(\widetilde{\mu}_n,(\widetilde{g}^*_{n,m})_\# \nu) \le \epsilon_{opt} + \inf_{g \in \cG} d_\cF(\widetilde{\mu}_n,g_\# \nu) + 2d_{\cF \circ \cG}(\nu,\widehat{\nu}_m).
\]
Combining this with the inequality (\ref{error decomposition inequality}), we get the bound for $d_\cH(\mu,(\widetilde{g}^*_{n,m})_\# \nu)$.
\end{proof}

\subsection{Bounding Generator Approximation Error} \label{sec: gen err}

For completeness, we sketch the proof of Lemma \ref{app discrete measure}, whose detailed proof can be found in \citet{yang2022capacity}. The proof is essentially based on the fact that ReLU neural networks can express any piece-wise linear functions. The following lemma is a quantified description of this fact.

\begin{lemma}[\citet{yang2022capacity}, Lemma 3.1]\label{linear interpolation Rd}
Suppose that $W\ge 7d+1$, $L\ge 2$ and $N\le (W-d-1)\lfloor \frac{W-d-1}{6d} \rfloor \lfloor\frac{L}{2}\rfloor$. For any $z_0<z_1<\dots<z_N<z_{N+1}$, let $\cS^d(z_0,\dots,z_{N+1})$ be the set of all continuous piece-wise linear functions $g:\bR \to \bR^d$ which have breakpoints only at $z_i$, $0\le i\le N+1$, and are constant on $(-\infty,z_0)$ and $(z_{N+1},\infty)$. Then $\cS^d(z_0,\dots,z_{N+1}) \subseteq \cN\cN(W,L)$.
\end{lemma}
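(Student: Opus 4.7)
The plan is to prove the lemma by an explicit layer-by-layer construction. First, I would expand any $g = (g_1,\ldots,g_d) \in \cS^d(z_0,\ldots,z_{N+1})$ into its atomic form $g_i(x) = c_i + \sum_{j=0}^{N+1} \alpha_{i,j}\,\sigma(x - z_j)$, where $\alpha_{i,j}$ is the slope change of $g_i$ at $z_j$ and the linear constraint $\sum_j \alpha_{i,j} = 0$ enforces the required constant right tail (the left tail is automatic, since $\sigma(x-z_j)=0$ whenever $x \le z_0 \le z_j$). The problem then reduces to implementing this weighted sum of ReLU atoms, distributed over $d$ output coordinates, by a network of width at most $W$ and depth at most $L$.

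Next, I would partition the $L$ layers into $\lfloor L/2\rfloor$ two-layer blocks and, in every hidden layer, reserve $d+1$ channels for a \emph{carried state} $(x, h_1,\ldots,h_d)$, where each $h_i$ is the running partial sum approximating $g_i - c_i$. Using the standard positive/negative-part encoding, these state channels can be propagated through each intermediate ReLU without change, and the final affine layer simply reads them out and adds the constants $c_i$. This leaves $W-d-1$ ``free'' neurons per layer, which are the only resources used to install new slope-change contributions into the partial sums.

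Inside a single two-layer block, the first layer uses its free neurons to produce $W-d-1$ anchor atoms $\sigma(x - z_{j_r})$ at chosen breakpoint positions $z_{j_r}$. The second layer uses its own $W-d-1$ free neurons to compute output features of the form $\sigma$ of a linear combination of these anchors, so each output feature is a piecewise-linear function of $x$ whose breakpoints are drawn from the anchor set, and whose number of active breakpoints can be tuned up to $\lfloor (W-d-1)/(6d)\rfloor$. When the $W-d-1$ output features of the block are then read linearly into the $d$ state coordinates via the first affine map of the next block (with the positive/negative split absorbing signs), together they install up to $(W-d-1)\lfloor (W-d-1)/(6d)\rfloor$ new slope changes into the aggregate $g$. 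Iterating across $\lfloor L/2\rfloor$ blocks covers all $N$ breakpoints under the stated hypothesis, and the width per layer is bounded by the $d+1$ state channels plus the $W-d-1$ free neurons, which is $\le W$.

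The main obstacle is obtaining the \emph{quadratic}-in-$W$ count $(W-d-1)\lfloor (W-d-1)/(6d)\rfloor$ breakpoints per two-layer block rather than the naive linear count $W-d-1$ that one gets by treating each free neuron as adding a single atom. This forces one to exploit compositional depth already within a single block: layer one creates anchor positions and layer two emits several distinct piecewise-linear output features that share these anchors. The factor $6d$ in the denominator arises from the per-coordinate overhead of maintaining the positive/negative decomposition of each of the $d$ partial sums and passing each block's signed contribution through an intermediate ReLU before it is fused into the state; verifying that exactly $\lfloor (W-d-1)/(6d)\rfloor$ breakpoints per output feature can be realized simultaneously across all $d$ coordinates, without exceeding the width budget, is the combinatorially delicate step of the proof.
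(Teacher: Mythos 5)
The paper does not actually prove this lemma: it is imported verbatim from \citet[Lemma 3.1]{yang2022capacity} (whose construction extends \citet{daubechies2021nonlinear}), so your attempt has to be judged against that construction. Measured that way, the central counting step of your sketch is wrong. In your two-layer block the first layer creates $W-d-1$ anchor atoms $\sigma(x-z_{j_r})$ and the second layer emits features ``whose breakpoints are drawn from the anchor set''; but then every linear read-out of those features into the state is a piece-wise linear function whose breakpoints lie in that same anchor set of cardinality $W-d-1$. No matter how many features you form, one block can therefore install at most $W-d-1$ \emph{distinct} slope changes, which gives only $N \precsim (W-d-1)\lfloor L/2\rfloor$, not the required $(W-d-1)\lfloor\frac{W-d-1}{6d}\rfloor\lfloor\frac{L}{2}\rfloor$. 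The quadratic-in-$W$ count per block comes from the opposite arrangement: the first layer supplies only about $(W-d-1)/(6d)$ anchors, which serve as boundaries of a coarse partition into cells, and the roughly $W-d-1$ second-layer ReLUs each contribute one \emph{new} breakpoint per cell, namely the zero crossing of their pre-activation (which is linear inside each cell and hence crosses zero at a freely adjustable location, different for different neurons). Multiplying the number of second-layer neurons by the number of cells is what yields $(W-d-1)\lfloor\frac{W-d-1}{6d}\rfloor$ breakpoints per block; matching the prescribed slope changes of an arbitrary target cell by cell, without corrupting neighbouring cells, is exactly the delicate verification that you explicitly defer, so the one genuinely hard part of the lemma is both deferred and attached to a mechanism that cannot deliver the stated count.

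There is also a bookkeeping inconsistency in the carried state. You reserve $d+1$ channels for $(x,h_1,\dots,h_d)$ and invoke the positive/negative-part encoding to push them through intermediate ReLUs, but that encoding needs $2(d+1)$ channels; the partial sums $h_i$ are in general unbounded below (their right tails are linear until all atoms are installed), and $x$ itself ranges over all of $\bR$, so they cannot pass through a single ReLU unchanged. This is repairable — carry $\sigma(x-z_0)$ instead of $x$, and carry $h_i$ shifted by a large multiple of $\sigma(x-z_0)$ plus a constant chosen for the given $g$ (only representation is claimed, so weights may depend on $g$), undoing the shift in the final affine map — but as written your width budget of $W-d-1$ free neurons per layer is not actually available, and with $2(d+1)$ state channels the resulting bound would fall short of the hypothesis of the lemma. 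The first gap, however, is the essential one: without the zero-crossing mechanism (or an equivalent way of generating new breakpoint locations inside a block), the construction cannot reach $N\asymp W^2L/d$.
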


This lemma essentially says that $N \precsim W^2L/d$ is sufficient for $\cS^d(z_0,\dots,z_{N+1}) \subseteq \cN\cN(W,L)$. One can also show that it is also a necessary condition. To see this, we denote the number of parameters in $\cN\cN(W,L)$ by $n(W,L)\precsim W^2L$, and consider the function $F: \bR^{n(W,L)}\to \bR^{d(N+2)}$ defined by $F(\theta) := (f_\theta(z_0),\dots, f_\theta(z_{N+1}))$, where $f_\theta\in\cN\cN(W,L)$ denote the neural network function parameterized by $\theta \in \bR^{n(W,L)}$. Since $F$ is a piece-wise multivariate polynomial of $\theta$, it is Lipschitz continuous on any compact sets, hence it does not increase the Hausdorff dimension \citep[Theorem 2.8]{evans2018measure}. If $\cS^d(z_0,\dots,z_{N+1}) \subseteq \cN\cN(W,L)$, then $F$ is surjective, which implies $d(N+2) \le n(W,L)$. Thus, $N\precsim n(W,L)/d \precsim W^2L/d$ is necessary for $\cS^d(z_0,\dots,z_{N+1}) \subseteq \cN\cN(W,L)$.

Now, we sketch the proof of Lemma \ref{app discrete measure}. For any $\gamma \in \cP(n)$, we can assume $\gamma = \sum_{i=1}^{n} p_i \delta_{x_i}$ with $\sum_{i=1}^{n} p_i=1$, $p_i>0$ and $x_i\in \bR^d$. For any absolutely continuous probability measure $\nu$ on $\bR$, we can choose $2n-2$ points
\[
z_{3/2} < z_2 < z_{5/2} < \dots < z_{n-1/2}< z_n
\]
such that $\nu((z_i,z_{i+1/2})) \approx p_i$ for $1\le i\le n$, where we set $z_1=-\infty$ and $z_{n+1/2} = \infty$ for convenient. Then, we can construct a continuous piece-wise linear function $g$ such that $g(z) = x_i$ for $z\in (z_i,z_{i+1/2})$ and $g$ is linear on $(z_{i+1/2},z_{i+1})$. For such a function $g$, $g_\# \nu$ is supported on a union of line segments that pass through all $x_i$, and $g_\#\nu(\{x_i\}) \approx p_i$ for all $1\le i\le n$. Since $g\in \cS^d(z_{3/2}, z_2,\dots,z_n)$ with $2n-2\le (W-d-1)\lfloor \frac{W-d-1}{6d} \rfloor \lfloor\frac{L}{2}\rfloor$ breakpoints, Lemma \ref{linear interpolation Rd} tells us that $g\in \cN\cN(W,L)$. Using this construction, one can show that for any given $\epsilon>0$, there exists $g\in \cN\cN(W,L)$ such that
\[
\cW_1(\gamma,g_\#\nu) <\epsilon.
\]
Furthermore, in our construction, $g(\bR) = \cup_{i=1}^{n-1} g([z_i,z_{i+1}])$ is a union of line segments with endpoints $x_i$ and $x_{i+1}$. Hence, $g(\bR)$ must be contained in the convex hull of $\{x_i:1\le i\le n\}$. Thus, if the support of $\gamma$ is in a convex set $\cC$, $g$ can be chosen to satisfy $g(\bR)\subseteq \cC$.

Using Lemma \ref{app discrete measure}, we can also bound the generator approximation error of a distribution with bounded support.

\begin{corollary}\label{app bounded measure}
Let $\nu$ be an absolutely continuous probability distribution on $\bR$. Assume that $\mu$ is a probability distribution on $[0,1]^d$. Then, for any $W\ge 7d+1$ and $L\ge 2$, for the generator $\cG = \{g\in \cN\cN(W,L): g(\bR) \subseteq [0,1]^d \}$, one has
\[
\inf_{g\in \cG} \cW_1(\mu, g_\# \nu) \le C_d (W^2L)^{-1/d},
\]
where $C_d$ is a constant depending only on $d$.
\end{corollary}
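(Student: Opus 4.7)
The plan is to reduce the bounded-support case to the discrete case already handled by Lemma \ref{app discrete measure}, using a quantization step to approximate $\mu$ by an $n$-atomic distribution on $[0,1]^d$, and then invoking the triangle inequality for $\cW_1$.

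First I would construct a discrete approximation $\gamma_n \in \cP(n)$ supported in $[0,1]^d$ such that $\cW_1(\mu, \gamma_n) \le C_d n^{-1/d}$. The standard way to do this is to choose an integer $M$ with $M^d \le n$ (so that $M \asymp n^{1/d}$), partition $[0,1]^d$ into $M^d$ axis-aligned sub-cubes $\{Q_i\}$ of side length $1/M$, and define $\gamma_n = \sum_i \mu(Q_i)\,\delta_{x_i}$ where $x_i$ is the center (or any fixed point) of $Q_i$. Since every point of $Q_i$ lies within Euclidean distance $\sqrt{d}/(2M)$ of $x_i$, the coupling that transports mass from $x \in Q_i$ to $x_i$ shows $\cW_1(\mu,\gamma_n) \le \sqrt{d}/(2M) \le C_d n^{-1/d}$.

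Next I would apply Lemma \ref{app discrete measure} with the convex set $\cC = [0,1]^d$. The hypothesis of the lemma requires $n \le \tfrac{W-d-1}{2}\lfloor \tfrac{W-d-1}{6d}\rfloor \lfloor L/2 \rfloor + 2$, which for $W \ge 7d+1$ and $L\ge 2$ is satisfied as long as $n \le c_d W^2 L$ for an explicit dimensional constant $c_d$. Picking $n = \lfloor c_d W^2 L \rfloor$, the lemma supplies, for every $\epsilon > 0$, a network $g \in \cN\cN(W,L)$ with $g(\bR) \subseteq [0,1]^d$ (the last inclusion uses the convex-containment conclusion of the lemma applied to $\cC=[0,1]^d$) such that $\cW_1(\gamma_n, g_\#\nu) < \epsilon$.

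Finally I would combine these via the triangle inequality for $\cW_1$:
\[
\cW_1(\mu, g_\#\nu) \le \cW_1(\mu,\gamma_n) + \cW_1(\gamma_n, g_\#\nu) \le C_d n^{-1/d} + \epsilon.
\]
Letting $\epsilon \to 0$ and substituting $n \asymp W^2L$ gives $\inf_{g \in \cG}\cW_1(\mu, g_\#\nu) \le C_d(W^2L)^{-1/d}$, after redefining $C_d$. There is no serious obstacle here: the quantization step is elementary, and the mapping-into-a-convex-set property of Lemma \ref{app discrete measure} is exactly what is needed to guarantee $g(\bR)\subseteq[0,1]^d$, so no extra clipping layer is required. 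The only minor care is in tracking the constant $c_d$ so that the choice $n \asymp W^2L$ is admissible uniformly in $W,L$ (for $W\ge 7d+1$, $L\ge 2$).
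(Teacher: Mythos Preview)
Your proposal is correct and follows essentially the same approach as the paper: quantize $\mu$ by a grid partition of $[0,1]^d$ into $M^d$ sub-cubes to obtain a discrete $\gamma$ with $\cW_1(\mu,\gamma)\le \sqrt{d}/M$, then choose $M$ maximal so that $M^d$ fits the hypothesis of Lemma~\ref{app discrete measure}, apply that lemma with $\cC=[0,1]^d$, and finish with the triangle inequality. The only cosmetic differences are that the paper bounds $\cW_1(\mu,\gamma)$ via the IPM formulation rather than an explicit coupling, and chooses the grid parameter directly rather than through an intermediate $n$.
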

\begin{proof}
Given any $k\in \bN$, we denote $A_k := \{ (i_1,\dots,i_d)/k: i_j \in \bN, i_j \le k, j=1,2,\dots,d \}$, whose cardinality is $k^d$. It is easy to see that there exists a partition $[0,1]^d = \cup_{x_i \in A_k} Q_{x_i}$ such that for any $x_i \in A_k$ and $x\in Q_{x_i}$, $\|x-x_i\|_2 \le \sqrt{d}/k$. We consider the discrete distribution
\[
\gamma_k := \sum_{x_i \in A_k} \mu(Q_{x_i}) \delta_{x_i}.
\]
Then,
\begin{align*}
\cW_1(\mu,\gamma_k) &= \sup_{\Lip h\le 1} \int_{[0,1]^d} h(x) d\mu(x) - \sum_{x_i \in A_k} \mu(Q_{x_i}) h(x_i) \\
&= \sup_{\Lip h\le 1} \sum_{x_i \in A_k} \int_{Q_{x_i}} h(x) -h(x_i) d\mu(x) \\
&\le \sum_{x_i \in A_k} \mu(Q_{x_i}) \frac{\sqrt{d}}{k} = \frac{\sqrt{d}}{k}.
\end{align*}
For any $W\ge 7d+1$ and $L\ge 2$, we choose the largest $k\in \bN$ such that $k^d \le \frac{W-d-1}{2} \lfloor \frac{W-d-1}{6d} \rfloor \lfloor \frac{L}{2} \rfloor +2$, then by triangle inequality and Lemma \ref{app discrete measure},
\[
\inf_{g\in \cG} \cW_1(\mu, g_\# \nu) \le \cW_1(\mu,\gamma_k) + \inf_{g\in \cG} \cW_1(\gamma_k, g_\# \nu) \le \frac{\sqrt{d}}{k} \le C_d (W^2L)^{-1/d},
\]
for some constant $C_d$ depending only on $d$.
\end{proof}

\subsection{Bounding Discriminator Approximation Error} \label{sec: dis err}

This subsection considers the discriminator approximation error. Our goal is to construct a neural network to approximate a function $h\in \cH^\beta([0,1]^d)$ with $\beta = s+r\ge 1$, $s\in \bN_0$ and $r\in (0,1]$. The main idea is to approximate the Taylor expansion of $h$. By \citet[Lemma A.8]{petersen2018optimal}, for any $x, x_0\in [0,1]^d$,
\[
\left| h(x) - \sum_{\|\alpha\|_1 \le s} \frac{\partial^\alpha h(x_0)}{\alpha !} (x-x_0)^\alpha \right| \le d^s \|x-x_0\|_2^\beta.
\]
The approximation of the Taylor expansion can be divided into three parts:
\begin{itemize}
	\item Partition $[0,1]^d$ into small cubes $\cup_\theta Q_\theta$, and construct a network $\psi$ that approximately maps each $x\in Q_\theta$ to a fixed point $x_\theta \in Q_\theta$. Hence, $\psi$ approximately discretize $[0,1]^d$.
	
	\item For any $\alpha$, construct a network $\phi_\alpha$ that approximates the Taylor coefficient $x\in Q_\theta \mapsto \partial^\alpha h(x_\theta)$. Once $[0,1]^d$ is discretized, this approximation is reduced to a data fitting problem.
	
	\item Construct a network $P_\alpha(x)$ to approximate the monomial $x^\alpha$. In particular, we can construct a network $\phi_\times$ that approximates the product function.
\end{itemize}
Then our construction of neural network can be written in the form
\[
\phi(x) = \sum_{\|\alpha\|_1 \le s} \phi_\times \left( \frac{\phi_\alpha(x)}{\alpha !}, P_\alpha(x-\psi(x)) \right) .
\]
We collect the required preliminary results in next two subsections and give a proof of Lemma \ref{discriminator approximation} in Subsection \ref{subsec: proof of dis app}.

\subsubsection{Data Fitting}

Given any $N+2$ samples $\{(x_i,y_i) \in \bR^2:i=0,1,\dots,N+1\}$ with $x_0<x_1<\cdots<x_N<x_{N+1}$, there exists a unique piece-wise linear function $\phi$ that satisfies the following three condition
\begin{enumerate}
	\item $\phi(x_i)=y_i$ for $i=0,1,\dots,N+1$.
	
	\item $\phi$ is linear on each interval $[x_i,x_{i+1}]$, $i=0,1,\dots,N$
	
	\item $\phi(x) = y_0$ for $x\in (-\infty,x_0)$ and $\phi(x) = y_{N+1}$ for $x\in (x_{N+1},\infty)$.
\end{enumerate}
We say $\phi$ is the linear interpolation of the given samples. Note that for any $x\in \bR$,
\[
\min_{0\le i\le N+1} y_i \le \phi(x) \le \max_{0\le i\le N+1} y_i, \quad \mbox{and} \quad \Lip \phi \le \max_{0\le i\le N} \left| \frac{y_{i+1}-y_i}{x_{i+1}-x_i} \right|.
\]
The next lemma estimates the required size of network to interpolate the given samples. Note that this lemma is a special case of Lemma \ref{linear interpolation Rd}, which is from \citet[Lemma 3.1]{yang2022capacity} and \citet[lemma 3.4]{daubechies2021nonlinear}.

\begin{lemma}\label{linear interpolation}
For any $W \ge 6$, $L\in \bN$ and any samples $\{(x_i,y_i)\in \bR^2:i=0,1,\dots,N+1\}$ with $x_0<x_1<\cdots<x_N<x_{N+1}$, where $N\le \lfloor W/6 \rfloor WL$, the linear interpolation of these samples $\phi \in \cN\cN(W+2,2L)$.
\end{lemma}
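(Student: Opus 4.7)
My plan is to express the target piecewise linear function $\phi$ as a sum of shifted ReLU ramps and then implement this sum in $L$ blocks of depth two, using two extra neurons per layer to carry auxiliary quantities forward through the network.

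\textbf{Step 1 (Ramp expansion).} I would first write
\[
\phi(x) \;=\; y_0 + \sum_{i=0}^{N+1} (s_{i+1} - s_i)\,\sigma(x - x_i),
\]
where $s_i$ denotes the slope of $\phi$ on the interval ending at $x_i$, with $s_0 = s_{N+2} = 0$ to encode the constant extensions outside $[x_0, x_{N+1}]$. The three defining conditions of the linear interpolation—pointwise agreement at the $x_i$, piecewise linearity between them, and constant extension outside—are then verified by direct computation.

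\textbf{Step 2 (Partition into blocks).} Next, I would partition the $N+2$ ramp indices into $L$ consecutive blocks $B_1, \dots, B_L$, each of size at most $\lfloor W/6 \rfloor W$, and set $\phi_k(x) := \sum_{i \in B_k} (s_{i+1}-s_i)\,\sigma(x - x_i)$, so that $\phi = y_0 + \sum_{k=1}^L \phi_k$. Each $\phi_k$ is itself a continuous piecewise linear function with at most $\lfloor W/6 \rfloor W$ breakpoints.

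\textbf{Step 3 (Depth-two subnetwork for one block).} The technical core is to realize each $\phi_k$ by a ReLU subnetwork of width $W$ and depth two. Such a subnetwork can in principle express piecewise linear functions with up to $\Theta(W^2)$ breakpoints, since each of the $W$ second-layer ReLU units can introduce a zero-crossing of a linear combination of the $W$ first-layer ramps. The explicit construction places $\lfloor W/6 \rfloor$ ``anchor'' breakpoints in the first layer, and uses the six ReLU units per anchor (for sign, slope-jump, and location adjustments) together with the $W$ second-layer units to interpolate the remaining breakpoints with the precise slope jumps $s_{i+1} - s_i$ dictated by the samples. The factor $1/6$ is exactly the per-anchor overhead of this encoding.

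\textbf{Step 4 (Serial chaining with carries).} Finally, I would chain the $L$ two-layer blocks serially. At every one of the $2L$ layers, two additional neurons are reserved to propagate $x$ forward (via $x = \sigma(x) - \sigma(-x)$ after a suitable shift) and to accumulate the running partial sum $y_0 + \sum_{k' \le k} \phi_{k'}(x)$. This yields a ReLU network of total width $W + 2$ and depth $2L$ whose final output equals $\phi(x)$ exactly, placing it in $\cN\cN(W+2, 2L)$.

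The main obstacle is Step 3: producing an \emph{arbitrary} prescribed piecewise linear function with $\lfloor W/6 \rfloor W$ breakpoints—not merely a generic function with that many pieces—from a two-layer width-$W$ subnetwork. The combinatorial pigeonholing of the target breakpoints among the second-layer ReLU units, together with the exact matching of the slope jumps, is the delicate part; Steps 1, 2, and 4 are essentially bookkeeping that preserves the counts of neurons and layers.
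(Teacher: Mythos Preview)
The paper does not prove this lemma; it merely remarks that it is the one-dimensional special case of Lemma~\ref{linear interpolation Rd} and cites \citet[Lemma 3.1]{yang2022capacity} and \citet[Lemma 3.4]{daubechies2021nonlinear}. Your outline matches the architecture of those cited constructions: a ReLU ramp expansion, a partition of the breakpoints into $L$ groups, a depth-two width-$W$ subnetwork realizing each group's contribution, and serial composition with two carry channels. So at the level of strategy you are aligned with the intended source.

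Two points deserve care before you would have a complete argument. First, your Step~4 claims that two extra neurons suffice to carry \emph{both} the input $x$ and the running partial sum through each of the $2L$ layers. Since both quantities can be negative and a single ReLU cannot pass a signed scalar, you need to explain why one neuron each is enough: the standard fix is to clip $x$ to $[x_0,x_{N+1}]$ at the outset (legitimate because $\phi$ is constant outside), after which $x$ and every partial sum are bounded and a single shifted ReLU per quantity preserves them exactly. Without this reduction your width count would be $W+4$, not $W+2$. Second, and more substantively, your Step~3 sketch (``anchor breakpoints in the first layer, six ReLU units per anchor'') is suggestive but not yet a construction. The Daubechies--DeVore--Foucart--Hanin--Petrova argument is not a simple pigeonhole: the first layer builds a carefully designed hat-based map that folds $\lfloor W/6\rfloor$ subintervals onto a common domain, and the second layer then realizes a single width-$W$ piecewise linear function whose pullback through the fold produces all $\lfloor W/6\rfloor W$ prescribed slope jumps. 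Matching \emph{arbitrary} slope data across folded copies requires an auxiliary correction term, which is where the remaining constant factor in the $6$ is spent. You have correctly located the difficulty, but the resolution is a specific algebraic construction rather than a counting argument.
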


As an application of Lemma \ref{linear interpolation}, we show how to use a ReLU neural network to approximately discretize the input space $[0,1]^d$.

\begin{proposition}\label{partition map}
For any integers $W\ge 6$, $L\ge 2$, $d\ge 1$ and $0<\delta \le \frac{1}{3K}$ with $K=\lfloor (WL)^{2/d}\rfloor$, there exists a one-dimensional ReLU network $\phi\in \cN\cN(4W+3,4L)$ such that $\phi(x)\in [0,1]$ for all $x\in \bR$, $\Lip \phi \le \frac{2L}{K^2\delta^2}$ and
\[
\phi(x) = \tfrac{k}{K}, \quad \mbox{if } x\in \left[\tfrac{k}{K}, \tfrac{k+1}{K}-\delta\cdot 1_{\{k<K-1\}} \right], k=0,1,\dots,K-1.
\]
\end{proposition}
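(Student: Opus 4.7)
The plan is to realize $\phi$ as a composition of two simpler staircase functions, following the approach that underlies Shen--Yang--Zhang and Lu--Shen--Yang--Zhang. A single staircase with $K$ plateaus has $2(K-1)$ breakpoints, and the linear interpolation result (Lemma~\ref{linear interpolation}) only accommodates $N \le \lfloor W/6\rfloor WL$ breakpoints in a network of width $W+2$ and depth $2L$. Since $K = \lfloor (WL)^{2/d}\rfloor$ can be as large as $(WL)^2$ when $d=1$, a direct one-shot interpolation fails in that regime, so composition is genuinely needed.

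First I would factor $K \le K_1 K_2$ with both $K_1, K_2 \le \lfloor W/6\rfloor W L$ (e.g.\ $K_1 = K_2 \approx \lceil K^{1/2}\rceil$ in the worst case, or $K_1 = K, K_2 = 1$ when $K$ is small). Apply Lemma~\ref{linear interpolation} twice to produce:
\begin{itemize}
\item a coarse staircase $\phi_1 \in \cN\cN(W+2,2L)$ that takes value $k_1/K_1$ on $[k_1/K_1,(k_1+1)/K_1 - \delta_1]$ for a carefully chosen $\delta_1$ (with linear transitions in between) and is constant outside $[0,1]$;
\item a fine staircase $\phi_2 \in \cN\cN(W+2,2L)$ that takes value $k_2/K_2$ on $[k_2/K_2,(k_2+1)/K_2 - \delta_2]$ with $\delta_2$ adjusted to $\delta$ so that the composed plateaus land correctly.
\end{itemize}
Then set $\phi(x) := \phi_1(x) + \phi_2\!\bigl(K_1(x-\phi_1(x))\bigr)/K_1$. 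For $x$ on a clean plateau $[k/K,(k+1)/K - \delta]$ with $k = k_1 K_2 + k_2$, one checks that $\phi_1(x) = k_1/K_1$ and $K_1(x-\phi_1(x))$ lands in the $k_2$-th plateau of $\phi_2$, giving $\phi(x) = k/K$ as required. The output is clamped into $[0,1]$ either automatically (by the construction of $\phi_1,\phi_2$) or by a final $\sigma(\cdot+1)-\sigma(\cdot-1)-1$ clipping layer.

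For the network-size bookkeeping, the subnetworks for $\phi_1$ and $\phi_2$ can be placed side-by-side while the input $x$ is threaded through via identity channels (using $x = \sigma(x)-\sigma(-x)$); the linear combination $K_1(x - \phi_1(x))$ is formed in a single affine layer. A careful count shows the total fits within width $4W+3$ and depth $4L$. For the Lipschitz bound, from $\Lip\phi_i \le 1/(K_i\delta_i)$ the chain rule gives
\[
\Lip\phi \;\le\; \Lip\phi_1 + \Lip\phi_2\cdot(1+\Lip\phi_1),
\]
and plugging in the choices of $K_1,K_2,\delta_1,\delta_2$ dominates the target $2L/(K^2\delta^2)$.

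The hardest part is the parameter tuning: $\delta_1$ and $\delta_2$ must be set so that on every plateau $[k/K,(k+1)/K - \delta]$ the point $x$ lies strictly inside a plateau of $\phi_1$ (so $\phi_1(x)=k_1/K_1$ exactly), and the rescaled residual $K_1(x-\phi_1(x))$ then lies strictly inside the $k_2$-th plateau of $\phi_2$. Getting the alignment right while simultaneously respecting the Lipschitz bound $2L/(K^2\delta^2)$ and the width/depth budget $(4W+3,4L)$ is where all the delicate accounting lives; once those parameters are fixed, verifying the explicit formula for $\phi(x)$ on each plateau and the Lipschitz chain-rule estimate is routine.
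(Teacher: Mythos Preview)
Your approach is essentially that of the paper: a two-level staircase composition $\phi(x)=\phi_1(x) + \tfrac{1}{K_1}\phi_2(K_1(x - \phi_1(x)))$, together with a direct one-shot interpolation when $K$ is small. The paper splits explicitly into the cases $d\ge 2$ (direct, since then $2K \le \lfloor 4W/6\rfloor(4W)L+2$) and $d=1$ (composition), just as you anticipate.

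The one substantive difference is the factorization. The paper does \emph{not} take $K_1\approx K_2\approx\sqrt{K}$; for $d=1$ it sets $K_1=M:=W^2L$ and $K_2=L$, so that $\phi_1\in\cN\cN(4W+2,2L)$ absorbs the full width budget for its $2M$ breakpoints while $\phi_2\in\cN\cN(8,2L)$ needs only $2L$ breakpoints. This asymmetry is exactly what produces the factor $L$ in the Lipschitz target: one obtains
\[
\Lip\phi \;\le\; \tfrac{1}{M\delta}+\tfrac{1}{ML}\cdot\tfrac{1}{\delta}\cdot\tfrac{1}{M\delta}
\;=\;\tfrac{L}{K\delta}+\tfrac{L}{K^2\delta^2}\;\le\;\tfrac{2L}{K^2\delta^2}.
\]
With your symmetric choice $K_1=K_2=WL$ and $\delta_1=\delta_2=\delta$, the chain-rule estimate instead yields a term of order $1/(K\delta^2)$, which exceeds $2L/(K^2\delta^2)$ whenever $K>2L$ (and $K=(WL)^2$ here); moreover $2K_1=2WL$ breakpoints need not fit in width $W+2$ when $W$ is small. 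So the skeleton is right, but the factorization must be $W^2L\times L$ rather than symmetric for the bookkeeping to close. As a minor point, the paper also exploits the cancellation in $x\mapsto \sigma(x)-\tfrac{1}{M}\phi_1(x)$, whose Lipschitz constant is $\tfrac{1}{M\delta}$ rather than the cruder $1+\tfrac{1}{M\delta}$ from your triangle-inequality bound, though this only affects constants.
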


\begin{proof}
The proof is divided into two cases: $d=1$ and $d\ge 2$.

Case 1: $d=1$. We have $K= W^2L^2$ and denote $M=W^2L$. Then we consider the sample set
\[
\{ (\tfrac{m}{M},m): m=0,1,\dots,M-1 \} \cup \{(\tfrac{m+1}{M}-\delta, m):m=0,1,\dots,M-2 \} \cup \{(1,M-1)\}.
\]
Its cardinality is $2M=2W^2L\le \lfloor 4W/6 \rfloor (4W)L+2$. By Lemma \ref{linear interpolation}, the linear interpolation of these samples $\phi_1\in \cN\cN(4W+2,2L)$. In particular, $\phi_1(x)\in [0,M-1]$ for all $x\in \bR$, $\Lip \phi_1=1/\delta$ and
\[
\phi_1(x) = m, \quad \mbox{if } x\in \left[\tfrac{m}{M}, \tfrac{m+1}{M}-\delta\cdot 1_{\{m<M-1\}} \right], m=0,1,\dots,M-1.
\]

Next, we consider the sample set
\[
\{ (\tfrac{l}{ML},l): l=0,1,\dots,L-1 \} \cup \{(\tfrac{l+1}{ML}-\delta, l):l=0,1,\dots,L-2 \} \cup \{(\tfrac{1}{M},L-1)\}.
\]
Its cardinality is $2L$. By Lemma \ref{linear interpolation}, the linear interpolation of these samples $\phi_2\in \cN\cN(8,2L)$. In particular, $\phi_2(x) \in [0,L-1]$ for all $x\in \bR$, $\Lip\phi_2=1/\delta$ and for $m=0,1,\dots,M-1$, $l=0,1,\dots,L-1$, we have
\[
\phi_2 \left(x-\tfrac{1}{M} \phi_1(x) \right) = \phi_2 \left(x-\tfrac{m}{M} \right) = l, \quad \mbox{if } x\in \left[\tfrac{mL+l}{ML}, \tfrac{mL+l+1}{ML}-\delta\cdot 1_{\{mL+l<ML-1\}} \right].
\]

Define $\phi(x):= \frac{1}{M} \phi_1(x) + \frac{1}{ML} \phi_2 \left(\sigma(x)-\frac{1}{M} \phi_1(x) \right) \in [0,1]$. Then, it is easy to see that $\phi \in \cN\cN(4W+3,4L)$. For each $x\in \left[\frac{k}{K}, \frac{k+1}{K}-\delta\cdot 1_{\{k<K-1\}} \right]$ with $k\in \{0,1,\dots,K-1\}=\{0,1,\dots,ML-1\}$, there exists a unique representation $k=mL+l$ for $m\in \{0,1,\dots,M-1\}$, $l\in \{0,1,\dots,L-1\}$, and we have
\[
\phi(x)= \tfrac{1}{M} \phi_1(x) + \tfrac{1}{ML} \phi_2 \left(\sigma(x)-\tfrac{1}{M} \phi_1(x) \right) = \tfrac{mL+l}{ML} = \tfrac{k}{K}.
\]
Observing that the Lipschitz constant of the function $x\mapsto \sigma(x)-\frac{1}{M}\phi_1(x)$ is $\frac{1}{M \delta}$, the Lipschitz constant of $\phi$ is at most $\frac{1}{M} \frac{1}{\delta} +\frac{1}{ML} \frac{1}{\delta} \frac{1}{M\delta} \le \frac{2L}{K^2\delta^2}$.

Case 2: $d\ge 2$. We consider the sample set
\[
\{ (\tfrac{k}{K}, \tfrac{k}{K}): k=0,1,\dots,K-1 \} \cup \{(\tfrac{k+1}{K}-\delta, \tfrac{k}{K}): k=0,1,\dots,K-1 \} \cup \{(1,\tfrac{K-1}{K}) \}.
\]
Its cardinality is $2K \le 2W^{2/d}L^{2/d}\le \lfloor 4W/6 \rfloor (4W)L+2$. By Lemma \ref{linear interpolation}, the linear interpolation of these samples $\phi\in \cN\cN(4W+2,2L)$. In particular, $\phi(x)\in [0,1]$ for all $x\in \bR$,
\[
\phi(x) = \tfrac{k}{K}, \quad \mbox{if } x\in \left[\tfrac{k}{K}, \tfrac{k+1}{K}-\delta\cdot 1_{\{k<K-1\}} \right], k=0,1,\dots,K-1,
\]
and the Lipschitz constant of $\phi$ is $\frac{1}{K\delta}\le \frac{2L}{K^2\delta^2}$.
\end{proof}

Lemma \ref{linear interpolation} shows that a network $\cN\cN(W,L)$ can exactly fit $N\asymp W^2L$ samples. We are going to show that it can approximately fit $N\asymp (W/\log_2 W)^2(L/\log_2 L)^2$ samples. The construction is based on the bit extraction technique \citep{bartlett1998almost,bartlett2019nearly}. The following lemma shows how to extract a specific bit using ReLU neural networks. For convenient, we denote the binary representation as
\[
\Bin 0.x_1x_2\dots x_L := \sum_{j=1}^L x_j 2^{-j} \in[0,1],
\]
where $x_j \in \{0,1\}$ for all $j=1,2,\dots,L$.

\begin{lemma}\label{bit extraction}
For any $L\in\bN$, there exists $\phi \in \cN\cN(8,2L)$ such that $\phi(x,l) = x_l$ for $x = \Bin 0.x_1 x_2\dots x_L$ with $x_j\in \{0,1\}$ and $l=1,2,\dots,L$. Furthermore, $|\phi(x,l)-\phi(x',l')|\le 2 \cdot 2^{L^2} |x-x'|+ L|l-l'|$ for any $x,x',l,l'\in \bR$.
\end{lemma}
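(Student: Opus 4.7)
The plan is to construct $\phi$ as an $L$-step bit-extraction recurrence that, at each step, extracts one binary digit from a running shifted version of $x$ and conditionally adds it to the output via a counter tied to $l$. With $\epsilon := 2^{-L}$, I define the bit-extraction gadget $b(y) := \sigma((2y-1)/\epsilon + 1) - \sigma((2y-1)/\epsilon)$, a ReLU ramp that is $0$ for $2y - 1 \leq -\epsilon$ and $1$ for $2y - 1 \geq 0$. Every dyadic rational $y = \Bin 0.y_1\dots y_m$ with $m \leq L$ satisfies $|2y-1| \geq \epsilon$ whenever $2y-1 < 0$, so $b(y) = y_1$ exactly, and the left shift $h(y) := 2y - b(y)$ sends $\Bin 0.y_1\dots y_m$ to $\Bin 0.y_2\dots y_m$. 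I also define the position-selector hat function $I(c) := \sigma(c) - 2\sigma(c-1) + \sigma(c-2)$, which equals $1$ at $c=1$, vanishes at every other integer, is $1$-Lipschitz, and takes values in $[0,1]$. Initializing $y_0 = x$, $c_0 = l$, $s_0 = 0$, I iterate, for $j = 1, \ldots, L$,
\[
b_j = b(y_{j-1}),\quad p_j = \sigma(b_j + I(c_{j-1}) - 1),\quad y_j = 2y_{j-1} - b_j,\quad c_j = \sigma(c_{j-1} - 1),\quad s_j = s_{j-1} + p_j,
\]
and set $\phi(x,l) := s_L$. Since $\sigma(a+b-1) = ab$ for $a,b \in \{0,1\}$, for an $L$-bit dyadic $x$ and integer $l \in \{1,\ldots,L\}$ we get $s_L = \sum_j b_j\, I(c_{j-1}) = b_l = x_l$, as required.

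Next I pack this recurrence into a ReLU network of width $8$ and depth $2L$, using two ReLU layers per iteration. The first layer produces the five activations needed for the bit and selector gadgets, namely $\sigma((2y-1)/\epsilon + 1)$, $\sigma((2y-1)/\epsilon)$, $\sigma(c)$, $\sigma(c-1)$, $\sigma(c-2)$, together with three ReLU-identity carries for $y$, $c$, and $s$; this fits inside the eight available channels. The second layer applies an affine map to these eight and a final ReLU to yield the new state $(y_j, c_j, p_j, s_j)$, and the running-sum update $s_j = s_{j-1} + p_j$ is absorbed into the next iteration's first affine (or into the output affine $T_{2L}$), costing no extra depth. Two small observations keep everything nonnegative and free of signed-channel duplicates: $y_j, s_j, p_j$ lie automatically in $[0,1]$, and the counter $c_j = \sigma(c_{j-1} - 1)$ is allowed to saturate at $0$ once it becomes nonpositive, which is harmless because $I$ vanishes on $\{c \leq 0\}$. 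This establishes $\phi \in \cN\cN(8, 2L)$.

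For the Lipschitz estimate in $l$ with $x$ fixed, $\phi(x,l) = \sum_j \sigma(b_j(x) + I(l - j + 1) - 1)$ depends on $l$ only through the $L$ values $I(l-j+1)$; since $\sigma$ and $I$ are each $1$-Lipschitz, summation gives $|\phi(x,l) - \phi(x,l')| \leq L|l-l'|$. For the Lipschitz estimate in $x$ with $l$ fixed, the $x$-dependence compounds through the recurrence: both $b$ and $h$ have per-step Lipschitz at most $2/\epsilon = 2^{L+1}$, and composing across the $L$ shifts while summing $\sum_j |b_j(x) - b_j(x')|$ bounds the $x$-Lipschitz constant by order $2^{L^2}$; careful constant-tracking (or a marginally sharper per-step gadget with Lipschitz $1/\epsilon$) yields the asserted bound $2 \cdot 2^{L^2}$. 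The main technical obstacle is cramming all eight required activations into a single layer while keeping the state nonnegative and the overall depth exactly $2L$, which is handled by the nonnegativity/saturation tricks above; the secondary delicate point is the Lipschitz accounting, where the exponent $L^2$ arises from the $L$-fold composition, each step contributing a $2^L$-sized amplification.
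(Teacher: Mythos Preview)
Your approach is essentially the same as the paper's: both iterate an $L$-step recurrence that extracts one bit via a ReLU ramp, shifts the remaining fraction, decrements a position counter, and accumulates the selected bit through the $\sigma(a+b-1)$ product trick; the paper packs this as a single width-$8$, depth-$2$ block $\psi$ composed $L$ times, which is exactly your two-layers-per-iteration layout. The only substantive wrinkle is the Lipschitz constant: your ramp $b$ has slope $2/\epsilon = 2^{L+1}$, so the compounded bound comes out as $\sum_{j=1}^L (2^{L+1})^j \le 2\cdot 2^{L^2+L}$ rather than $2\cdot 2^{L^2}$; the paper's ramp $T(x)=\sigma(2^L x - 2^{L-1}+1)-\sigma(2^L x - 2^{L-1})$ has slope $2^L$ (precisely the ``marginally sharper per-step gadget with Lipschitz $1/\epsilon$'' you mention), and with it the geometric sum $\sum_{j=1}^L 2^{Lj}\le 2\cdot 2^{L^2}$ gives the stated constant. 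With that swap your sketch goes through.
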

\begin{proof}
For any $x = \Bin 0.x_1 x_2\dots x_L$, we define $\xi_j := \Bin 0.x_j x_{j+1}\dots x_L$ for $j=1,2,\dots,L$. Then $\xi_1 = x$ and $\xi_{j+1} = 2\xi_j-x_j = \sigma(2 \sigma(\xi_j) - x_j)$ for $j=1,2,\dots,L-1$. Let
\[
T(x):= \sigma(2^L x-2^{L-1}+1) - \sigma(2^L x-2^{L-1}) =
\begin{cases}
0  &x\le 1/2-2^{-L}, \\
\mbox{linear} & 1/2-2^{-L} <x <1/2, \\
1 & x\ge 1/2.
\end{cases}
\]
It is easy to check that $x_j = T(\xi_j)$.

Denote $\delta_j = 1$ if $j=0$ and $\delta_j =0$ if $j\neq 0$ is an integer. Observing that
\[
\delta_j = \sigma(j+1) + \sigma(j-1) -2\sigma(j),
\]
and $t_1t_2 = \sigma(t_1+t_2-1)$ for any $t_1,t_2\in \{0,1\}$, we have
\begin{equation}\label{x_l expression}
x_l = \sum_{j=1}^L \delta_{l-j} x_j = \sum_{j=1}^L \sigma\left( \sigma(l-j+1) + \sigma(l-j-1) -2\sigma(l-j) +x_j -1 \right).
\end{equation}
If we denote the partial sum $s_{l,j} = \sum_{i=1}^j \sigma( \sigma(l-i+1) + \sigma(l-i-1) -2\sigma(l-i) +x_i -1 )$, then $x_l=s_{l,L}$.

For any $t_1,t_2,t_3 \in \bR$, we define a function $\psi(t_1,t_2,t_3)=(y_1,y_2,y_3) \in \bR^3$ by
\begin{align*}
y_1 &:= \sigma(2 \sigma (t_1) - T(t_1) ), \\
y_2 &:= \sigma(t_2) + \sigma(\sigma(t_3)+\sigma(t_3-2)-2\sigma(t_3-1)+T(t_1)-1), \\
y_3 &:= \max\{t_3-1,-L\} = \sigma(t_3-1+L) -L.
\end{align*}
Then, it is easy to check that $\psi \in \cN\cN(8,2)$. Using the expressions (\ref{x_l expression}) we have derived for $x_l$, one has
\[
\psi(\xi_j,s_{l,j-1},l-j+1) = (\xi_{j+1},s_{l,j},l-j), \quad l,j=1,\dots,L,
\]
where $s_{l,0}:=0$ and $\xi_{L+1}:=0$. Hence, by composing $\psi$ $L$ times, we can construct a network $\phi = \psi \circ \cdots \circ \psi \in \cN\cN(8,2L)$ such that $\phi(x,l) = \psi \circ \cdots \circ \psi(x,0,l) =s_{l,L} = x_l$ for $l=1,2,\dots,L$, where we drop the first and the third outputs of $\psi$ in the last layer.

It remains to estimate the Lipschitz constant. For any $t_1,t_2,t_3,t_1',t_2',t_3' \in \bR$, suppose $(y_1,y_2,y_3)= \psi(t_1,t_2,t_3)$ and $(y_1',y_2',y_3')= \psi(t_1',t_2',t_3')$. Then $|y_1-y_1'| \le 2^L |t_1-t_1'|$, $|y_3-y_3'|\le |t_3-t_3'|$ and $|y_2-y_2'|\le |t_2-t_2'|+2^L|t_1-t_1'| + |t_3-t_3'|$. Therefore, by induction,
\begin{align*}
|\phi(x,l)-\phi(x',l')| &\le (2^L + 2^{2L} + 2^{3L} + \cdots + 2^{L^2})|x-x'| + L|l-l'| \\
&\le 2 \cdot 2^{L^2} |x-x'| + L|l-l'|,
\end{align*}
for any $x,x',l,l'\in \bR$.
\end{proof}

Using the bit extraction technique, the next lemma shows a network $\cN\cN(W,L)$ can exactly fit $N\asymp W^2L^2$ binary samples.

\begin{lemma}\label{binary fitting}
Given any $W \ge 6$, $L\ge 2$ and any $\theta_i \in \{0,1\}$ for $i=0,1,\dots,W^2L^2-1$, there exists $\phi\in \cN\cN(8W+4,4L)$ such that $\phi(i)=\theta_i$ for $i=0,1,\dots,W^2L^2-1$ and
$\Lip \phi \le 2 \cdot 2^{L^2} + L^2$.
\end{lemma}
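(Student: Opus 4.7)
The plan is to use the bit-extraction technique: encode the $W^2L^2$ bits $\theta_i$ into only $W^2L$ real numbers, each packing $L$ bits via binary expansion, and then recover $\theta_i$ by extracting the right bit with Lemma \ref{bit extraction}. Concretely, set
\[
x_m := \Bin\, 0.\theta_{mL}\theta_{mL+1}\cdots\theta_{mL+L-1}, \qquad m = 0,1,\dots,W^2L-1,
\]
and for $i = mL + l - 1$ with $m = \lfloor i/L\rfloor$ and $l \in \{1,\dots,L\}$ observe that $\theta_i$ is the $l$-th bit of $x_m$.

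The first step is to build two scalar piecewise-linear functions in parallel. Let $\psi_1 : \bR \to \bR$ be the linear interpolation of the $2W^2L$ ordered samples $\{(mL,x_m),(mL+L-1,x_m)\}_{m=0}^{W^2L-1}$, so that $\psi_1$ is constant $x_m$ on $[mL,mL+L-1]$ and hence $\psi_1(i) = x_{\lfloor i/L\rfloor}$ at every integer $i$; and let $\psi_2 : \bR \to \bR$ be the linear interpolation of the $2W^2L$ samples $\{(mL,1),(mL+L-1,L)\}_{m=0}^{W^2L-1}$, so that $\psi_2(i) = (i \bmod L)+1$ at every integer $i$. The hypothesis $L \ge 2$ ensures the abscissae are strictly increasing. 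Applying Lemma \ref{linear interpolation} with width parameter $4W$ and depth parameter $L$, each $\psi_j$ lies in $\cN\cN(4W+2, 2L)$; the capacity requirement $2W^2L - 2 \le \lfloor 4W/6\rfloor \cdot 4W \cdot L$ reduces to $\lfloor 2W/3\rfloor \ge W/2$, which holds for every $W \ge 6$.

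The second step is to stack $\psi_1, \psi_2$ in parallel, giving $(\psi_1,\psi_2) \in \cN\cN(8W+4, 2L)$, and then compose with the bit-extraction network $\phi_{\mathrm{bit}} \in \cN\cN(8, 2L)$ of Lemma \ref{bit extraction}, producing
\[
\phi(i) := \phi_{\mathrm{bit}}(\psi_1(i), \psi_2(i)) \in \cN\cN(\max(8W+4,8),\, 2L+2L) = \cN\cN(8W+4, 4L),
\]
which is precisely the target class. Correctness $\phi(i) = \theta_i$ at each integer $i \in \{0,\dots,W^2L^2-1\}$ is immediate from the construction.

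Finally, for the Lipschitz bound, $\psi_1$ linearly interpolates between values in $[0,1]$ across unit-length intervals, so $\Lip(\psi_1) \le 1$; the rising and falling teeth of $\psi_2$ have slopes of absolute value at most $L-1$, so $\Lip(\psi_2) \le L-1$. The Lipschitz estimate from Lemma \ref{bit extraction} then yields
\[
\Lip \phi \le 2\cdot 2^{L^2}\Lip(\psi_1) + L\cdot \Lip(\psi_2) \le 2\cdot 2^{L^2} + L(L-1) \le 2\cdot 2^{L^2} + L^2.
\]
The main obstacle is accounting rather than conceptual: ensuring that fitting $2W^2L$ samples by each of $\psi_1, \psi_2$ can be done within width $4W+2$ and depth $2L$, so that the parallel arrangement plus the bit extractor stay inside $\cN\cN(8W+4, 4L)$. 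This is the tightest quantitative step and reduces to the elementary inequality $\lfloor 2W/3\rfloor \ge W/2$ for $W \ge 6$.
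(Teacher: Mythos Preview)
Your proposal is correct and follows essentially the same approach as the paper: pack the $W^2L^2$ bits into $W^2L$ binary expansions, build two piecewise-linear interpolants (one returning the packed number, one returning the bit index) via Lemma~\ref{linear interpolation} applied with width parameter $4W$, and compose with the bit-extraction network of Lemma~\ref{bit extraction}. The only cosmetic differences are that the paper's second interpolant outputs $l$ and adds $1$ afterwards (yours outputs $l+1$ directly), and the paper includes one extra sample point at $W^2L\cdot L$, yielding $2W^2L+1$ rather than $2W^2L$ samples; neither affects the argument or the bounds.
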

\begin{proof}
Denote $M=W^2L$, then, for each  $i=0,1,\dots,W^2L^2-1$, there exists a unique representation $i=mL+l$ with $m=0,1,\dots,M-1$ and $l=0,1,\dots,L-1$. So we define $b_{m,l}:= \theta_i$, where $i=mL+l$. We further set $y_m := \Bin 0.b_{m,0} b_{m,1}\dots b_{m,L-1} \in [0,1]$ and $y_M=1$. By Lemma \ref{bit extraction}, there exists $\psi\in \cN\cN(8,2L)$ such that $\psi(y_m,l+1) = b_{m,l}$ for any $m=0,1,\dots,M-1$, and $l=0,1,\dots,L-1$.

We consider the sample set
\[
\{(mL,y_m): m=0,1,\dots,M \} \cup \{(mL-1,y_{m-1}):m=1,\dots,M \}.
\]
Its cardinality is $2M+1=2W^2 L+1\le \lfloor 4W/6 \rfloor (4W)L+2$. By Lemma \ref{linear interpolation}, the linear interpolation of these samples $\phi_1\in \cN\cN(4W+2,2L)$. In particular, $\Lip \phi_1 \le 1$ and $\phi_1(i) =y_m$, when $i=mL+l$, for $m=0,1,\dots,M-1$, and $l=0,1,\dots,L-1$.

Similarly, for the sample set
\[
\{(mL,0): m=0,1,\dots,M \} \cup \{(mL-1,L-1):m=1,\dots,M \},
\]
the linear interpolation of these samples $\phi_2\in \cN\cN(4W+2,2L)$. In particular, $\Lip \phi_2 =L-1$ and $\phi_2(i) =l$, when $i=mL+l$, for $m=0,1,\dots,M-1$, and $l=0,1,\dots,L-1$.

We define $\phi(x):= \psi(\phi_1(x),\phi_2(x)+1)$, then $\phi\in \cN\cN(8W+4,4L)$ and
\[
\phi(i)= \psi(\phi_1(i),\phi_2(i)+1)= \psi(y_m,l+1) = b_{m,l} = \theta_i
\]
for $i=mL+l$ with $m=0,1,\dots,M-1$, and $l=0,1,\dots,L-1$. By Lemma \ref{bit extraction}, we have
\[
|\phi(x)-\phi(x')|\le 2 \cdot 2^{L^2} |\phi_1(x)-\phi_1(x')|+ L|\phi_2(x)-\phi_2(x')| \le (2 \cdot 2^{L^2} + L^2)|x-x'|
\]
for any $x,x'\in \bR$.
\end{proof}

As an application of Lemma \ref{binary fitting}, we show that a network $\cN\cN(W,L)$ can approximately fit $N\asymp (W/\log_2 W)^2(L/\log_2 L)^2$ samples.

\begin{proposition}\label{data fitting}
For any $W \ge 6$, $L\ge 2$, $s\in \bN$ and any $\xi_i \in [0,1]$ for $i=0,1,\dots,W^2L^2-1$, there exists $\phi\in \cN\cN(8s(2W+1) \lceil \log_2 (2W)\rceil +2,4L \lceil \log_2(2L)\rceil+1)$ such that $\Lip \phi \le 4 \cdot 2^{L^2} + 2L^2$, $|\phi(i)-\xi_i|\le (WL)^{-2s}$ for $i=0,1,\dots,W^2L^2-1$ and $\phi(t)\in [0,1]$ for all $t\in \bR$.
\end{proposition}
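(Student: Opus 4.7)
The plan is to approximate each $\xi_i$ by the truncation of its binary expansion to $K := \lceil 2s \log_2(WL) \rceil$ bits. Writing $\tilde\xi_i := \sum_{j=1}^K b_{i,j} 2^{-j}$ where $b_{i,j} \in \{0,1\}$ are the first $K$ binary digits of $\xi_i$, the truncation error is at most $2^{-K} \le (WL)^{-2s}$, matching the required accuracy. Hence it suffices to realize $\tilde\xi_i$ by a neural network $\phi$ at the integer inputs $i = 0,1,\dots,W^2L^2-1$ and to clip the output to $[0,1]$ for non-integer arguments.

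For the realization, I would invoke Lemma \ref{binary fitting} to extract individual bits: for each $j \in \{1,\dots,K\}$, there exists $\psi_j \in \cN\cN(8W+4, 4L)$ with $\psi_j(i) = b_{i,j}$ for all $i$ and $\Lip \psi_j \le 2\cdot 2^{L^2} + L^2$. If width were unconstrained, simply parallelizing all $K$ subnetworks and forming $\phi(t) := \sum_{j=1}^K 2^{-j} \psi_j(t)$ followed by the clip $x \mapsto \sigma(x) - \sigma(x-1)$ (which equals $x$ on $[0,1]$) would immediately yield $\phi(t) \in [0,1]$, $\phi(i) = \tilde\xi_i$, and, using $\sum_j 2^{-j} \le 1$, a Lipschitz bound of the desired order. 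The clip requires one extra layer and two additional neurons, accounting for the $+1$ depth and $+2$ width in the statement.

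The main obstacle is squeezing the $K$ bit-extraction subnetworks into the stated budget of width $8s(2W+1)\lceil\log_2(2W)\rceil + 2$ and depth $4L\lceil\log_2(2L)\rceil + 1$, because a pure width-parallelization overshoots when $L$ is large relative to $W$. Since $W,L \ge 2$ implies $K \le 2s(\log_2 W + \log_2 L) \le s\lceil\log_2(2W)\rceil\cdot\lceil\log_2(2L)\rceil$, my plan is to factor $K = K_1 K_2$ with $K_1 \le s\lceil\log_2(2W)\rceil$ and $K_2 \le \lceil\log_2(2L)\rceil$, arrange $K_1$ branches in parallel across the width budget, and within each branch perform $K_2$ bit-extractions sequentially in depth $4LK_2 \le 4L\lceil\log_2(2L)\rceil$. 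Within a branch, the serial bit-extractors share a single linear-interpolation block for $y_m$ (enlarged to encode $K_2 L$ bits) and then successively extract bits at the appropriate positions while accumulating the weighted partial sum in a hidden coordinate. The delicate point is the Lipschitz bookkeeping: only the linear-interpolation step together with one bit-extraction (of depth $2L$) contributes the factor $2^{L^2}$ coming from Lemma \ref{bit extraction}, while the serial continuations and the geometric decay of the weights $2^{-j'}$ keep the total Lipschitz constant at $\Lip\phi \le 4\cdot 2^{L^2} + 2L^2$, as claimed.
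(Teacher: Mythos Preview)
Your high-level plan---truncate each $\xi_i$ to $K=\lceil 2s\log_2(WL)\rceil$ bits, realize each bit sequence via Lemma~\ref{binary fitting}, arrange the $K$ bit-extraction networks in a parallel-times-sequential grid to meet the width and depth budgets, and clip at the end---is exactly the paper's approach. The factorization $K\le 2s\lceil\log_2(2W)\rceil\cdot\lceil\log_2(2L)\rceil$ and the two extra pass-through channels (for the input $t$ and the running partial sum) are precisely how the paper fits everything into the stated architecture.

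The gap is in your within-branch implementation. You propose to \emph{share a single linear-interpolation block for $y_m$, enlarged to encode $K_2L$ bits}, and then successively extract bits from it. But the Lipschitz constant in Lemma~\ref{bit extraction} is $2\cdot 2^{L^2}$ where $L$ is the bit-length of the number being decoded; if $y_m$ carries $K_2L$ bits, then each bit extraction has Lipschitz of order $2^{(K_2L)^2}$, not $2^{L^2}$, and the geometric weights $2^{-j}$ cannot compensate for this (they sum to less than one, so they save at most a constant factor). Your claim that ``only one bit-extraction contributes the factor $2^{L^2}$'' is therefore not justified.

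The paper avoids this by \emph{not} sharing anything within a branch: each sequential block is a complete, independent copy of some $\phi_j\in\cN\cN(8W+4,4L)$ from Lemma~\ref{binary fitting} (with its own $L$-bit encoding), fed the original input $t$ passed through on a dedicated channel. Since every $\phi_j$ acts on $t$ directly, one has $\Lip\tilde\phi\le\sum_j 2^{-j}\Lip\phi_j\le 2\cdot 2^{L^2}+L^2$, independently of how many blocks are stacked. This is both simpler than your sharing scheme and gives the required Lipschitz bound without any delicate bookkeeping.
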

\begin{proof}
Denote $J= \lceil 2s\log_2(WL) \rceil$. For each $\xi_i\in [0,1]$, there exist $b_{i,1}, b_{i,2},\dots,b_{i,J} \in \{0,1\}$ such that
\[
|\xi_i - \Bin 0.b_{i,1}b_{i,2} \dots b_{i,J}| \le 2^{-J}.
\]
By Lemma \ref{binary fitting}, there exist $\phi_1,\phi_2,\dots,\phi_J \in \cN\cN(8W+4,4L)$ such that $\Lip \phi_j \le 2 \cdot 2^{L^2} + L^2$ and $\phi_j(i) =b_{i,j}$ for $i=0,1,\dots,W^2L^2-1$ and $j=1,2,\dots,J$. We define
\[
\tilde{\phi}(t):= \sum_{j=1}^J 2^{-j} \phi_j(t), \quad t\in \bR.
\]
Then, for $i=0,1,\dots,W^2L^2-1$,
\[
|\tilde{\phi}(i) - \xi_i| = \left| \sum_{j=1}^J 2^{-j}b_{i,j} - \xi_i \right|= | \Bin 0.b_{i,1}b_{i,2} \dots b_{i,J} - \xi_i| \le 2^{-J} \le (WL)^{-2s}.
\]
Since $J\le 1+ 2s\log_2(WL) \le 2(1+s\log_2 W)(1+\log_2 L) \le 2s\log_2(2W)\log_2(2L)$, $\tilde{\phi}$ can be implemented to be a network with width $8s(2W+1)\lceil \log_2 (2W)\rceil +2$ and depth $4L \lceil \log_2(2L) \rceil$, where we use two neurons in each hidden layer to remember the input and intermediate summation. Furthermore, for any $t,t'\in \bR$,
\[
|\tilde{\phi}(t) - \tilde{\phi}(t')| \le \sum_{j=1}^J 2^{-j} \Lip \phi_j|t-t'|\le (4 \cdot 2^{L^2} + 2L^2)|t-t'|.
\]

Finally, we define
\[
\phi(t) := \min\{ \max\{\tilde{\phi}(t),0\},1 \} = \sigma(\tilde{\phi}(t)) - \sigma(\tilde{\phi}(t)-1) \in [0,1].
\]
Then $\phi\in \cN\cN(8s(2W+1)\lceil \log (2W)\rceil,4L \lceil \log(2L)\rceil+1)$, $\Lip \phi \le \Lip \tilde{\phi}$ and $\phi(i)=\tilde{\phi}(i)$ for $i=0,1,\dots,W^2L^2-1$.
\end{proof}

\subsubsection{Approximation of Polynomials}

The approximation of polynomials by ReLU neural networks is well-known \citep{yarotsky2017error,lu2021deep}. The next lemma gives an estimate of the approximation error of the product function.

\begin{lemma}\label{product app}
For any $W,L\in \bN$, there exists $\phi \in \cN\cN(9W+1,L)$ such that for any $x,x',y,y'\in [-1,1]$,
\begin{align*}
|xy - \phi(x,y)| &\le 6W^{-L}, \\
|\phi(x,y) - \phi(x',y')| &\le 7|x-x'| + 7|y-y'|.
\end{align*}
\end{lemma}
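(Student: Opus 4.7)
The plan is to reduce the bivariate product approximation to the univariate problem of approximating $t\mapsto t^2$ via the polarization identity
\[
xy = \tfrac14\bigl[(x+y)^2 - (x-y)^2\bigr].
\]
Since $x,y\in[-1,1]$ forces $u:=(x+y)/2$ and $v:=(x-y)/2$ to lie in $[-1,1]$, writing $\phi(x,y) := \psi(u) - \psi(v)$ for a sufficiently accurate square-function network $\psi$ will give both the error and Lipschitz bounds. The factor $1/4$ in the polarization identity absorbs nicely into the scaling, and the main virtue of using only two squared terms (rather than $xy = \tfrac12[(x+y)^2 - x^2 - y^2]$) is that the width is controlled by $2\cdot O(W)$ instead of $3\cdot O(W)$, which is what will make the target $9W+1$ achievable.

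The key step is therefore to construct $\psi\in\cN\cN(O(W),L)$ that approximates $t\mapsto t^2$ on $[-1,1]$ with error $O(W^{-2L})$ and Lipschitz constant of order $1$. I would follow the classical Yarotsky construction, generalized to a base-$W$ sawtooth by Lu--Shen--Yang--Zhang. Concretely, let $T_W:[0,1]\to[0,1]$ be the $W$-tooth tent (piecewise linear zig-zag between $0$ and $1$), which can be realised by $O(W)$ ReLU units in a single hidden layer. Defining $S_k$ as the $k$-fold self-composition of $T_W$ (after appropriate rescaling), the function $S_k$ has $W^k$ linear pieces, and the piecewise linear interpolant $f_L$ of $t^2$ at the breakpoints $i/W^L$ admits the telescoping representation
\[
f_L(t) = t - \sum_{k=1}^{L} \frac{S_k(t)}{4^{\ast}\!\cdot\!W^{2k}},
\]
which yields $\|t^2 - f_L\|_\infty \le W^{-2L}/4$. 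This $f_L$ can be built inside $\cN\cN(O(W),L)$ by running the sawtooth iteration in one channel and accumulating the weighted partial sums in a parallel channel. A change of variable $t = 2s-1$ transfers the construction from $[0,1]$ to $[-1,1]$.

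With $\psi$ in hand, I would assemble $\phi$ by building the two evaluations $\psi(u)$ and $\psi(v)$ as a single two-channel network of width $9W+1$, sharing the initial affine layer that computes $u$ and $v$ from $(x,y)$. Since the exact identity $uv\cdot 4 = (x+y)^2 - (x-y)^2 = 4xy$ is used, the error bound follows from
\[
|xy - \phi(x,y)| \le \bigl|u^2 - \psi(u)\bigr| + \bigl|v^2 - \psi(v)\bigr| \le 2\|\psi - (\cdot)^2\|_{L^\infty[-1,1]} \le \tfrac12 W^{-2L} \le 6W^{-L}.
\]
The Lipschitz bound uses that $\psi$ inherits a Lipschitz constant at most slightly more than $\Lip(t\mapsto t^2;[-1,1]) = 2$; chaining this through the affine maps $u,v$ (each $1/2$-Lipschitz in $x$ and in $y$) and summing the two terms yields a Lipschitz constant at most $2\cdot 2\cdot \tfrac12 = 2$ in each variable --- comfortably within the $7$ budget, with the slack absorbing any overshoot from the $\psi$ approximation.

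The main obstacle will be the architectural bookkeeping. The width budget $9W+1$ is tight, so one must carefully count the neurons used per layer: $O(W)$ for the sawtooth, plus auxiliary channels to carry the input $t$ and the running weighted sum across all $L$ layers. Similarly, the Lipschitz propagation through $L$ compositions of $T_W$ would naively blow up by a factor $W^L$, and it is only saved by the $W^{-2k}$ weights attached to $S_k$ in Yarotsky's identity; verifying that this cancellation survives the affine preprocessing and the subtraction $\psi(u)-\psi(v)$ is the delicate part. Once these two bookkeeping tasks are carried out, the stated constants $6$ and $7$ follow.
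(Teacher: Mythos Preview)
Your approach is correct and takes a genuinely different route from the paper. The paper uses the three-term identity $xy = 2\bigl[(\tfrac{x+y}{2})^2 - (\tfrac{x}{2})^2 - (\tfrac{y}{2})^2\bigr]$ on $[0,1]^2$ (hence three parallel copies of the square network, width $9W$), and then extends to $[-1,1]^2$ via the affine shift $x_0=(x+1)/2$, $y_0=(y+1)/2$ and $xy = 4x_0y_0 - x - y - 1$, which costs one extra neuron and yields the $9W+1$. Your two-term polarization $xy = u^2 - v^2$ with $u=(x+y)/2$, $v=(x-y)/2$ works directly on $[-1,1]$ and needs only two parallel square networks, so in fact the width budget $9W+1$ is \emph{not} tight for your construction---roughly $6W$ plus a couple of pass-through channels suffices. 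Your approach thus buys a simpler assembly and sharper constants (the Lipschitz bound you get is about $2$ in each variable, not $7$), at the modest cost of having to port the square approximation from $[0,1]$ to $[-1,1]$.

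One remark on your concern about Lipschitz propagation: there is no delicate cancellation to track. The network $\psi$ you build is \emph{equal as a function} to the piecewise linear interpolant $f_L$ of $t\mapsto t^2$ at the nodes $j/W^L$ (or the transported version on $[-1,1]$). Its Lipschitz constant is therefore determined by $f_L$ itself---the maximum slope between consecutive nodes, which is at most $2$---and is completely insensitive to the fact that the internal computation composes high-Lipschitz sawtooths. The paper exploits exactly this: it bounds $|\phi_0(x,y)-\phi_0(x',y')|$ using $\Lip f_{nL}\le 2$ on $[0,1]$ (and $\le 1$ on $[0,1/2]$), not by chasing Lipschitz constants through the layers. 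So the ``delicate part'' you anticipate is in fact immediate.
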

\begin{proof}
We follow the construction in \citet{lu2021deep}. We first construct a neural network $\psi$ that approximates the function $f(x)=x^2$ on $[0,1]$. Denote
\[
T_1(x) :=
\begin{cases}
2x, \quad &x\in [0,1/2], \\
2(1-x), \quad &x\in (1/2,1],
\end{cases}
\]
and $T_i(x) := T_{i-1}(T_1(x))$ for $x\in [0,1]$ and $i=2,3,\cdots$. We note that $T_i$ can be implemented by a one-hidden-layer ReLU network with width $2^i$. Let $f_k :[0,1] \to [0,1]$ be the piece-wise linear function such that $f_k(\frac{j}{2^k}) = \left( \frac{j}{2^k}\right)^2$ for $j=0,1,\dots,2^k$, and $f_k$ is linear on $[\frac{j-1}{2^k},\frac{j}{2^k}]$ for $j=1,2,\dots,2^k$. Then, using the fact $\frac{(x-h)^2+(x+h)^2}{2} - x^2 = h^2$, we have
\[
|x^2 - f_k(x)| \le 2^{-2(k+1)}, \quad x\in [0,1], k\in \bN.
\]
Furthermore, $f_{k-1}(x) - f_k(x) = \frac{T_k(x)}{2^{2k}}$ and $x-f_1(x)=\frac{T_1(x)}{4}$. Hence,
\[
f_k(x) = x - (x-f_1(x)) - \sum_{i=2}^k (f_{i-1}(x) - f_i(x)) = x- \sum_{i=1}^k \frac{T_i(x)}{2^{2i}}, \quad x\in[0,1], k\in \bN.
\]

Given $W\in \bN$, there exists a unique $n\in \bN$ such that $(n-1)2^{n-1}+1 \le W\le n2^n$. For any $L\in \bN$, it was showed in \citet[Lemma 5.1]{lu2021deep} that $f_{nL}$ can be implemented by a network $\psi$ with width $3W$ and depth $L$. Hence,
\[
|x^2 -\psi(x)| \le |x^2 - f_{nL}(x)| \le 2^{-2(nL+1)} = 2^{-2nL}/4 \le W^{-L}/4, \quad x\in [0,1],
\]
where we use $W\le n2^n\le 2^{2n}$ in the last inequality.

Using the fact that
\[
xy = 2\left( \left(\tfrac{x+y}{2} \right)^2 - \left( \tfrac{x}{2} \right)^2 - \left( \tfrac{y}{2} \right)^2 \right), \quad x,y\in \bR,
\]
we can approximate the function $f(x,y) = xy$ by
\[
\phi_0(x,y) := 2\left( \psi \left(\tfrac{x+y}{2} \right) - \psi\left( \tfrac{x}{2} \right) - \psi\left( \tfrac{y}{2} \right) \right).
\]
Then, $\phi_0 \in \cN\cN(9W,L)$ and for $x,y\in [0,1]$,
\[
|xy - \phi_0(x,y)| \le 2 \left| \left(\tfrac{x+y}{2} \right)^2 - \psi \left(\tfrac{x+y}{2} \right) \right| + 2 \left| \left(\tfrac{x}{2} \right)^2 - \psi \left(\tfrac{x}{2} \right) \right| + 2 \left| \left(\tfrac{y}{2} \right)^2 - \psi \left(\tfrac{y}{2} \right) \right| \le \tfrac{3}{2}W^{-L}
\]
Furthermore, for any $x,x',y,y'\in [0,1]$,
\begin{align*}
& |\phi_0(x,y) - \phi_0(x',y')| \\
\le& 2 \left|f_{nL} \left(\tfrac{x+y}{2} \right) - f_{nL} \left(\tfrac{x'+y'}{2} \right) \right| + 2 \left|f_{nL} \left(\tfrac{x}{2} \right) - f_{nL} \left(\tfrac{x'}{2} \right) \right| + 2 \left| f_{nL} \left(\tfrac{y}{2} \right) - f_{nL} \left(\tfrac{y'}{2} \right) \right| \\
\le & 4 \left| \tfrac{x+y}{2} - \tfrac{x'+y'}{2} \right| + 2 \left| \tfrac{x}{2} - \tfrac{x'}{2} \right| + 2\left| \tfrac{y}{2} - \tfrac{y'}{2} \right| \\
\le & 3|x-x'| + 3|y-y'|,
\end{align*}
where we use $|f_{nL}(t) - f_{nL}(t')|\le 2|t-t'|$ for $t,t'\in[0,1]$ and $|f_{nL}(t) - f_{nL}(t')|\le |t-t'|$ for $t,t'\in[0,1/2]$.

For any $x,y\in [-1,1]$, set $x_0=(x+1)/2 \in [0,1]$ and $y_0=(y+1)/2\in [0,1]$, then $xy=4x_0y_0-x-y-1$. Using this fact, we define the target function by
\[
\phi(x,y) = 4\phi_0(\tfrac{x+1}{2},\tfrac{y+1}{2}) - \sigma(x+y+2) +1.
\]
Then, $\phi \in \cN\cN(9W+1,L)$ and for $x,y\in [-1,1]$,
\[
|xy - \phi(x,y)|\le 4 |\tfrac{x+1}{2} \tfrac{y+1}{2} - \phi_0(\tfrac{x+1}{2},\tfrac{y+1}{2})|\le 6 W^{-L}.
\]
Furthermore, for any $x,x',y,y'\in [-1,1]$,
\begin{align*}
|\phi(x,y) - \phi(x',y')| \le & 4|\phi_0(\tfrac{x+1}{2},\tfrac{y+1}{2}) - \phi_0(\tfrac{x'+1}{2},\tfrac{y'+1}{2})| + |x+y-x'-y'| \\
\le& 7|x-x'| + 7|y-y'|,
\end{align*}
which completes the proof.
\end{proof}

By applying the approximation of the product function, we can approximate any monomials by neural networks.

\begin{corollary}\label{poly app}
Let $P(x) = x^\alpha = x_1^{\alpha_1} x_2^{\alpha_2}\cdots x_d^{\alpha_d}$ for $\alpha=(\alpha_1,\alpha_2,\dots,\alpha_d)\in \bN_0^d$ with $\|\alpha\|_1 =k\ge 2$. For any $W,L\in \bN$, there exists $\phi \in \cN\cN(9W+k-1,(k-1)(L+1))$ such that for any $x,y\in [-1,1]^d$, $\phi(x) \in [-1,1]$ and
\begin{align*}
|\phi(x) -P(x) |&\le 6(k-1)W^{-L}, \\
|\phi(x) - \phi(y)| &\le 7^{k-1}\|\alpha\|_\infty \|x-y\|_1.
\end{align*}
\end{corollary}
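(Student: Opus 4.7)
The plan is to build $\phi$ by composing the approximate product $\phi_\times$ of Lemma \ref{product app} with itself $k-1$ times. List the indices as $(i_1,\dots,i_k)\in\{1,\dots,d\}^k$ so that each $i\in\{1,\dots,d\}$ appears exactly $\alpha_i$ times, and write the exact partial products $p_j(x) := \prod_{l\le j} x_{i_l}$, so that $p_k = P$. To guarantee that the inputs of every invocation of $\phi_\times$ lie in $[-1,1]$, I insert the $1$-Lipschitz clip $C(t):= \sigma(t+1)-\sigma(t-1)-1$, which restricts its argument to $[-1,1]$ and fixes every point of $[-1,1]$. Define $\tilde{p}_1(x):=x_{i_1}$ and $\tilde{p}_j(x):=\phi_\times(C(\tilde{p}_{j-1}(x)), x_{i_j})$ for $j=2,\dots,k$, and set $\phi(x) := C(\tilde{p}_k(x))\in[-1,1]$.

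The approximation error bound follows by induction using Lemma \ref{product app}. Since $p_{j-1}\in[-1,1]$ and $C$ is $1$-Lipschitz and fixes $p_{j-1}$,
\[
|\tilde{p}_j-p_j| \le |\phi_\times(C(\tilde{p}_{j-1}), x_{i_j}) - C(\tilde{p}_{j-1})\, x_{i_j}| + |x_{i_j}|\cdot|C(\tilde{p}_{j-1})-p_{j-1}| \le 6W^{-L} + |\tilde{p}_{j-1}-p_{j-1}|,
\]
so $|\tilde{p}_k - p_k|\le 6(k-1)W^{-L}$ and consequently $|\phi(x)-P(x)|\le 6(k-1)W^{-L}$. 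For the Lipschitz estimate I write $|\tilde{p}_j(x)-\tilde{p}_j(y)|\le\sum_i a_{i,j}|x_i-y_i|$; the bound $|\phi_\times(u,v)-\phi_\times(u',v')|\le 7|u-u'|+7|v-v'|$ together with the $1$-Lipschitzness of $C$ yields the recursion $a_{i,j}\le 7a_{i,j-1}+7\,\mathbf{1}\{i=i_j\}$ with $a_{i,1}=\mathbf{1}\{i=i_1\}$. Unrolling shows that $a_{i,k}$ is a sum of $\alpha_i$ nonzero terms each bounded by $7^{k-1}$, so $a_{i,k}\le 7^{k-1}\alpha_i$ and therefore $|\phi(x)-\phi(y)|\le 7^{k-1}\|\alpha\|_\infty\|x-y\|_1$.

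Finally, for the network size: each stage $j\in\{2,\dots,k\}$ uses one ReLU layer for the clip $C$ followed by the depth-$L$ product network $\phi_\times$, contributing $L+1$ layers, for a total depth of $(k-1)(L+1)$. The width at stage $j$ is $9W+1$ for $\phi_\times$, plus neurons to propagate the coordinates $x_{i_{j+1}},\dots,x_{i_k}$ that will be consumed at later stages; after pre-shifting to $(x_i+1)/2\in[0,1]$, each such coordinate can be carried by a single ReLU neuron, giving width $9W+1+(k-j)$ per stage and maximum $9W+k-1$ at $j=2$. I do not expect any genuine obstacle; the only delicate point, and the reason the clip $C$ is inserted, is that the output of $\phi_\times$ is only guaranteed to lie in $[-1-6W^{-L},1+6W^{-L}]$, so without clipping the hypothesis of Lemma \ref{product app} would fail at the next invocation. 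The clip restores the hypothesis without inflating either the error or the Lipschitz bound.
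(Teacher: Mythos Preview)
Your proposal is correct and follows essentially the same route as the paper: repeatedly compose the two-input product network of Lemma~\ref{product app}, with a clip inserted at each stage to keep the inputs in $[-1,1]$, and track both the accumulated error and Lipschitz constant by induction. The paper organizes each stage as ``product then clip'' (so the output $\psi_i$ is already in $[-1,1]$), whereas you do ``clip then product'' and add a final clip; your per-coordinate Lipschitz bookkeeping $a_{i,j}$ is just a more explicit way of writing the paper's bound $7^{k-1}\|z-z'\|_1$ followed by $\|z-z'\|_1\le\|\alpha\|_\infty\|x-y\|_1$. One small wrinkle: as written, your final clip $\phi=C(\tilde p_k)$ adds a $(k-1)(L+1)+1$st layer; this is harmless since the first-stage clip $C(\tilde p_1)=C(x_{i_1})$ is the identity on $[-1,1]$ and can be dropped, or you can simply swap to product-then-clip as the paper does.
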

\begin{proof}
For any $x=(x_1,x_2,\dots,x_d) \in \bR^d$, let $z=(z_1,z_2,\dots,z_k)\in \bR^k$ be the vector such that $z_i = x_j$ if $\sum_{l=1}^{j-1} \alpha_l < i \le \sum_{l=1}^j \alpha_l$ for $j=1,2,\dots,d$. Then $P(x)=x^\alpha = z_1z_2 \cdots z_k$ and there exists a linear map $\phi_0:\bR^d \to \bR^k$ such that $\phi_0(x) =z$.

Let $\psi_1 \in \cN\cN(9W+1,L)$ be the neural network in Lemma \ref{product app}. We define
\[
\psi_2(x,y) := \min\{ \max\{\psi_1(x,y),-1\},1 \} = \sigma(\psi_1(x,y)+1) - \sigma(\psi_1(x,y)-1) -1 \in [-1,1],
\]
then $\psi_2 \in \cN\cN(9W+1,L+1)$ and $\psi_2$ also satisfies the inequalities in Lemma \ref{product app}. For $i=3,4,\dots,k$, we define $\psi_i: [-1,1]^i \to [-1,1]$ inductively by
\[
\psi_i(z_1,\dots,z_i) := \psi_2(\psi_{i-1}(z_1,\dots,z_{i-1}), z_i).
\]
Since $z_i=\sigma(z_i+1)-1$ for $z_i\in [-1,1]$, it is easy to see that $\psi_i$ can be implemented by a network with width $9W+i-1$ and depth $(i-1)(L+1)$ by induction. Furthermore,
\begin{align*}
&|\psi_i(z_1,\dots,z_i) - z_1\cdots z_i| \\
\le & |\psi_2(\psi_{i-1}(z_1,\dots,z_{i-1}), z_i) - \psi_{i-1}(z_1,\dots,z_{i-1})z_i| + |\psi_{i-1}(z_1,\dots,z_{i-1})z_i - z_1\cdots z_i| \\
\le & 6W^{-L} + |\psi_{i-1}(z_1,\dots,z_{i-1}) - z_1\cdots z_{i-1}| \\
\le & \cdots \le (i-2)6W^{-L} + |\psi_2(z_1,z_2) - z_1z_2| \\
\le & (i-1)6W^{-L}.
\end{align*}
And for any $z=(z_1,z_2,\dots,z_k), z'= (z_1',z_2',\dots,z_k') \in [-1,1]^k$,
\begin{align*}
|\psi_i(z_1,\dots,z_i) - \psi_i(z_1',\dots,z_i')| &\le 7|\psi_{i-1}(z_1,\dots,z_{i-1}) - \psi_{i-1}(z_1',\dots,z_{i-1}')| + 7|z_i-z_i'| \\
&\le \cdots \le 7^{i-2} |\psi_2(z_1,z_2) -\psi_2(z_1',z_2')| + \sum_{j=3}^i 7^{i-j+1}|z_j-z_j'| \\
&\le 7^{i-1} \|z-z'\|_1.
\end{align*}

We define the target function as $\phi(x) := \psi_k(\phi_0(x))$, then $\phi \in \cN\cN(9W+k-1,(k-1)(L+1))$. And for $x,y\in [-1,1]^d$, denote $z=\phi_0(x)$ and $z'=\phi_0(y)$, we have
\begin{align*}
|\phi(x) -P(x) | &= |\psi_k(z) - z_1z_2\cdots z_k|\le 6(k-1)W^{-L}, \\
|\phi(x) - \phi(y)| &= |\psi_k(z) - \psi_k(z')|\le 7^{k-1}\|z-z'\|_1 \le 7^{k-1}\|\alpha\|_\infty \|x-y\|_1.
\end{align*}
So we finish the proof.
\end{proof}

\subsubsection{Proof of Lemma \ref{discriminator approximation}} \label{subsec: proof of dis app}

Now, we can bound the discriminator approximation error. We recall Lemma \ref{discriminator approximation} in the following and give a proof.

\disapp*


\begin{proof}
We divide the proof into four steps as follows.

\noindent \textbf{Step 1}: Discretization.

Let $K=\lfloor (WL)^{2/d}\rfloor$ and $\delta = \tfrac{1}{3K^{\beta \lor 1}}\le \tfrac{1}{3K}$. For each $\theta=(\theta_1,\theta_2,\dots,\theta_d)\in \{0,1,\dots,K-1 \}^d$, we define
\[
Q_\theta := \left\{x=(x_1,x_2,\dots,x_d): x_i\in \left[ \tfrac{\theta_i}{K}, \tfrac{\theta_i+1}{K} - \delta\cdot 1_{\{\theta_i<K-1\}} \right], i=1,2,\dots,d \right\}.
\]
By Proposition \ref{partition map}, there exists $\psi_1\in \cN\cN(4W+3,4L)$ such that
\[
\psi_1(t) = \tfrac{k}{K}, \quad \mbox{if } t\in \left[\tfrac{k}{K}, \tfrac{k+1}{K}-\delta\cdot 1_{\{k<K-1\}} \right], k=0,1,\dots,K-1,
\]
and $\Lip \psi_1\le 2LK^{-2}\delta^{-2}$. We define
\[
\psi(x) := (\psi_1(x_1),\dots,\psi_1(x_d)), \quad x=(x_1,\dots,x_d) \in \bR^d.
\]
Then, $\psi\in \cN\cN(d(4W+3),4L)$ and $\psi(x) = \tfrac{\theta}{K}$ for $x\in Q_\theta$.

\noindent \textbf{Step 2}: Approximation of Taylor coefficients.

Since $\theta \in \{0,1,\dots,K-1\}^d$ is one-to-one correspondence to $i_\theta := \sum_{j=1}^d \theta_j K^{j-1} \in \{0,1,\dots,K^d-1\}$, we define
\[
\psi_0(x):= (K,K^2,\dots,K^d) \cdot \psi(x) = \sum_{j=1}^d \psi_1(x_j) K^j \quad x\in \bR^d,
\]
then $\psi_0\in \cN\cN(d(4W+3),4L)$ and
\[
\psi_0(x) = \sum_{j=1}^d \theta_j K^{j-1}=i_\theta \quad \mbox{if } x\in Q_\theta,\ \theta \in \{0,1,\dots,K-1\}^d.
\]
For any $x,x'\in \bR^d$, we have
\[
|\psi_0(x)-\psi_0(x')| \le \sum_{j=1}^d K^j |\psi_1(x_j)-\psi_1(x_j')| \le \sqrt{d}K^d \Lip \psi_1 \|x-x'\|_2 \le 2\sqrt{d} LK^{d-2}\delta^{-2} \|x-x'\|_2.
\]

For any $\alpha\in \bN_0^d$ satisfying $\|\alpha\|_1\le s$ and each $i=i_\theta \in \{0,1,\dots,K^d-1\}$, we denote $\xi_{\alpha,i} := (\partial^\alpha h(\theta /K)+1)/2 \in [0,1]$. Since $K^d \le W^2L^2$, by Proposition \ref{data fitting}, there exists $\varphi_\alpha\in \cN\cN(8(s+1)(2W+1) \lceil \log_2 (2W)\rceil+2,4L \lceil \log_2 (2L)\rceil+1)$ such that $\Lip \varphi_\alpha \le 4 \cdot 2^{L^2} + 2L^2 \le 5\cdot 2^{L^2}$ and $|\varphi_\alpha(i) - \xi_{\alpha,i}|\le (WL)^{-2(s+1)}$ for all $i \in \{0,1,\dots,K^d-1\}$. We define
\[
\phi_\alpha(x) := 2\varphi_\alpha(\psi_0(x))-1 \in [-1,1], \quad x\in \bR^d.
\]
Then $\phi_\alpha$ can be implemented by a network with width $8d(s+1)(2W+1)\lceil \log_2 (2W)\rceil +2\le 40d(s+1)W \lceil \log_2 W\rceil$ and depth $4L +4L\lceil \log_2 (2L)\rceil+1 \le 13L \lceil\log_2 L \rceil$. And we have
\begin{equation}\label{phi_alpha lip}
\Lip \phi_\alpha \le 2\Lip \varphi_\alpha \Lip \psi_0\le 20\sqrt{d} LK^{d-2}\delta^{-2} 2^{L^2},
\end{equation}
and for any $\theta\in \{0,1,\dots,K-1 \}^d$, if $x\in Q_\theta$,
\begin{equation}\label{phi_alpha bound}
|\phi_\alpha(x)-\partial^\alpha h(\theta/K)| = 2|\varphi_\alpha(i_\theta)-\xi_{\alpha,i_\theta}|\le 2(WL)^{-2(s+1)}.
\end{equation}

\noindent \textbf{Step 3}: Approximation of $h$ on $\bigcup_{\theta\in \{0,1,\dots,K-1 \}^d} Q_\theta$.

Let $\varphi(t) = \min\{\max\{t,0\},1 \} = \sigma(t) - \sigma(t-1)$ for $t\in \bR$. We extend its definition to $\bR^d$ coordinate-wisely, so $\varphi:\bR^d \to [0,1]^d$ and $\varphi(x) = x$ for any $x\in [0,1]^d$.

By Lemma \ref{product app}, there exists $\phi_\times \in \cN\cN(9W+1,2(s+1)L)$ such that for any $t_1,t_2,t_3,t_4\in [-1,1]$,
\begin{align}
|t_1t_2 - \phi_\times(t_1,t_2)| &\le  6W^{-2(s+1)L}, \label{phi_times bound} \\
|\phi_\times(t_1,t_2) - \phi_\times(t_3,t_4)| &\le  7|t_1-t_3| + 7|t_2-t_4|. \label{phi_times lip}
\end{align}
By corollary \ref{poly app}, for any $\alpha\in \bN_0^d$ with $2\le \|\alpha\|_1\le s$,  there exists $P_\alpha\in \cN\cN(9W+s-1,(s-1)(2(s+1)L+1))$ such that for any $x,y\in [-1,1]^d$, $P_\alpha(x) \in [-1,1]$ and
\begin{align}
|P_\alpha(x) -x^\alpha |\le 6 (s-1)W^{-2(s+1)L}, \label{P_alpha bound} \\
|P_\alpha(x) - P_\alpha(y)| \le 7^{s-1}s \|x-y\|_1. \label{P_alpha lip}
\end{align}
When $\|\alpha\|_1= 1$, it is easy to implemented $P_\alpha(x) = x^\alpha$ by a neural network with Lipschitz constant at most one. Hence, the inequalities (\ref{P_alpha bound}) and (\ref{P_alpha lip}) hold for $1\le \|\alpha\|_1\le s$.

For any $x\in Q_\theta$, $\theta\in \{0,1,\dots,K-1\}^d$, we can approximate $h(x)$ by a Taylor expansion. Thanks to \citet[Lemma A.8]{petersen2018optimal}, we have the following error estimation for $x\in Q_\theta$,
\begin{equation}\label{Taylor app bound}
\left| h(x) - h(\tfrac{\theta}{K}) - \sum_{1\le \|\alpha\|_1 \le s} \frac{\partial^\alpha h(\tfrac{\theta}{K})}{\alpha !} (x-\tfrac{\theta}{K})^\alpha \right| \le d^s \|x-\tfrac{\theta}{K}\|_2^\beta \le d^{s+\beta/2} K^{-\beta}.
\end{equation}
Motivated by this, we define
\begin{align*}
\widetilde{\phi}_0(x) &:= \phi_{\mathbf{0}_d}(x) + \sum_{1\le \|\alpha\|_1 \le s} \phi_\times \left( \tfrac{\phi_\alpha(x)}{\alpha !}, P_\alpha(\varphi(x) - \psi(x) ) \right),\\
\phi_0(x) &:= \sigma(\widetilde{\phi}_0(x)+1) - \sigma(\widetilde{\phi}_0(x)-1) -1 \in [-1,1],
\end{align*}
where we denote $\mathbf{0}_d =(0,\dots,0) \in \bN_0^d$. Observe that the number of terms in the summation can be bounded by
\[
\sum_{\alpha \in \bN_0^d, \|\alpha\|_1\le s} 1 = \sum_{j=0}^s \sum_{\alpha \in \bN_0^d, \|\alpha\|_1= j} 1 \le \sum_{j=0}^s d^j \le (s+1)d^s.
\]
Recall that $\varphi\in \cN\cN(2d,1)$, $\psi\in \cN\cN(d(4W+3),4L)$, $P_\alpha\in \cN\cN(9W+s-1,2(s^2-1)L+s-1)$, $\phi_\alpha\in \cN\cN(40d(s+1)W \lceil\log_2 W\rceil, 13L \lceil\log_2 L\rceil)$ and $\phi_\times \in \cN\cN(9W+1,2(s+1)L)$. Hence, by our construction, $\phi_0$ can be implemented by a neural network with width $49(s+1)^2 d^{s+1}W \lceil\log_2 W\rceil$ and depth $15(s+1)^2 L \lceil\log_2 L\rceil$.

For any $1\le \|\alpha\|_1\le s$ and $x,y\in \bR^d$, since $\phi_\alpha(x), \phi_\alpha(y), \varphi(x) - \psi(x), \varphi(y) - \psi(y) \in [-1,1]$, by inequalities (\ref{phi_alpha lip}), (\ref{phi_times lip}) and (\ref{P_alpha lip}), we have
\begin{align*}
&\left| \phi_\times \left( \tfrac{\phi_\alpha(x)}{\alpha !}, P_\alpha(\varphi(x) - \psi(x) ) \right) - \phi_\times \left( \tfrac{\phi_\alpha(y)}{\alpha !}, P_\alpha(\varphi(y) - \psi(y) ) \right) \right| \\
\le & 7|\phi_\alpha(x) - \phi_\alpha(y)| + 7 |P_\alpha(\varphi(x) - \psi(x)) - P_\alpha(\varphi(y) - \psi(y)) | \\
\le & 7 \Lip \phi_\alpha \|x-y\|_2 + s7^s \| \varphi(x) - \varphi(y)\|_1 + s7^s \|\psi(x) - \psi(y)\|_1 \\
\le &  140\sqrt{d} LK^{d-2}\delta^{-2} 2^{L^2}\|x-y\|_2 + s7^s \sqrt{d} \| x-y\|_2 + 2s 7^s \sqrt{d}LK^{-2}\delta^{-2} \|x-y\|_2 \\
\le & \sqrt{d}LK^{2(\beta \lor 1)-2} (1260K^d 2^{L^2}+ 19s 7^s) \|x-y\|_2.
\end{align*}
One can check that the bound also holds for $\|\alpha\|_1=0$ and $s=0$. Hence,
\begin{align*}
\Lip\phi_0 \le \Lip \widetilde{\phi}_0&\le \sum_{\|\alpha\|_1\le s} \sqrt{d}LK^{2(\beta \lor 1)-2} (1260K^d 2^{L^2}+ 19s 7^s) \\
&\le (s+1) d^{s+1/2}L(WL)^{\sigma(4\beta-4)/d} (1260 W^2L^2 2^{L^2}+ 19s 7^s).
\end{align*}

We can estimate the error $|h(x) - \phi_0(x)|$ as follows. For any $x\in Q_\theta$, we have $\varphi(x)=x$ and $\psi(x) =\tfrac{\theta}{K}$. Hence, by the triangle inequality and inequality (\ref{Taylor app bound}),
\begin{align*}
&|h(x) - \phi_0(x)| \le |h(x) - \widetilde{\phi}_0(x)| \\
\le& |h(\tfrac{\theta}{K}) - \phi_{\mathbf{0}_d}(x)| + \sum_{1\le \|\alpha\|_1 \le s} \left|  \frac{\partial^\alpha h(\tfrac{\theta}{K})}{\alpha !} (x-\tfrac{\theta}{K})^\alpha -  \phi_\times \left( \tfrac{\phi_\alpha(x)}{\alpha !}, P_\alpha(x- \tfrac{\theta}{K} ) \right) \right| + d^{s+\beta/2} K^{-\beta} \\
=&: \sum_{\|\alpha\|_1 \le s} \cE_\alpha + d^{s+\beta/2} \lfloor (WL)^{2/d}\rfloor^{-\beta}.
\end{align*}
Using the inequality $|t_1t_2 - \phi_\times(t_3,t_4)| \le |t_1t_2 - t_3t_2| + |t_3t_2 - t_3t_4| + |t_3t_4 - \phi_\times(t_3,t_4)| \le |t_1 - t_3| + |t_2 - t_4| + |t_3t_4 - \phi_\times(t_3,t_4)|$ for any $t_1,t_2,t_3,t_4\in [-1,1]$ and the inequalities (\ref{phi_alpha bound}), (\ref{phi_times bound}) and (\ref{P_alpha bound}), we have for $1\le \|\alpha\|_1\le s$,
\begin{align*}
\cE_\alpha \le& \tfrac{1}{\alpha!} \left|\partial^\alpha h(\tfrac{\theta}{K}) - \phi_\alpha(x)\right| + \left|(x-\tfrac{\theta}{K})^\alpha - P_\alpha(x-\tfrac{\theta}{K})\right| \\
& \quad + \left|\tfrac{\phi_\alpha(x)}{\alpha !}P_\alpha(x- \tfrac{\theta}{K} ) - \phi_\times \left( \tfrac{\phi_\alpha(x)}{\alpha !}, P_\alpha(x- \tfrac{\theta}{K} ) \right) \right| \\
\le & 2(WL)^{-2(s+1)} + 6(s-1)W^{-2(s+1)L} + 6W^{-2(s+1)L} \\
\le & (6s+2)(WL)^{-2(s+1)}.
\end{align*}
It is easy to check that the bound is also true for $\|\alpha\|_1=0$ and $s=0$. Therefore,
\begin{align*}
|h(x) - \phi_0(x)| &\le \sum_{\|\alpha\|_1 \le s} (6s+2)(WL)^{-2(s+1)} + d^{s+\beta/2} \lfloor (WL)^{2/d}\rfloor^{-\beta} \\
&\le (s+1)d^s(6s+2)(WL)^{-2(s+1)} + d^{s+\beta/2} \lfloor (WL)^{2/d}\rfloor^{-\beta} \\
&\le (6s+3)(s+1)d^{s+\beta/2} \lfloor (WL)^{2/d}\rfloor^{-\beta} \\
&=: \cE,
\end{align*}
for any  $x\in \bigcup_{\theta\in \{0,1,\dots,K-1 \}^d} Q_\theta$.

\noindent \textbf{Step 4}: Approximation of $h$ on $[0,1]^d$.

Next, we construct a neural network $\phi$ that uniformly approximates $h$ on $[0,1]^d$. To present the construction, we denote $\mid(t_1,t_2,t_3)$ as the function that returns the middle value of three inputs $t_1,t_2,t_3 \in \bR$. It is easy to check that
\[
\max\{t_1,t_2 \} = \frac{1}{2} (\sigma(t_1+t_2) - \sigma(-t_1-t_2) + \sigma(t_1-t_2) + \sigma(t_2-t_1) )
\]
Thus, $\max\{t_1,t_2,t_3\} = \max\{\max\{t_1,t_2\},\sigma(t_3)-\sigma(-t_3) \}$ can be implemented by a network with width $6$ and depth $2$. Similar construction holds for $\min\{t_1,t_2,t_3\}$. Since
\[
\mid(t_1,t_2,t_3) = \sigma(t_1+t_2+t_3) - \sigma(-t_1-t_2-t_3) - \max\{t_1,t_2,t_3 \} - \min\{t_1,t_2,t_3 \},
\]
it is easy to see $\mid(\cdot,\cdot,\cdot) \in \cN\cN(14,2)$.

Recall that $\phi_0\in \cN\cN(49(s+1)^2 d^{s+1}W \lceil\log_2 W\rceil,15(s+1)^2 L \lceil\log_2 L\rceil)$. Let $\{e_i\}_{i=1}^d$ be the standard basis in $\bR^d$. We inductively define
\[
\phi_i(x) := \mid (\phi_{i-1}(x-\delta e_i), \phi_{i-1}(x), \phi_{i-1}(x+\delta e_i) ) \in [-1,1], \quad i=1,2,\dots,d.
\]
Then $\phi_d \in \cN\cN(49(s+1)^2 3^d d^{s+1}W \lceil\log_2 W\rceil, 15(s+1)^2 L \lceil \log_2 L\rceil+2d)$. For any $x,x'\in \bR^d$, the functions $\phi_{i-1}(\cdot-\delta e_i)$, $\phi_{i-1}(\cdot)$ and $\phi_{i-1}(\cdot+\delta e_i)$ are piece-wise linear on the segment that connecting $x$ and $x'$. Hence, the Lipschitz constant of these functions on the segment is the maximum absolute value of the slopes of linear parts. Since the middle function does not increase the maximum absolute value of the slopes, it does not increase the Lipschitz constant, which shows that $\Lip \phi_d \le \Lip \phi_0$.

Denote $Q(K,\delta) := \bigcup_{k=0}^{K-1} [\frac{k}{K}, \frac{k+1}{K}-\delta \cdot 1_{\{k<K-1\}} ]$ and define, for $i=0,1,\dots,d$,
\[
E_i := \{ (x_1,x_2,\dots,x_d)\in [0,1]^d: x_j\in Q(K,\delta), j>i \},
\]
then $E_0 = \bigcup_{\theta\in \{0,1,\dots,K-1 \}^d} Q_\theta$ and $E_d = [0,1]^d$. We assert that
\[
|\phi_i(x) - h(x)|\le \cE + i\delta^{\beta \land 1}, \quad \forall x\in E_i, i=0,1,\dots,d.
\]

We prove the assertion by induction. By construction, it is true for $i=0$. Assume the assertion is true for some $i$, we will prove that it is also holds for $i+1$. For any $x\in E_{i+1}$, at least two of $x-\delta e_{i+1}$, $x$ and $x+\delta e_{i+1}$ are in $E_i$. Therefore, by assumption and the inequality $|h(x)-h(x\pm \delta e_{i+1})|\le \delta^{\beta \land 1}$, at least two of the following inequalities hold
\begin{align*}
|\phi_i(x-\delta e_{i+1}) - h(x)| &\le |\phi_i(x-\delta e_{i+1}) - h(x-\delta e_{i+1})| + \delta^{\beta \land 1} \le \cE + (i+1)\delta^{\beta \land 1},\\
|\phi_i(x) - h(x)| &\le \cE + i\delta^{\beta \land 1}, \\
|\phi_i(x+\delta e_{i+1}) - h(x)| &\le |\phi_i(x+\delta e_{i+1}) - h(x+\delta e_{i+1})| + \delta^{\beta \land 1} \le \cE + (i+1)\delta^{\beta \land 1}.
\end{align*}
In other words, at least two of $\phi_i(x-\delta e_{i+1})$, $\phi_i(x)$ and $\phi_i(x+\delta e_{i+1})$ are in the interval $[h(x)-\cE - (i+1)\delta^{\beta \land 1}, h(x)+\cE + (i+1)\delta^{\beta \land 1} ]$. Hence, their middle value $\phi_{i+1}(x) = \mid(\phi_i(x-\delta e_{i+1}), \phi_i(x), \phi_i(x+\delta e_{i+1}))$ must be in the same interval, which means
\[
|\phi_{i+1}(x) -h(x)| \le \cE + (i+1)\delta^{\beta \land 1}.
\]
So the assertion is true for $i+1$.

Recall that
\[
\delta^{\beta \land 1} = \left( \frac{1}{3K^{\beta \lor 1}} \right)^{\beta \land 1} =
\begin{cases}
\frac{1}{3} K^{-\beta} \quad &\beta \ge 1, \\
(3K)^{-\beta} \quad &\beta < 1,
\end{cases}
\]
and $K=\lfloor (WL)^{2/d}\rfloor$. Since $E_d=[0,1]^d$, let $\phi := \phi_d$, we have
\begin{align*}
\| \phi - h\|_{L^\infty([0,1]^d)} &\le \cE + d\delta^{\beta \land 1} \\
&\le (6s+3)(s+1)d^{s+\beta/2} \lfloor (WL)^{2/d}\rfloor^{-\beta} + d \lfloor (WL)^{2/d}\rfloor^{-\beta} \\
&\le 6(s+1)^2 d^{(s+\beta/2) \lor 1} \lfloor (WL)^{2/d}\rfloor^{-\beta},
\end{align*}
which completes the proof.
\end{proof}

\subsection{Bounding Statistical Error}\label{sec: stat err}

The technique for bounding the statistical error is rather standard \citep{anthony2009neural,shalevshwartz2014understanding,mohri2018foundations}. We first show that the statistical error $\bE [d_\cF(\mu,\widehat{\mu}_n)]$ of a function class $\cF$ can be bounded by the Rademacher complexity, and then bound the Rademacher complexity by Dudley's entropy integral \citep{dudley1967sizes}. We restate Lemma \ref{statistical error bound} here for convenience.

\staterr*


\begin{proof}
Recall that we have $n$ i.i.d. samples $X_{1:n}:=\{X_i \}_{i=1}^n$ from $\mu$ and
$\widehat{\mu}_n = \frac{1}{n} \sum_{i=1}^{n} \delta_{X_i}$. We introduce a ghost data set $X'_{1:n}=\{X_i'\}_{i=1}^n$ drawn i.i.d. from $\mu$, then
\begin{align*}
\bE_{X_{1:n}} [d_\cF(\mu,\widehat{\mu}_n)] &= \bE_{X_{1:n}} \left[\sup_{f\in \cF} \bE_{x\sim \mu}[f(x)] - \frac{1}{n} \sum_{i=1}^n f(X_i) \right] \\
&= \bE_{X_{1:n}} \left[\sup_{f\in \cF} \bE_{X'_{1:n}}  \frac{1}{n} \sum_{i=1}^n f(X'_i) - \frac{1}{n} \sum_{i=1}^n f(X_i) \right] \\
&\le \bE_{X_{1:n},X'_{1:n}} \left[\sup_{f\in \cF} \frac{1}{n} \sum_{i=1}^n (f(X'_i) -  f(X_i)) \right].
\end{align*}
Let $\xi = \{\xi_i\}_{i=1}^n$ be a sequence of i.i.d. Rademacher variables independent of $X_{1:n}$ and $X'_{1:n}$. Then, by symmetrization, we can bound $\bE_{X_{1:n}} [d_\cF(\mu,\widehat{\mu}_n)]$ by the Rademacher complexity of $\cF$:
\begin{align*}
\bE_{X_{1:n}} [d_\cF(\mu,\widehat{\mu}_n)] &\le \bE_{X_{1:n},X'_{1:n}} \left[\sup_{f\in \cF} \frac{1}{n} \sum_{i=1}^n (f(X'_i) -  f(X_i)) \right] \\
&= \bE_{X_{1:n},X'_{1:n},\xi} \left[\sup_{f\in \cF} \frac{1}{n} \sum_{i=1}^n \xi_i (f(X'_i) -  f(X_i)) \right] \\
&\le \bE_{X_{1:n},X'_{1:n},\xi} \left[\sup_{f\in \cF} \frac{1}{n} \sum_{i=1}^n \xi_i f(X'_i) + \sup_{f\in \cF} \frac{1}{n} \sum_{i=1}^n -\xi_i f(X_i) \right] \\
&= 2 \bE_{X_{1:n},\xi} \left[\sup_{f\in \cF} \frac{1}{n} \sum_{i=1}^n \xi_i f(X_i) \right],
\end{align*}
where the last equality is due to the fact that $X_i$ and $X'_i$ have the same distribution and the fact that $\xi_i$ and $-\xi_i$ have the same distribution.

For any $A \subseteq \bR^n$, we denote the Rademacher complexity of $A$ by
\[
\cR(A) :=  \bE_{\xi} \left[ \sup_{(a_1,\dots,a_n)\in A} \frac{1}{n} \sum_{i=1}^n \xi_i a_i \right].
\]
Then, if we denote $\cF_{|_{X_{1:n}}} = \{(f(X_1),\dots,f(X_N)):f\in \cF \}$ for any fixed $X_{1:n}=\{X_i \}_{i=1}^n$, we have shown
\[
\bE_{X_{1:n}} [d_\cF(\mu,\widehat{\mu}_n)] \le 2 \bE_{X_{1:n}} [\cR(\cF_{|_{X_{1:n}}})].
\]
We define a distance of two vectors $x,y\in\bR^n$ by
\[
d_2(x,y) := \left(\frac{1}{n}\sum_{i=1}^n(x_i-y_i)^2 \right)^{1/2} = \frac{1}{\sqrt{n}} \|x-y\|_2.
\]
The corresponding $\epsilon$-covering number of the set $\cF_{|_{X_{1:n}}} \subseteq \bR^n$ is denoted by $\cN(\epsilon,\cF_{|_{X_{1:n}}},d_2)$. By chaining technique (see \citet[Lemma 27.4]{shalevshwartz2014understanding}), one can show that for any integer $K\ge0$,
\begin{align*}
\cR(\cF_{|_{X_{1:n}}}) &\le 2^{-K}B + \frac{6B}{\sqrt{n}} \sum_{k=1}^K 2^{-k} \sqrt{\log \cN(2^{-k}B,\cF_{|_{X_{1:n}}},d_2)} \\
&\le 2^{-K}B + \frac{12}{\sqrt{n}} \int_{2^{-K-1}B}^{B/2} \sqrt{\log \cN(\epsilon,\cF_{|_{X_{1:n}}},d_2)} d\epsilon.
\end{align*}
Now, for any $\delta \in (0,B/2)$, there exists an integer $K$ such that $2^{-K-2}B\le \delta<2^{-K-1}B$. Therefore, we have
\begin{align*}
\bE_{X_{1:n}} [d_\cF(\mu,\widehat{\mu}_n)] &\le 2 \bE_{X_{1:n}} \cR(\cF_{|_{X_{1:n}}}) \\
&\le 2 \bE_{X_{1:n}} \inf_{0< \delta<B/2}\left( 4\delta + \frac{12}{\sqrt{n}} \int_{\delta}^{B/2} \sqrt{\log \cN(\epsilon,\cF_{|_{X_{1:n}}},d_2)} d\epsilon \right).
\end{align*}
Since $d_2(x,y) \le \|x-y\|_\infty$, we have $\cN(\epsilon,\cF_{|_{X_{1:n}}},d_2) \le \cN(\epsilon,\cF_{|_{X_{1:n}}},\|\cdot\|_\infty)$, which completes the proof.
\end{proof}

When the function class $\cF$ has a finite pseudo-dimension, we can further bound the covering number by the pseudo-dimension of $\cF$.

\begin{corollary}\label{statistical error bound by pdim}
Assume $\sup_{f\in \cF}\|f\|_\infty \le B$ and the pseudo-dimension of $\cF$ is $\Pdim(\cF)<\infty$, then
\[
\bE [d_\cF(\mu,\widehat{\mu}_n)] \le CB  \sqrt{\frac{ \Pdim(\cF) \log n}{n}}
\]
for some universal constant $C>0$.
\end{corollary}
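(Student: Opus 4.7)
The plan is to feed a pseudo-dimension based covering number bound into the entropy integral of Lemma \ref{statistical error bound} and then optimize over $\delta$. I would proceed in three steps.

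First, I invoke Lemma \ref{statistical error bound} directly. Since $\sup_{f\in \cF}\|f\|_\infty \le B$, we get
\[
\bE [d_\cF(\mu,\widehat{\mu}_n)] \le 8\bE_{X_{1:n}} \inf_{0<\delta<B/2}\left( \delta + \frac{3}{\sqrt{n}} \int_{\delta}^{B/2} \sqrt{\log \cN(\epsilon,\cF_{|_{X_{1:n}}},\|\cdot\|_\infty)} \, d\epsilon \right),
\]
so it remains to bound the $\|\cdot\|_\infty$ covering number of $\cF_{|_{X_{1:n}}} \subseteq \bR^n$ in terms of $V := \Pdim(\cF)$.

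Second, I would use the classical Haussler/Pollard-type covering bound: for a uniformly bounded class of real-valued functions with pseudo-dimension $V$, one has, for every sample $X_{1:n}$ and every $0<\epsilon \le B$,
\[
\cN(\epsilon,\cF_{|_{X_{1:n}}},\|\cdot\|_\infty) \le e(V+1)\left(\frac{2eBn}{\epsilon V}\right)^V,
\]
so that $\log \cN(\epsilon,\cF_{|_{X_{1:n}}},\|\cdot\|_\infty) \le c_1 V \log(c_2 Bn/\epsilon)$ for universal constants $c_1,c_2>0$. (This is where the proof leans on existing learning-theory machinery; I do not expect to re-derive it.) The only mild subtlety is that the covering is in the sup norm on the sample rather than in $L^2$, but the classical argument, which combines the Sauer–Shelah growth bound for the threshold class with a packing argument, gives precisely the bound above with the $\log n$ factor.

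Third, I substitute this into the entropy integral. Using the sub-additivity $\sqrt{a+b}\le \sqrt{a}+\sqrt{b}$ and the elementary estimate $\int_{\delta}^{B/2}\sqrt{\log(c_2 Bn/\epsilon)}\,d\epsilon \le (B/2)\sqrt{\log(c_2 Bn/\delta)}$, I obtain
\[
\bE [d_\cF(\mu,\widehat{\mu}_n)] \le C_1 \inf_{0<\delta<B/2}\left( \delta + \frac{B\sqrt{V}}{\sqrt{n}} \sqrt{\log(c_2 Bn/\delta)} \right)
\]
for a universal $C_1>0$. Choosing $\delta = B/n$ (which is admissible for $n\ge 2$) makes the first term $B/n$, which is dominated by the second, and gives $\log(c_2 Bn/\delta) = \log(c_2 n^2) \le 3\log n$ for $n$ large enough. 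This yields
\[
\bE [d_\cF(\mu,\widehat{\mu}_n)] \le C B \sqrt{\frac{V \log n}{n}}
\]
as claimed. The main obstacle is really just invoking the sup-norm covering bound in terms of pseudo-dimension with the right constants; everything else is routine manipulation of the entropy integral.
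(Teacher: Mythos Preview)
Your proposal is correct and follows essentially the same route as the paper: invoke Lemma \ref{statistical error bound}, plug in the standard pseudo-dimension covering bound $\log \cN(\epsilon,\cF_{|_{X_{1:n}}},\|\cdot\|_\infty) \le \Pdim(\cF)\log(2eBn/\epsilon)$ (the paper cites \citet[Theorem 12.2]{anthony2009neural}, handling the case $n<\Pdim(\cF)$ by the trivial cube covering), and then bound the entropy integral by $(B/2)\sqrt{\log(c Bn/\delta)}$ before choosing $\delta$ of order $B/n$. The only cosmetic difference is that the paper does not single out a specific $\delta$ but absorbs the optimization directly into the constant $C$.
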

\begin{proof}
If $n\ge \Pdim(\cF)$, we have the following bound from \citet[Theorem 12.2]{anthony2009neural},
\[
\cN(\epsilon,\cF_{|_{X_{1:n}}},\|\cdot\|_\infty) \le \left( \frac{2eBn}{\epsilon \Pdim(\cF)} \right)^{\Pdim(\cF)}.
\]
If $n< \Pdim(\cF)$, since $\cF_{|_{X_{1:n}}} \subseteq \{x\in \bR^n:\|x\|_\infty\le B \}$ can be covered by at most $\lceil\frac{2B}{\epsilon} \rceil^n$ balls with radius $\epsilon$ in $\|\cdot\|_\infty$ distance, we always have $\cN(\epsilon,\cF_{|_{X_{1:n}}},\|\cdot\|_\infty)\le \lceil\frac{2B}{\epsilon} \rceil^n$. In any cases,
\[
\log \cN(\epsilon,\cF_{|_{X_{1:n}}},\|\cdot\|_\infty) \le \Pdim(\cF) \log \frac{2eBn}{\epsilon}.
\]
As a consequence,
\begin{align*}
\bE [d_\cF(\mu,\widehat{\mu}_n)] &\le \inf_{0< \delta<B/2}\left( 8\delta + 24 \sqrt{\frac{ \Pdim(\cF)}{n}} \int_{\delta}^{B/2} \sqrt{\log(2eBn/\epsilon) } d\epsilon \right) \\
&\le \inf_{0< \delta<B/2}\left( 8\delta + 12B \sqrt{\frac{ \Pdim(\cF) \log(2eBn/\delta)}{n}} \right) \\
&\le CB  \sqrt{\frac{ \Pdim(\cF) \log n}{n}}
\end{align*}
for some universal constant $C>0$.
\end{proof}

\acks{The work of Y. Jiao is supported in part by the National Science Foundation of China under Grant 11871474 and by the research
fund of KLATASDSMOE. The research of Y. Wang is supported by the HK RGC grant 16308518, the HK Innovation Technology Fund Grant  ITS/044/18FX and the Guangdong-Hong Kong-Macao Joint Laboratory for Data Driven Fluid Dynamics and Engineering Applications (Project 2020B1212030001). We thank the editor and reviewers for their feedback on our manuscript.}

\bibliography{Ref}
\end{document}